\algrenewcommand\textproc{}%
\newcommand{\AlgNameLong}{Convex Constraint Learning for Reinforcement Learning\xspace}
\newcommand{\AlgNameShort}{\texttt{CoCoRL}\xspace}
\newcommand{\githublink}{\url{https://github.com/lasgroup/cocorl}}
\renewcommand{\paragraph}[1]{\textbf{{#1}}}
\newcommand{\CoCoRLcolor}{xkcdBrightRed}
\newcommand{\MaxMarginIRLVanillacolor}{xkcdBrightPurple}
\newcommand{\MaxMarginIRLSharedRewardcolor}{xkcdBurntRed}
\newcommand{\MaxMarginIRLKnownRewardcolor}{xkcdDarkLimeGreen}
\newcommand{\legendCocorlAbstained}{\raisebox{-0.3ex}{\includegraphics[width=2ex]{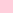}}}%
\definecolor{mydarkblue}{rgb}{0,0.08,0.45}
\newcommand{\reals}{\mathbb{R}}
\newcommand{\Gaussian}{\mathcal{N}}
\newcommand{\argmax}{\mathop\mathrm{argmax}}
\newcommand{\argmin}{\mathop\mathrm{argmin}}
\newcommand{\Expectation}{\mathbb{E}}
\newcommand{\defeq}{\coloneqq}
\newcommand{\truesafeset}{\mathcal{F}}
\newcommand{\safeset}{\mathcal{S}}
\newcommand{\unsafeset}{\mathcal{U}}
\newcommand{\conv}{\mathtt{conv}}
\newcommand{\indicator}[1]{\mathds{1}_{\{#1\}}}
\newcommand{\dif}{\mathrm{d}}
\newcommand{\state}{s}
\newcommand{\action}{a}
\newcommand{\StateSpace}{S}
\newcommand{\ActionSpace}{A}
\newcommand{\StateSpaceSize}{|S|}
\newcommand{\TransitionModel}{P}
\newcommand{\DiscountFactor}{\gamma}
\newcommand{\discount}{\DiscountFactor}
\newcommand{\reward}{r}
\newcommand{\MDP}{\mathcal{M}}
\newcommand{\initdist}{\mu_0}
\newcommand{\PolicyReturn}{G}
\newcommand{\CMDP}{\mathcal{C}}
\newcommand{\cost}{c}
\newcommand{\CumulativeCost}{J}
\newcommand{\threshold}{\xi}
\newcommand{\policy}{\pi}
\newcommand{\demonstrations}{\mathcal{D}}
\newcommand{\evalreward}{\reward_{\text{eval}}}
\newcommand{\FeatureFunction}{\mathbf{f}}
\newcommand{\Occupancy}{\mu}
\newcommand{\regret}{\mathcal{R}}
\newcommand{\diag}{\mathrm{diag}}
\newcommand{\vv}{\mathbf{v}}
\newcommand{\vV}{\mathbf{V}}
\newcommand{\vQ}{\mathbf{Q}}
\newcommand{\vP}{\mathbf{P}}
\newcommand{\vr}{\mathbf{r}}
\newcommand{\vc}{\mathbf{c}}
\newcommand{\vb}{\mathbf{b}}
\newcommand{\vw}{\mathbf{w}}
\newcommand{\vu}{\mathbf{u}}
\newcommand{\trajectory}{\tau}
\newcommand{\case}[1]{\textbf{Case #1}}
\newcommand{\rank}{\mathrm{rank}}
\newcommand{\NumDemos}{k}
\newcommand{\NumConst}{n}
\newcommand{\NumInferredConst}{m}
\newcommand{\NumTraj}{n_{\text{traj}}}
\newcommand{\bigO}{\mathcal{O}}
\theoremstyle{plain}
\newtheorem{theorem}{Theorem}
\newtheorem{lemma}{Lemma}
\theoremstyle{definition}
\newtheorem{assumption}{Assumption}
\crefname{assumption}{assumption}{assumptions}
\Crefname{assumption}{Assumption}{Assumptions}
\declaretheoremstyle[%
  spaceabove=-6pt,%
  spacebelow=0pt,%
  headfont=\normalfont\itshape,%
  postheadspace=1em,%
  qed=\qedsymbol%
]{prfstyle}
\begin{document}

\runningtitle{Learning Safety Constraints from Demonstrations with Unknown Constraints}

\twocolumn[

\aistatstitle{Learning Safety Constraints from \\ Demonstrations with Unknown Rewards}

\aistatsauthor{ David Lindner \And Xin Chen \And  Sebastian Tschiatschek \AND Katja Hofmann \And Andreas Krause }

\vspace{-3em}
\aistatsaddress{ ETH Zurich \And ETH Zurich \And University of Vienna \AND Microsoft Research, Cambridge \And ETH Zurich } ]

\begin{abstract}
We propose {\em \AlgNameLong} (\AlgNameShort), a novel approach for inferring shared constraints in a Constrained Markov Decision Process (CMDP) from a set of safe demonstrations with possibly different reward functions. 
While previous work is limited to demonstrations with known rewards or fully known environment dynamics, \AlgNameShort can learn constraints from demonstrations with \emph{different unknown rewards} without knowledge of the environment dynamics.
\AlgNameShort constructs a convex safe set based on demonstrations, which provably guarantees safety even for potentially sub-optimal (but safe) demonstrations. For near-optimal demonstrations, \AlgNameShort converges to the true safe set with no policy regret.
We evaluate \AlgNameShort in gridworld environments and a driving simulation with multiple constraints. \AlgNameShort learns constraints that lead to safe driving behavior. Importantly, we can safely transfer the learned constraints to different tasks and environments. In contrast, alternative methods based on Inverse Reinforcement Learning (IRL) often exhibit poor performance and learn unsafe policies.
\end{abstract}

\section{Introduction}\label{sec:introduction}

\begin{figure*}
    \centering
    \includegraphics[width=0.9\linewidth]{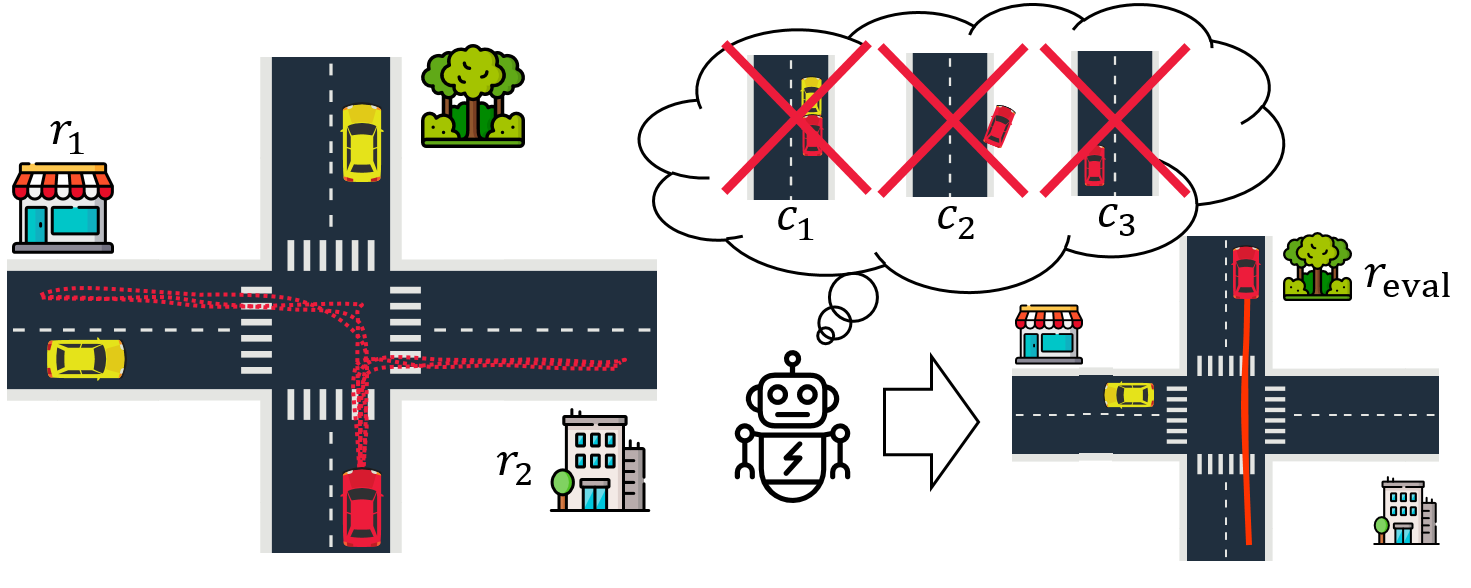}
    \\
    \hspace{0.9cm} \textbf{Training} \hspace{7.5cm} \textbf{Evaluation}
    \caption{
    \AlgNameShort can, e.g., learn safe driving behavior from diverse driving trajectories with different unknown reward functions (here, $r_1$: ``turn left'', $r_2$: ``turn right'') . It infers constraints $\cost_1, \cost_2, \cost_3$ describing desirable driving behavior from demonstrations without knowledge of the specific reward functions $\reward_1, \reward_2$. These inferred constraints allow to optimize for a new reward function $\reward_{\text{eval}}$ (``go straight''), ensuring safe driving behavior even in \emph{new situations} where matching demonstrations are unavailable.}
    \label{fig:headline_figure}
\end{figure*}

Constrained Markov Decision Processes (CMDPs) integrate safety constraints with reward maximization in reinforcement learning (RL). However, similar to reward functions, it can be difficult to specify constraint functions manually~\citep{krakovna2020specification}.
To tackle this issue, recent work proposes learning models of the constraints~\citep[e.g.,][]{scobee2019maximum,anwar2020inverse} analogously to reward models~\citep{leike2018scalable}. However, existing approaches to constraint inference rely on demonstrations with \emph{known reward functions}, which is often unrealistic and limiting in real-world applications.

For instance, consider the domain of autonomous driving (\Cref{fig:headline_figure}), where specifying rewards/constraints is particularly difficult~\citep{knox2023reward}. However, we can gather diverse human driving trajectories that satisfy shared constraints, such as keeping an appropriate distance from other cars and avoiding crashes. These trajectories will usually have different (unknown) routes or driving style preferences, which we can model as different reward functions. In such scenarios, we aim to infer constraints from demonstrations with {\em shared constraints} but \emph{unknown rewards}.

\paragraph{Contributions.} \looseness -1 In this paper: (1) we
introduce the problem of inferring constraints in CMDPs from demonstrations with unknown rewards (\Cref{sec:problem_setup}); (2) we introduce {\em \AlgNameLong} (\AlgNameShort), a novel method for addressing this problem (\Cref{sec:method}); (3) we prove that \AlgNameShort guarantees safety (\Cref{sec:safe_set}), and, for (approximately) optimal demonstrations, asymptotic optimality (\Cref{sec:convergence}), while IRL provably cannot guarantee safety (\Cref{sec:irl_limitations}); and, (4) we conduct comprehensive empirical evaluations of \AlgNameShort in tabular environments and a continuous driving task with multiple constraints (\Cref{sec:experiments}). \AlgNameShort learns constraints that lead to safe driving behavior and that can be transferred across different tasks and environments. In contrast, methods based on Inverse Reinforcement Learning (IRL) often perform poorly and lack safety guarantees, and methods based on Imitation Learning (IL) lack robustness and transferability. 

Overall, our theoretical and empirical results illustrate the potential of \AlgNameShort for constraint inference and the advantage of employing CMDPs with inferred constraints instead of relying on MDPs with IRL in safety-critical applications.
We provide an open-source implementation of \AlgNameShort and the code necessary to reproduce all experiments at \githublink.

\section{Background}\label{sec:background}

\paragraph{Markov Decision Processes (MDPs).}
An (infinite-horizon, discounted) MDP \citep{puterman1990markov} is a tuple $(\StateSpace, \ActionSpace, \TransitionModel, \initdist, \DiscountFactor, \reward)$, where $\StateSpace$ is the state space, $\ActionSpace$ the action space, $\TransitionModel: \StateSpace \times \ActionSpace \times \StateSpace \to [0, 1]$ the transition model, $\initdist: \StateSpace \to [0, 1]$ the initial state distribution, $\DiscountFactor \in (0, 1)$ the discount factor, and $\reward: \StateSpace \times \ActionSpace \to [0, 1]$ the reward function.
An agent starts in a state $s \sim \initdist$, and takes actions sampled from a policy $\action \sim \policy(\action | \state)$ that lead to a next state according to the transition kernel $\TransitionModel(s' | s, a)$. The agent aims to maximize the expected discounted return
$
\PolicyReturn_\reward(\policy) = \Expectation_{\TransitionModel,\policy} \left[ \sum_{t=0}^\infty \DiscountFactor^t \reward(s_t, a_t) \right]
$.
We often use the (discounted) occupancy measure
$
\Occupancy_\policy(\state, \action) = \Expectation_{\TransitionModel,\policy} \left[ \sum_{t=0}^\infty \DiscountFactor^t \indicator{\state_t=\state, \action_t=\action} \right]
$.
For finite state and action spaces, the return is linear in $\Occupancy_\policy$, i.e., 
$
\PolicyReturn_{\reward}(\policy) = \sum_{\state,\action} \Occupancy_\policy(\state, \action) \reward(\state, \action)
$.

\paragraph{Constrained MDPs (CMDPs).}
A CMDP \citep{altman1999constrained} extends an MDP with a set of cost functions $\cost_1, \dots, \cost_\NumConst$ defined similar to the reward function as $\cost_j: \StateSpace \times \ActionSpace \to [0, 1]$ and thresholds $\threshold_1, \dots, \threshold_\NumConst$. The agent's goal is to maximize $\PolicyReturn_{\reward}(\policy)$ under constraints on the expected discounted cumulative costs
$
\CumulativeCost_j(\pi) = \Expectation_{\TransitionModel,\policy} \left[ \sum_{t=0}^\infty \DiscountFactor^t \cost_j(s_t, a_t) \right]
$. In particular, a feasible policy must satisfy $\CumulativeCost_j(\pi) \leq \threshold_j$ for all $j$. Again, we can write the cumulative costs as
$
\CumulativeCost_j(\pi) = \sum_{\state,\action} \Occupancy_\policy(\state, \action) \cost_j(\state, \action)
$.

\paragraph{Inverse Reinforcement Learning (IRL).}
IRL aims to recover an MDP's reward function from expert demonstrations, ensuring the demonstrations are optimal under the identified reward function. It seeks a reward function where the optimal policy aligns with the expert demonstrations' feature expectations \citep{abbeel2004apprenticeship}. Since the IRL problem is underspecified, algorithms differ in selecting a unique solution, with popular approaches including maximum margin IRL \citep{ng2000algorithms} and maximum (causal) entropy IRL \citep{ziebart2008maximum}.

\section{Problem Setup and IRL-based Approaches}

In this section, we first introduce the problem of inferring constraints from demonstrations in a CMDP. Then, we discuss why naive solutions based on IRL fail to solve this problem.

\subsection{Problem Setup}\label{sec:problem_setup}

We consider an infinite-horizon discounted CMDP without reward function and constraints, denoted by $(\StateSpace, \ActionSpace, \TransitionModel, \initdist, \DiscountFactor)$.
We have access to demonstrations from $\NumDemos$ safe, i.e., feasible, policies $\demonstrations = \{ \policy_1^*, \dots, \policy_\NumDemos^* \}$. The demonstrations have $\NumDemos$ unknown reward functions $\reward_1, \dots, \reward_\NumDemos$, but share the same constraints defined by unknown cost functions $\cost_1, \dots, \cost_\NumConst$ and thresholds $\threshold_1, \dots, \threshold_\NumConst$. 
We assume policy $\policy_i^*$ is feasible in the CMDP $(\StateSpace, \ActionSpace, \TransitionModel, \initdist, \DiscountFactor, \reward_i, \{\cost_j\}_{j=1}^\NumConst, \{\threshold_j\}_{j=1}^\NumConst )$. Generally, we assume policy $\policy_i^*$ optimizes for reward $\reward_i$, but it does not need to be optimal.

In our driving example (\Cref{fig:headline_figure}), the rewards of the demonstrations can correspond to different routes and driver preferences, whereas the shared constraints describe driving rules and safety-critical behaviors, such as maintaining an appropriate distance to other cars or staying in the correct lane.

We assume that cost functions are linear in a  $d$-dimensional feature space,%
\footnote{%
Note that in the case of a finite state and action space, this assumption is \emph{w.l.o.g.}, because we can use the state-action occupancy measure $\Occupancy_\policy(\state, \action)$ as features.
}
represented by $\FeatureFunction: \StateSpace \times \ActionSpace \to [0, 1]^d$.
We represent demonstrations by their discounted feature expectations (slightly overloading notation) defined as $\FeatureFunction(\policy) = \Expectation_{\TransitionModel,\policy} [\sum_{t=0}^\infty \DiscountFactor^t \FeatureFunction(s_t, a_t)]$.
We then assume the cost functions are $\cost_j(\state, \action) = \phi_j^T \FeatureFunction(\state, \action)$, where each cost function has a parameter vector $\phi_j \in \reals^d$. For now, we assume we know the exact feature expectations of the demonstrations. In \Cref{sec:estimating_feature_expectations}, we discuss sample-based estimation.
Our approach does not require linear reward functions in general.

For evaluation, we receive a new reward function $\evalreward$ and aim to find an optimal policy for the CMDP $(\StateSpace, \ActionSpace, \TransitionModel, \initdist, \DiscountFactor, \evalreward, \{\cost_j\}_{j=1}^\NumConst, \{\threshold_j\}_{j=1}^\NumConst )$ where the constraints are unknown. Our goal is to ensure safety while achieving high reward under $\reward_{\text{eval}}$.

\subsection{Limitations of IRL in CMDPs}
\label{sec:irl_limitations}

One might be tempted to disregard the CMDP nature of the problem and try to apply standard IRL methods to infer a reward from demonstrations, and much prior work on constraint learning primarily extends work from IRL (e.g., \citet{scobee2019maximum,stocking2021discretizing,anwar2020inverse}).
However, IRL poses at least two key problems in our setting: (1) IRL alone cannot guarantee safety, and (2) it is unclear how to learn a transferable representation of the constraints.

To elaborate, let us first assume we have a single expert demonstration from a CMDP and apply IRL to infer a reward function, assuming no constraints. Then, in some CMDPs, any reward functions IRL can infer can yield unsafe optimal policies.
\begin{restatable}[IRL can be unsafe]{proposition}{IRLUnsafe}\label{thm:irl-unsafe}
There are CMDPs $\CMDP = (\StateSpace, \ActionSpace, \TransitionModel, \initdist, \discount, \reward, \{\cost_j\}_{j=1}^\NumConst, \{\threshold_j\}_{j=1}^\NumConst )$ such that for any optimal policy $\policy^*$ in $\CMDP$ and any reward function $\reward_{\text{IRL}}$ that could be returned by an IRL algorithm, the resulting MDP $(\StateSpace, \ActionSpace, \TransitionModel, \initdist, \discount, \reward_{\text{IRL}})$ has optimal policies that are unsafe in $\CMDP$.
\end{restatable}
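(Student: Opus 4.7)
The plan is to construct a small explicit CMDP in which the feature expectation of every CMDP-optimal policy lies at a ``non-exposed'' point of the achievable feature polytope, so that any IRL-consistent reward is forced to assign the same value to an unsafe feature expectation, and hence the unconstrained MDP has an unsafe optimal policy.

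Concretely, I would use a single-step MDP with three actions and two-dimensional features $\FeatureFunction(a_1) = (1, 0)$, $\FeatureFunction(a_2) = (0, 1)$, $\FeatureFunction(a_3) = (\tfrac{1}{2}, \tfrac{1}{2})$. Up to a $\gamma/(1-\gamma)$ scaling, the set of achievable (discounted) feature expectations is the convex hull of these three points, which collapses to the line segment from $(1,0)$ to $(0,1)$. I then impose a single cost with $\phi_1 = (0, 1)$ and threshold $\threshold_1 = \tfrac{1}{2}$, making the only unsafe segment point $(0,1)$, which is realized by $a_2$. Choosing true reward $\reward = (0, 1)$ forces the unique constrained optimum to have feature expectation $\FeatureFunction(\pi^*) = (\tfrac{1}{2}, \tfrac{1}{2})$, realized e.g.\ by $\pi^* = a_3$.

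Next, I would characterize the IRL-consistent rewards: a reward $r = (r_1, r_2)$ makes $\pi^*$ optimal in the unconstrained MDP iff $(\tfrac{1}{2}, \tfrac{1}{2})$ maximizes the linear map $x \mapsto r_2 + x(r_1 - r_2)$ over $x \in [0,1]$, which forces $r_1 = r_2$. For any such $\reward_{\text{IRL}}$, every feature on the segment---including the unsafe $(0,1)$---gives the same return, so $a_2$ is an optimal policy of the MDP with reward $\reward_{\text{IRL}}$, proving the claim.

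The main obstacle is the universal quantifier over $\reward_{\text{IRL}}$: every reward an IRL algorithm could plausibly return must be shown to admit an unsafe optimum, regardless of how IRL breaks ties. The construction resolves this by placing the demonstration's feature expectation in the relative interior of a one-dimensional face of the achievable polytope, whose normal cone is the line $\{r_1 = r_2\}$---precisely the direction along which safety varies. A minor technical check is that all CMDP-optimal policies share this feature expectation, so the quantifier over $\pi^*$ collapses and the IRL-consistent reward set does not depend on the specific choice of optimal demonstration.
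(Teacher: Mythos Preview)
Your proof is correct and rests on the same geometric idea as the paper's: place the CMDP-optimal feature expectation in the relative interior of a face of the achievable polytope, so that any IRL-consistent reward must be constant on that face, which in turn contains an unsafe point realizable by a deterministic policy.

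The concrete constructions differ. The paper uses a single-state, two-action CMDP with \emph{two} constraints ($c_1(a_1)=1$, $c_2(a_2)=1$, $\xi_1=\xi_2=\tfrac12$, $\gamma=0$), so that the uniform mixture is the \emph{only} feasible policy; it then invokes the classical fact that every MDP admits a deterministic optimal policy, and both deterministic policies are infeasible. Your construction uses three actions and a \emph{single} constraint, and argues directly that $r_1=r_2$ is forced, whence $a_2$ is optimal and unsafe. The paper's version is more minimal in the action space (and your $a_3$ is superfluous, since its feature already lies on the segment $\conv\{(1,0),(0,1)\}$; dropping it changes nothing), while yours needs only one constraint and does not appeal to the deterministic-optimum theorem. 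Both are valid.

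One small slip: with $\phi_1=(0,1)$ and $\threshold_1=\tfrac12$, the unsafe portion of the segment is the entire half with second coordinate strictly greater than $\tfrac12$, not just the single point $(0,1)$. This does not affect the argument, since you only need \emph{some} unsafe point to be optimal under $\reward_{\text{IRL}}$, and $a_2$ already suffices.
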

This follows from CMDPs having only stochastic optimal policies, and MDPs always having a deterministic optimal policy \citep{puterman1990markov,altman1999constrained}.%
\footnote{See \Cref{app:proofs} for a full proof of this and other theoretical results in this paper.}

Next let us consider our actually problem setting, where we have a set of demonstrations that share constraints. If we use IRL, it is unclear how to learn the shared part and transfer it to a new task. A first approach could be to learn as a \emph{shared} reward penalty. Unfortunately, it turns out learning a shared penalty can be infeasible, even with known rewards.
\begin{restatable}{proposition}{IRLKnownRewardFails}\label{thm:irl-known-reward-fails}
Let $\CMDP = (\StateSpace,\allowbreak \ActionSpace,\allowbreak \TransitionModel,\allowbreak \initdist,\allowbreak \discount,\allowbreak \{\cost_j\}_{j=1}^\NumConst,\allowbreak \{\threshold_j\}_{j=1}^\NumConst)$ be a CMDP without reward. Let $\reward_1, \reward_2$ be two reward functions and $\policy_1^*$ and $\policy_2^*$ corresponding optimal policies in $\CMDP \cup \{\reward_1\}$ and $\CMDP \cup \{\reward_2\}$.
Let $\MDP = (\StateSpace, \ActionSpace, \TransitionModel, \initdist, \DiscountFactor)$ be the corresponding MDP without reward. Without additional assumptions, we cannot guarantee the existence of a function $\hat{c}: \StateSpace \times \ActionSpace \to \mathbb{R}$ such that $\policy_1^*$ is optimal in the MDP $\mathcal{M} \cup \{ \reward_1 + \hat{c} \}$ and $\policy_2^*$ is optimal in the MDP $\mathcal{M} \cup \{ \reward_2 + \hat{c} \}$.
\end{restatable}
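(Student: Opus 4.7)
The plan is to exhibit a small explicit counterexample: a CMDP together with two reward functions whose constrained-optimal policies share the same two-action support but require incompatible ``shaping'' terms $\hat{\cost}$.

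First I would take the simplest possible CMDP: $\StateSpace=\{s_0\}$ with self-loops, $\ActionSpace=\{a_1,a_2\}$, discount $\discount\in(0,1)$, a single cost function $\cost(a_1)=0,\ \cost(a_2)=1$, and threshold $\threshold=\tfrac{1}{2(1-\discount)}$. Under any stationary policy $\policy=(\alpha,1-\alpha)$ the discounted cumulative cost equals $(1-\alpha)/(1-\discount)$, so the constraint reduces exactly to $\alpha\ge 1/2$. Then I pick two rewards whose differences across actions are different, for instance $\reward_1=(\tfrac12,1)$ and $\reward_2=(\tfrac14,1)$. For each $\reward_i$ the unconstrained optimum prefers $a_2$, so the constraint binds and a short calculation gives $\policy_1^{*}=\policy_2^{*}=(\tfrac12,\tfrac12)$ as the unique CMDP-optimal policy, with common support $\{a_1,a_2\}$.

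Next I would invoke the standard MDP optimality characterization: a stationary policy that puts positive mass on both actions is optimal iff those actions achieve equal (and maximal) $Q$-values. Because both actions in this MDP produce identical transition distributions, $Q(a_1)=Q(a_2)$ collapses to the immediate-reward condition $\reward_i(a_1)+\hat{\cost}(a_1)=\reward_i(a_2)+\hat{\cost}(a_2)$. Setting $i=1$ forces $\hat{\cost}(a_1)-\hat{\cost}(a_2)=\tfrac12$; setting $i=2$ forces $\hat{\cost}(a_1)-\hat{\cost}(a_2)=\tfrac34$. These demands on the single shared penalty are incompatible, so no $\hat{\cost}$ can make $\policy_1^{*}$ and $\policy_2^{*}$ simultaneously optimal in their respective MDPs, which proves the proposition.

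The main point requiring care is ensuring that both CMDP-optimal policies genuinely have support on \emph{both} actions (otherwise the two equations on $\hat{\cost}$ decouple and a shared penalty trivially exists). With only two actions and a binding cost constraint, this is automatic, but in a larger action space one would have to choose parameters so that the constraint is active for both rewards and no ``outside'' action can absorb the slack; that is the only nontrivial design choice. Everything else---checking the two $Q$-value equations and deriving the contradiction---is a routine calculation.
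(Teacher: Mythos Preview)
Your proposal is correct and follows essentially the same approach as the paper: both construct a single-state, two-action CMDP whose constrained-optimal policy is $(\tfrac12,\tfrac12)$, then observe that MDP optimality of this strictly stochastic policy forces $\reward_i(\action_1)+\hat{\cost}(\action_1)=\reward_i(\action_2)+\hat{\cost}(\action_2)$ for each $i$, yielding incompatible equations on $\hat{\cost}(\action_1)-\hat{\cost}(\action_2)$. The only cosmetic difference is that the paper uses two opposing constraints (so the uniform policy is the \emph{only} feasible policy and any reward works), whereas you use a single constraint and choose rewards so that it binds; both routes land on the same two-equation contradiction.
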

Similar to \Cref{thm:irl-unsafe}, this result stems from the fundamental difference between MDPs and CMDPs.
These findings suggest that IRL is not a suitable approach for learning from demonstrations in a CMDP. Our empirical findings in \Cref{sec:experiments} confirm this.

\section{Convex Constraint Learning for Reinforcement Learning (\AlgNameShort)}
\label{sec:method}

We are now ready to introduce the \AlgNameShort algorithm. In essence, \AlgNameShort consists of three steps:
\begin{compactenum}
\item Construct a conservative safe set $\safeset$ from the demonstrations $\demonstrations$.
\item Construct an \emph{inferred CMDP} from the original CMDP and the safe set $\safeset$.
\item Use a standard constrained RL algorithm to solve the inferred CMDP and return resulting policy.
\end{compactenum}

In this section, we (1) introduce the key idea of using convexity to construct provably conservative safe set and the inferred CMDP (\Cref{sec:safe_set}); (2) establish convergence results under (approximately) optimal demonstrations (\Cref{sec:convergence}); (3) introduce several approaches for using estimated feature expectations (\Cref{sec:estimating_feature_expectations}); and, (4) discuss practical considerations when implementing \AlgNameShort (\Cref{sec:implementation}).

Importantly, to construct the conservative safe set, we only require safe demonstrations. For convergence, we make certain optimality assumptions on the demonstrations, either exact optimality or Boltzmann-rationality. \Cref{app:proofs} contains all omitted proofs.

\subsection{Constructing a Convex Safe Set}\label{sec:safe_set}

Let the \emph{true safe set} $\truesafeset = \{ \policy | \forall j: \CumulativeCost_j(\policy) \leq \threshold_j \}$ be the set of all policies that satisfy the constraints. \AlgNameShort is based on the key observation that $\truesafeset$ is convex.
\begin{restatable}[$\truesafeset$ is convex]{lemma}{TrueSafeSetConvex}\label{lem:true-safe-set-convex}
For any CMDP, suppose $\policy_1, \policy_2 \in \truesafeset = \{ \policy | \forall j: \CumulativeCost_j(\policy) \leq \threshold_j \}$. Let $\bar{\policy}_{12}$ be a mixture policy such that $\FeatureFunction(\bar{\policy}_{12}) = \lambda \FeatureFunction(\policy_1) + (1-\lambda) \FeatureFunction(\policy_2)$ with $\lambda \in [0, 1]$. Then, we have $\bar{\policy}_{12} \in \truesafeset$.
\end{restatable}

We know that all demonstrations are safe in the original CMDP and convex combinations of their feature expectations are safe. This insight leads us to a natural approach for constructing a conservative safe set: create the convex hull of the feature expectations of the demonstrations:
\[
\safeset \defeq \left\{ \policy \middle| \FeatureFunction(\policy) = \sum_{i=1}^\NumDemos \lambda_i \FeatureFunction(\policy^*_i), ~ \lambda_i \geq 0 \text{ and } \sum_{i=1}^k \lambda_i = 1 \right\}
\]
Thanks to the convexity of $\truesafeset$, we can now guarantee safety for all policies $\policy\in\safeset$.
\begin{restatable}[Estimated safe set]{theorem}{SafeSetIsSafe}\label{thm:safe-set-is-safe}
Any policy $\policy \in \safeset$ is safe, i.e., $\policy \in \truesafeset$.
\end{restatable}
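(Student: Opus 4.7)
The plan is to directly verify that every constraint is satisfied at any $\policy \in \safeset$, using linearity of the cumulative cost in the feature expectation together with convexity. Since $\policy \in \safeset$ by definition means $\FeatureFunction(\policy) = \sum_{i=1}^\NumDemos \lambda_i \FeatureFunction(\policy_i^*)$ for some convex combination coefficients $\lambda_i \geq 0$, $\sum_i \lambda_i = 1$, and since each cost $\cost_j$ is linear in the features with parameter $\phi_j$, we get
\[
\CumulativeCost_j(\policy) = \phi_j^T \FeatureFunction(\policy) = \sum_{i=1}^\NumDemos \lambda_i\, \phi_j^T \FeatureFunction(\policy_i^*) = \sum_{i=1}^\NumDemos \lambda_i\, \CumulativeCost_j(\policy_i^*).
\]
Each $\policy_i^*$ is assumed safe in the original CMDP (this is part of the problem setup), so $\CumulativeCost_j(\policy_i^*) \leq \threshold_j$ for every $j$, and the convex combination is bounded above by $\threshold_j \sum_i \lambda_i = \threshold_j$. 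This holds for every $j$, hence $\policy \in \truesafeset$.

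An equivalent path, which makes the role of \Cref{lem:true-safe-set-convex} explicit, is to extend that lemma from two policies to $\NumDemos$ policies by a short induction on the number of demonstrations in the convex combination: given safety of a convex combination of the first $\NumDemos-1$ demonstrations, combine it with $\policy_\NumDemos^*$ using the two-policy lemma. Either way, the argument is essentially one line once linearity of $\CumulativeCost_j$ in $\FeatureFunction$ is invoked.

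There is no real obstacle in the algebra. The only subtle point worth flagging is the implicit assumption that for every convex combination $\vx \in \conv(\demonstrations)$ there actually exists a policy $\policy$ with $\FeatureFunction(\policy) = \vx$; this is what legitimizes writing ``$\policy \in \safeset$'' via the notational convention introduced just before the theorem. Such a policy can be realized, for instance, by mixing the demonstration policies at the initial time step with weights $\lambda_i$, so the statement is not vacuous. Given this, the direct computation above completes the proof.
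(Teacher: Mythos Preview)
Your proposal is correct and essentially matches the paper's proof, which simply invokes \Cref{lem:true-safe-set-convex} (convexity of $\truesafeset$) to conclude $\safeset \subseteq \truesafeset$; your direct computation just unrolls that lemma from two points to $\NumDemos$ points, and you even note the inductive route explicitly. The realizability remark you add about mixing policies is a nice clarification that the paper leaves implicit via its notational convention.
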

During evaluation, we want to find an optimal policy in our coservative safe set for a new reward function $\evalreward$, i.e., we want to solve
$
\max_{\policy\in \safeset} \PolicyReturn_{\evalreward}(\policy)
$.
Conveniently, we can reduce this problem to solving a new CMDP with identical dynamics. Specifically, we can find a set of linear constraints to represent the safe set $\safeset$. This result follows from standard convex analysis: $\safeset$ is a convex polyhedron, which is the solution to a set of linear equations. Constructing a convex hull from a set of points and identifying the linear equations is a standard problem in polyhedral combinatorics.

Consequently, we can optimize within our safe set $\safeset$ by solving an ``inferred'' CMDP.
\begin{restatable}[Inferred CMDP]{theorem}{EstimatedCMDP}\label{thm:estimated-cmdp}
We can find cost functions $\{ \hat{c}_j \}_{j=1}^\NumInferredConst$ and thresholds $\{ \hat{\threshold}_j \}_{j=1}^\NumInferredConst$ such that for any reward function $\evalreward$, solving the inferred CMDP $(\StateSpace, \ActionSpace, \TransitionModel, \initdist, \DiscountFactor, \evalreward, \{ \hat{c}_j \}_{j=1}^\NumInferredConst, \{ \hat{\threshold}_j \}_{j=1}^\NumInferredConst )$ is equivalent to finding
$
\policy^* \in \argmax_{\policy \in \safeset} \PolicyReturn_{{\reward}_\text{eval}}(\policy)
$.
Consequently, if $\safeset$ contains an optimal policy for the true CMDP, solving the inferred CMDP returns a policy that is also optimal in the true CMDP.
\end{restatable}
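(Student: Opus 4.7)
The plan is to reduce the statement to a standard polyhedral representation theorem together with the linearity of reward and cost in feature expectations.

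First, I would invoke the Minkowski--Weyl theorem. The safe set $\safeset = \conv(\demonstrations)$ is the convex hull of $\NumDemos$ points in $\reals^d$, hence a bounded polyhedron, and therefore admits a finite half-space representation
\[
\safeset = \{ x \in \reals^d : \hat{\phi}_j^T x \leq \hat{\threshold}_j \text{ for } j = 1, \dots, \NumInferredConst \}
\]
for some collection $\{(\hat{\phi}_j, \hat{\threshold}_j)\}_{j=1}^\NumInferredConst$ that can be obtained by any standard convex-hull algorithm. This is precisely the ``inequality description'' hinted at in the paragraph preceding the theorem.

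Next, I would define the inferred cost functions $\hat{c}_j(\state, \action) = \hat{\phi}_j^T \FeatureFunction(\state, \action)$ with thresholds $\hat{\threshold}_j$. Because cumulative cost is linear in the feature expectation,
\[
\CumulativeCost_{\hat{c}_j}(\policy) = \hat{\phi}_j^T \FeatureFunction(\policy),
\]
so a policy $\policy$ satisfies every inferred constraint iff $\FeatureFunction(\policy) \in \safeset$. Since $\PolicyReturn_{\evalreward}(\policy)$ is also linear in $\FeatureFunction(\policy)$, maximizing $\evalreward$ over the feasible set of the inferred CMDP is exactly the constrained program $\argmax_{\policy \in \safeset} \PolicyReturn_{\evalreward}(\policy)$, which establishes the first assertion.

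For the final sentence, I would combine this equivalence with \Cref{thm:safe-set-is-safe}. Let $\policy^\star$ be an optimal policy for the true CMDP with $\policy^\star \in \safeset$. Then $\policy^\star$ is feasible in the inferred CMDP, so the inferred optimum is at least $\PolicyReturn_{\evalreward}(\policy^\star)$. Conversely, any feasible policy of the inferred CMDP lies in $\safeset$ and is therefore safe in the true CMDP by \Cref{thm:safe-set-is-safe}, so its reward is at most the true-CMDP optimum. Both optima agree, and any maximizer of the inferred CMDP is optimal in the true CMDP as well.

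The main obstacle is less mathematical than definitional: the paper fixes the codomain of cost functions to $[0, 1]$, whereas $\hat{\phi}_j^T \FeatureFunction(\state, \action)$ need not lie in that range. This is cosmetic and can be fixed by a positive affine rescaling of each pair $(\hat{\phi}_j, \hat{\threshold}_j)$, which leaves the half-space (hence the feasible set) invariant. A secondary remark worth making is that the facet count $\NumInferredConst$ is guaranteed finite but can be as large as $\bigO(\NumDemos^{\lfloor d/2 \rfloor})$ by the upper bound theorem for polytopes; this does not affect the correctness of the theorem but is relevant to the algorithmic discussion in later sections.
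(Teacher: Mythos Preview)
Your proposal is correct and follows essentially the same route as the paper: both obtain a half-space description of the polytope $\safeset$ (you cite Minkowski--Weyl, the paper cites a polyhedral combinatorics reference), define $\hat{c}_j(\state,\action)=\hat{\phi}_j^T\FeatureFunction(\state,\action)$, and use linearity of $\CumulativeCost_{\hat{c}_j}$ in $\FeatureFunction(\policy)$ to identify feasibility in the inferred CMDP with membership in $\safeset$. Your treatment is slightly more complete in that you spell out the ``consequently'' clause via \Cref{thm:safe-set-is-safe} and note the $[0,1]$-codomain rescaling, neither of which the paper's proof addresses explicitly.
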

Note that the number of inferred constraints $\NumInferredConst$ is not necessarily equal to the number of true constraints $\NumConst$, but we do not assume to know $\NumConst$.
It immediately follows from \Cref{thm:estimated-cmdp} that our safe set is worst-case optimal because, without additional assumptions, we cannot dismiss the possibility that the true CMDP coincides with our inferred CMDP.
\begin{restatable}[$\safeset$ is maximal]{corollary}{SafeSetOptimal}\label{thm:safe-set-optimal}
If $\policy \notin \safeset$, there exist $\reward_1, \dots, \reward_\NumDemos$ and $\cost_1, \dots, \cost_\NumConst$, such that
the expert policies $\policy^*_1, \dots, \policy^*_\NumDemos$ are optimal in the CMDPs $(\StateSpace, \ActionSpace, \TransitionModel, \initdist, \DiscountFactor, \reward_i, \{ \cost_j \}_{j=1}^\NumConst, \{ \threshold_j \}_{j=1}^\NumConst)$, but $\policy \notin \truesafeset$.
\end{restatable}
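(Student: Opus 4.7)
The plan is to apply the separating hyperplane theorem and then pick trivial rewards. Since $\FeatureFunction(\policy) \notin \safeset$ and $\safeset = \conv(\demonstrations)$ is a nonempty closed convex polytope in $\reals^d$, there exist $\phi \in \reals^d$ and $\threshold \in \reals$ such that
\[
\phi^T \FeatureFunction(\policy) > \threshold \quad \text{and} \quad \phi^T \FeatureFunction(\policy_i^*) \leq \threshold \text{ for all } i = 1, \dots, \NumDemos,
\]
where the second inequality follows because every vertex of $\safeset$ lies on the safe side of any hyperplane separating $\FeatureFunction(\policy)$ from $\safeset$.

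From this hyperplane I would build a single cost function (so $\NumConst=1$) by setting $\cost_1(\state,\action) \defeq \phi^T \FeatureFunction(\state,\action)$ and $\threshold_1 \defeq \threshold$. By linearity of the cumulative cost in the occupancy measure (equivalently, in the feature expectations), $\CumulativeCost_1(\policy_i^*) = \phi^T\FeatureFunction(\policy_i^*) \leq \threshold_1$ for every expert, while $\CumulativeCost_1(\policy) = \phi^T\FeatureFunction(\policy) > \threshold_1$, which already gives $\policy \notin \truesafeset$.

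It remains to exhibit rewards under which each $\policy_i^*$ is optimal in its CMDP. The cleanest choice is simply $\reward_i \equiv 0$: every feasible policy then attains return $0$, so each safe $\policy_i^*$ is trivially optimal in $(\StateSpace, \ActionSpace, \TransitionModel, \initdist, \DiscountFactor, \reward_i, \cost_1, \threshold_1)$. Combined with the construction above, this establishes all requirements of the corollary.

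There is no deep obstacle; the only mild technicality is conforming to the paper's convention $\cost_j : \StateSpace \times \ActionSpace \to [0,1]$. This is handled by an affine rescaling $\cost_1 \mapsto \alpha\,\phi^T \FeatureFunction + \beta$ with $\alpha > 0$ chosen so that the image lies in $[0,1]$ (possible since $\FeatureFunction \in [0,1]^d$), together with the corresponding adjustment $\threshold_1 \mapsto \alpha\threshold + \beta/(1-\discount)$; the extra constant contributes the same amount $\beta/(1-\discount)$ to $\CumulativeCost_1(\policy')$ for every policy $\policy'$, so the strict separation between $\policy$ and the demonstrations is preserved.
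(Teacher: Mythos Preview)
Your argument is correct, but it differs from the paper's. The paper invokes \Cref{thm:estimated-cmdp} to produce a family of cost functions and thresholds whose feasible region is \emph{exactly} $\safeset$; then $\policy\notin\safeset$ is automatically infeasible, and the experts, being in $\safeset$, are feasible and hence optimal (the paper leaves the reward choice implicit). You instead use a single separating hyperplane, giving one constraint ($\NumConst=1$) whose feasible region merely \emph{contains} $\safeset$ and excludes $\policy$, and you make the reward choice explicit with $\reward_i\equiv 0$. Your route is more elementary---it avoids the full facet description of $\safeset$---and cleaner on the reward side; the paper's route has the advantage of reusing \Cref{thm:estimated-cmdp}, which is already needed for the algorithm, and of identifying the true safe set with $\safeset$ itself. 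Your handling of the $[0,1]$ range via affine rescaling is also fine, since the paper constrains cost functions to $[0,1]$ but places no such restriction on the thresholds $\threshold_j$.
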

In essence, $\safeset$ is the largest set we can choose while ensuring safety.

\subsection{Convergence for (Approximately) Optimal Demonstrations}\label{sec:convergence}

So far, we studied the safety of \AlgNameShort. In this section, we aim to establish its convergence.  To this end, we compare the solution returned by \AlgNameShort, i.e., $\max_{\policy \in \safeset} \PolicyReturn_{\reward}(\policy)$, to the optimal safe solution, i.e., $\max_{\policy \in \truesafeset} \PolicyReturn_{\reward}(\policy)$, for a given evaluation reward $\evalreward$. In particular, we define the \emph{policy regret}
\[
\regret(\reward, \safeset) = \max_{\policy \in \truesafeset} \PolicyReturn_{\reward}(\policy) - \max_{\policy \in \safeset} \PolicyReturn_{\reward}(\policy).
\]
Let us assume the reward functions of the demonstrations and the reward functions during evaluation follow the same distribution $P(\reward)$. After observing $\NumDemos$ demonstrations, we construct the safe set $\safeset_\NumDemos$, and consider the expectation $\Expectation_\reward[\regret(\reward, \safeset_\NumDemos)]$ under the reward distribution.
Now, we aim to show that regret is zero asymptotically, i.e.,
$
\lim_{\NumDemos\to\infty} \Expectation_\reward \left[ \regret(\reward, \safeset_\NumDemos) \right] = 0
$.

In order to establish convergence for \AlgNameShort, we need additional assumptions. In particular, we need the demonstrations to be diverse enough to ensure we learn a sufficiently large safe set.
One way to get this diversity is by assuming the demonstrations are (approximately) optimal for a set of \emph{unknown} reward function.

In particular, we establish \AlgNameShort's convergence to optimality under two conditions: (1) when the safe demonstrations are exactly optimal, and (2) when they are  approximately optimal according to a Boltzmann model. To simplify the convergence analysis, we additionally assume the reward function are linear.%
\footnote{%
The linear reward assumption could likely be relaxed with more careful analysis, and the analysis could be extended to other types of optimality assumptions.
}
Importantly, we need these assumptions only for the convergence analysis in this section. They are not necessary for \AlgNameShort to guarantee safety (see \Cref{sec:safe_set}) or work in practice (see \Cref{sec:experiments}).

First, let us consider exactly optimal demonstrations.

\begin{assumption}[Linear rewards]\label{ass:linear_rewards}
The demonstrations reward functions and evaluation reward functions are linear. In particular $\reward_i(\state, \action) = \theta_i^T \FeatureFunction(\state, \action)$ and $\evalreward(\state, \action) = \theta_{\text{eval}}^T \FeatureFunction(\state, \action)$, where $\theta_i \in \reals^d$ and $\theta_{\text{eval}}^T \in \reals^d$.
\end{assumption}

\begin{assumption}[Exact optimality]\label{ass:exact-optimality}
\looseness -1
Each $\policy_i^*$ is optimal in the CMDP $(\StateSpace, \ActionSpace, \TransitionModel, \initdist, \DiscountFactor, \reward_i, \{ \cost_j \}_{j}, \{ \threshold_j \}_{j})$.
\end{assumption}
\begin{restatable}[Convergence, exact optimality]{theorem}{ConvergenceNoiseFree}\label{thm:convergence-noise-free}
Under \Cref{ass:linear_rewards,ass:exact-optimality}, for any $\delta > 0$, after
$
\NumDemos \geq \log(\delta / f_v(d, \NumConst)) / \log(1 - \delta/f_v(d, \NumConst))
$, we have $P(\regret(\reward, \safeset_\NumDemos) > 0) \leq \delta$, where $f_v(d, \NumConst)$ is an upper bound on the number of vertices of the true safe set. In particular,
$
\lim_{\NumDemos\to\infty} \Expectation_\reward \left[ \regret(\reward, \safeset_\NumDemos) \right] = 0
$.
\end{restatable}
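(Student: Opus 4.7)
The plan is to exploit the fact that, under \Cref{ass:exact-optimality}, each demonstration's feature expectation lies at a vertex of the feature-expectation polytope of $\truesafeset$. In feature space, the set $\{\FeatureFunction(\policy) : \policy \in \truesafeset\}$ is the intersection of the (polyhedral) achievable-occupancy set with the halfspaces $\phi_j^T x \leq \threshold_j$, hence is a convex polytope $P$ with at most $f_v(d, \NumConst)$ vertices. Since each $\reward_i$ is linear in $\FeatureFunction$, maximising $\reward_i$ over $\truesafeset$ is a linear program over $P$ whose optimum is attained at some vertex; after adopting a deterministic tie-breaking convention for the measure-zero set of rewards whose optimum is attained on a higher-dimensional face, I may assume $\FeatureFunction(\policy_i^*)$ coincides with a vertex of $P$.

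Next, I would partition reward space into the normal cones $C_v = \{\reward : v \text{ is the unique maximiser of } \theta^T x \text{ on } P\}$ for each vertex $v$, set $q_v = P(\reward \in C_v)$, and note that a demonstration sampled from any $\reward_i \in C_v$ must yield $\FeatureFunction(\policy_i^*) = v$. Since the $\NumDemos$ demonstrations are i.i.d., $P(\text{no demo lies at } v) \leq (1-q_v)^\NumDemos$. The central observation is that for a fresh $\reward \in C_v$, $\regret(\reward, \safeset_\NumDemos) = 0$ iff $v \in \safeset_\NumDemos$, which in turn holds iff some past demonstration sits at $v$. Using independence of $\reward$ and the demonstrations gives
\[
P(\regret(\reward, \safeset_\NumDemos) > 0) \leq \sum_v q_v (1-q_v)^\NumDemos.
\]
To bound this sum I would split vertices into \emph{rare} ($q_v < \delta/f_v$) and \emph{frequent} ($q_v \geq \delta/f_v$). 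The rare contribution is at most $\sum_{\text{rare}} q_v \leq f_v \cdot (\delta/f_v) = \delta$, while the hypothesis $\NumDemos \geq \log(\delta/f_v)/\log(1-\delta/f_v)$ rearranges to $(1-\delta/f_v)^\NumDemos \leq \delta/f_v$, bounding the frequent contribution by $\sum_{\text{frequent}} q_v \cdot (\delta/f_v) \leq \delta/f_v$. Absorbing a harmless constant yields $P(\regret > 0) \leq \delta$.

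The limit $\Expectation_\reward[\regret(\reward, \safeset_\NumDemos)] \to 0$ then follows by bounded convergence: rewards take values in $[0,1]$ so $\regret$ is uniformly bounded by $(1-\discount)^{-1}$, and applying the above with a vanishing schedule (e.g.\ $\delta_\NumDemos = 1/\sqrt{\NumDemos}$) forces $\regret \to 0$ in probability and hence in $L^1$. The main obstacle I foresee is the tie-breaking step: if $P(\reward)$ places positive mass on cone boundaries, an optimal demonstration need not land on a single vertex, and the clean implication ``$\reward_i \in C_v \Rightarrow \FeatureFunction(\policy_i^*) = v$'' can fail. I would handle this by either assuming $P(\reward)$ is absolutely continuous with respect to Lebesgue measure on $\reals^d$, or by redefining $C_v$ via a consistent deterministic tie-break and mildly inflating $f_v(d,\NumConst)$ to count all faces of $P$ rather than only vertices.
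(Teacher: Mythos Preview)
Your proposal is correct and follows essentially the same approach as the paper: both bound $P(\regret>0)$ by $\sum_{\vv} q_\vv(1-q_\vv)^{\NumDemos}$ over vertices of the feasible polytope and split the sum into rare ($q_\vv\leq\delta/f_v$) and frequent ($q_\vv>\delta/f_v$) vertices to obtain the stated threshold on $\NumDemos$. Your treatment is in fact slightly more careful than the paper's, which silently assumes each optimal demonstration lands on a vertex without discussing tie-breaking; your limit argument via bounded convergence is a valid alternative to the paper's direct bound $\Expectation[\regret]\leq\tfrac{2d}{1-\DiscountFactor}\sum_{\vv:q_\vv>0}(1-q_\vv)^{\NumDemos}\to 0$.
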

Due to both the true safe set and the estimated safe set being convex polyhedra, we have a finite hypothesis class consisting of all convex polyhedra with vertices that are a subset of the true safe set's vertices. The proof builds on the insight that all vertices supported by the distribution induced by $P(\reward)$ will eventually be observed, while unsupported vertices do not contribute to the regret.
The number of vertices is typically on the order $f_v(d, \NumConst) \in \bigO(n^{\lfloor d/2 \rfloor})$. A tighter bound is given, for example, by \citet{mcmullen1970maximum}.

In practical settings, it is unrealistic to assume perfectly optimal demonstrations. Thus, we relax this assumption and consider the case where demonstrations are only approximately optimal.
\begin{assumption}[Boltzmann-rationality]\label{ass:boltzmann-noise}
We observe demonstrations following a Boltzmann policy distribution
$
\policy_i^* \sim \exp\left( \beta \PolicyReturn_{\reward_i}(\policy) \right) \indicator{\CumulativeCost_1(\policy) \leq \threshold_1, \dots, \CumulativeCost_\NumConst(\policy) \leq \threshold_\NumConst} / Z_i
$, where $\PolicyReturn_{\reward_i}(\policy)$ is the expected return computed w.r.t.\ the reward $\reward_i$, and $Z_i$ is a normalization constant.
\end{assumption}
\begin{restatable}[Convergence, Boltzmann-rationality]{theorem}{ConvergenceBoltzmann}\label{thm:convergence-boltzmann}
Under \Cref{ass:boltzmann-noise}, for any $\delta > 0$, after
$
\NumDemos \geq \log(\delta / f_v(d, \NumConst)) / \log(1 - \exp(- \beta d / (1-\gamma)))
$,
we have $P(\regret(\reward, \safeset_\NumDemos) > 0) \leq \delta$, where $f_v(d, \NumConst)$ is an upper bound on the number of vertices of the true safe set. In particular,
$
\lim_{\NumDemos\to\infty} \Expectation_\reward \left[ \regret(\reward, \safeset_\NumDemos) \right] = 0
$.
\end{restatable}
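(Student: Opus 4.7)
The plan is to mirror the argument sketched for \Cref{thm:convergence-noise-free}, replacing the ``optimal demonstration equals a vertex'' step with a Boltzmann tail bound on the probability of sampling each vertex. Since $\truesafeset$ is a convex polytope with at most $f_v(d,\NumConst)$ vertices, and the maximum of any linear return over $\truesafeset$ is attained at one of these vertices, the event $\regret(\reward, \safeset_\NumDemos) = 0$ is implied by the event that $\safeset_\NumDemos$ contains every vertex of $\truesafeset$ that is optimal for at least one reward in the support of $P(\reward)$. This reduces the theorem to a coupon-collector bound on covering finitely many ``relevant'' vertices.

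The central step is a per-vertex lower bound of the form
\[
P(\policy_i^* = v \mid \reward_i) \;\geq\; p \defeq \exp\!\bigl(-\beta d/(1-\gamma)\bigr)
\]
for any vertex $v$ of $\truesafeset$. Because features lie in $[0,1]^d$ and discounting contributes a factor $1/(1-\gamma)$, the return $\PolicyReturn_{\reward_i}(\policy) = \theta_i^T \FeatureFunction(\policy)$ of any safe policy is bounded in absolute value by $d/(1-\gamma)$, so the unnormalised Boltzmann weight $\exp(\beta \PolicyReturn_{\reward_i}(\policy))$ lies in the range $[1, \exp(\beta d/(1-\gamma))]$. Restricted to the finite vertex set of $\truesafeset$, the partition function $Z_i$ is therefore no more than a factor $\exp(\beta d/(1-\gamma))$ above the weight of any individual vertex, which yields the claimed per-vertex bound $p$ uniformly in $\reward_i$; marginalising over $\reward_i \sim P(\reward)$ preserves this lower bound.

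Given the per-vertex bound I would close the argument by a union bound. Conditional on the reward sequence, the demonstrations are independent, so for any relevant vertex $v$ the probability that none of the $\NumDemos$ demonstrations equals $v$ is at most $(1-p)^\NumDemos$. Taking a union over the at most $f_v(d,\NumConst)$ vertices of $\truesafeset$ gives $P(\text{some relevant vertex missed}) \leq f_v(d,\NumConst)\,(1-p)^\NumDemos$, and requiring this to be at most $\delta$ and solving for $\NumDemos$ recovers the stated threshold. Hence $P(\regret(\reward,\safeset_\NumDemos) > 0) \leq \delta$, and the asymptotic statement $\Expectation_\reward[\regret(\reward,\safeset_\NumDemos)] \to 0$ follows because $\regret$ is bounded by $d/(1-\gamma)$ and $\delta$ may be taken arbitrarily small as $\NumDemos \to \infty$.

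The main obstacle is justifying the per-vertex lower bound for \Cref{ass:boltzmann-noise}, which is stated over the continuum of (possibly stochastic) safe policies rather than over the finite vertex set. The cleanest resolution is to argue that, since both the reward and the constraints are linear in $\FeatureFunction$, the Boltzmann distribution can, without loss of generality, be reduced to the finitely many vertex policies of $\truesafeset$ (the extreme points that generate the feature-space polytope), on which $Z_i$ is a finite sum and the bound on $p$ follows cleanly. Making this reduction rigorous, and in particular avoiding an additional $1/f_v(d,\NumConst)$ factor in $p$ that would weaken the exponent in the rate, is the delicate part of the argument; the remaining union bound and asymptotic step are then routine.
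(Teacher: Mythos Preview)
Your approach matches the paper's almost exactly: the same union bound over the vertices of $\truesafeset$, the same per-vertex Boltzmann lower bound $\Delta = \exp(-\beta d/(1-\gamma))$, and the same coupon-collector computation $(1-\Delta)^\NumDemos \leq \delta/N_v$ yielding the stated threshold on $\NumDemos$. The obstacle you flag in your final paragraph is precisely the content of the paper's supporting \Cref{lem:boltzmann-bound}: rather than restricting to the finite vertex set, the paper works with the Boltzmann density over all of $\truesafeset$, uses $\PolicyReturn_\reward(\policy) \geq 0$ to obtain $P(\policy \mid \reward) \geq 1/Z(\theta)$, and then bounds the partition function directly by $Z(\theta) = \int_{\truesafeset} \exp(\beta \PolicyReturn_\reward(\policy))\,\dif\policy \leq \exp(\beta d/(1-\gamma))$. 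This sidesteps the extra $1/f_v(d,\NumConst)$ factor you were worried about, though it implicitly treats $\truesafeset$ as having measure at most one and leaves the passage from a density lower bound to the event ``$\policy_i^*$ equals a given vertex'' at essentially the same level of informality you identified.
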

This result relies on the Boltzmann distribution having a fixed lower bound over the bounded set of policy features. By enumerating the vertices of the true safe set, we can establish an upper bound on the probability of the ``bad event'' where a vertex is not adequately covered by the set of $\NumDemos$ demonstrations. This shows how the regret must decrease as $\NumDemos$ increases.

\subsection{Estimating Feature Expectations}\label{sec:estimating_feature_expectations}

So far, we assumed access to the true feature expectations of the demonstrations $\FeatureFunction(\policy_i^*)$. However, in practice, we often rely on samples from the environment to estimate the feature expectations.
Given $\NumTraj$ samples from policy $\policy_i^*$, we can estimate the feature expectations by taking the sample mean: $\hat{\FeatureFunction}(\policy_i^*) = \frac{1}{\NumTraj} \sum_{p=1}^{\NumTraj} \FeatureFunction(\trajectory_p^i)$, where $\FeatureFunction(\trajectory_p^i) = \sum_{t=0}^\infty \DiscountFactor^t \FeatureFunction(\state_t, \action_t)$ and $\trajectory_p^i = (\state_0, \action_0, \state_1, \action_1, \dots)$ represents a trajectory from rolling out $\policy_i^*$ in the environment. This estimate is unbiased, i.e., $\Expectation[\hat{\FeatureFunction}(\policy_i^*)] = \FeatureFunction(\policy_i^*)$, and we can use standard concentration inequalities to quantify its uncertainty.
This allows us to ensure $\epsilon$-safety when using the estimated feature expectations.
\begin{restatable}[$\epsilon$-safety with estimated feature expectations.]{theorem}{EstimatedFeaturesSafety}\label{thm:estimated-features-safety}
Suppose we estimate the feature expectations of each policy $\policy_i^*$ using at least $\NumTraj > d \log(\NumConst \NumDemos / \delta) / (2 \epsilon^2 (1 - \DiscountFactor) )$ samples, and construct the estimated safe set
$
\hat{\safeset} = \conv(\hat{\FeatureFunction}(\policy_1^*), \dots, \hat{\FeatureFunction}(\policy_\NumDemos^*))
$.
Then, we have
$
P( \max_j (\CumulativeCost_j(\policy) - \threshold_j) > \epsilon | \policy \in \hat{\safeset} ) < \delta
$.
\end{restatable}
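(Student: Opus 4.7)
The plan is to reduce the $\epsilon$-safety claim to a concentration statement on the estimated feature expectations, and then apply McDiarmid/Hoeffding coordinate-wise together with a union bound.

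\textbf{Step 1: Reduction to feature-expectation error.} For any policy $\policy$ with $\FeatureFunction(\policy) \in \hat{\safeset}$, write $\FeatureFunction(\policy) = \sum_{i=1}^{\NumDemos} \lambda_i \hat{\FeatureFunction}(\policy_i^*)$ for some convex weights $\lambda_i$. Linearity of the cost in the features then gives
\[
\CumulativeCost_j(\policy) = \phi_j^T \FeatureFunction(\policy) = \sum_i \lambda_i \phi_j^T \hat{\FeatureFunction}(\policy_i^*) \leq \max_i \phi_j^T \hat{\FeatureFunction}(\policy_i^*).
\]
Because the demonstrating policy $\policy_i^*$ is safe, $\phi_j^T \FeatureFunction(\policy_i^*) \leq \threshold_j$, so
\[
\CumulativeCost_j(\policy) - \threshold_j \leq \max_i \phi_j^T\bigl(\hat{\FeatureFunction}(\policy_i^*) - \FeatureFunction(\policy_i^*)\bigr).
\]
It therefore suffices to show that with probability at least $1-\delta$, the right-hand side is bounded by $\epsilon$ uniformly in $i \in [\NumDemos]$ and $j \in [\NumConst]$.

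\textbf{Step 2: Coordinate-wise concentration.} The estimate $\hat{\FeatureFunction}(\policy_i^*)$ is an empirical mean of $\NumTraj$ i.i.d.\ discounted trajectory features. Each coordinate of $\FeatureFunction(\trajectory)$ is a discounted sum of quantities in $[0,1]$ and hence bounded by $1/(1-\DiscountFactor)$, so by McDiarmid's inequality we have, for each coordinate $k$,
\[
\Pr\bigl(|\hat{\FeatureFunction}(\policy_i^*)_k - \FeatureFunction(\policy_i^*)_k| > t\bigr) \;\leq\; 2\exp\!\bigl(-2\NumTraj\, t^2 (1-\DiscountFactor)\bigr).
\]
Union-bounding over the $d$ coordinates, $\NumConst$ constraints, and $\NumDemos$ demonstrations, and solving for $\NumTraj$ yields the stated sample complexity $\NumTraj > d \log(\NumConst\NumDemos/\delta)/(2\epsilon^2(1-\DiscountFactor))$, provided $t$ is chosen compatibly with the scale of the cost parameters $\phi_j$ (the factor of $d$ appearing in the numerator is precisely what the union bound over coordinates produces).

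\textbf{Step 3: From coordinate error to inner-product error.} Combining the high-probability $\ell_\infty$-bound on $\hat{\FeatureFunction}(\policy_i^*) - \FeatureFunction(\policy_i^*)$ with a bound on $\phi_j$ (implied by $\cost_j \in [0,1]$ and the feature representation) controls the inner product $\phi_j^T(\hat{\FeatureFunction}(\policy_i^*) - \FeatureFunction(\policy_i^*))$ by $\epsilon$. Plugging back into Step 1 gives $\CumulativeCost_j(\policy) - \threshold_j \leq \epsilon$ simultaneously for all $j$, which is the claim.

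\textbf{Main obstacle.} The delicate part is the bookkeeping in Step 2--3: calibrating the coordinate tolerance $t$, the cost-parameter norm bound, and the exponents of $(1-\DiscountFactor)$ so that the sample complexity comes out as stated. An $\ell_\infty/\ell_1$ Hölder split is the most direct route; a slightly tighter argument would apply scalar Hoeffding directly to the bounded random variable $\phi_j^T \FeatureFunction(\trajectory) \in [0,1/(1-\DiscountFactor)]$ per $(i,j)$-pair, at the cost of losing the explicit geometric interpretation in feature space.
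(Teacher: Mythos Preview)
Your Step~1 reduction is exactly the paper's argument: write $\FeatureFunction(\policy)$ as a convex combination of the $\hat{\FeatureFunction}(\policy_i^*)$, use linearity of $\CumulativeCost_j$ and safety of the true $\FeatureFunction(\policy_i^*)$, and reduce to controlling $\phi_j^T(\hat{\FeatureFunction}(\policy_i^*)-\FeatureFunction(\policy_i^*))$ uniformly over $(i,j)$.

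The divergence is in Steps~2--3. The paper does \emph{not} go coordinate-wise: it applies McDiarmid directly to the scalar $\phi_j^T\hat{\FeatureFunction}(\policy_i^*)$, using that $\phi_j^T\FeatureFunction(\tau)$ lies in $[0,\,d/(1-\DiscountFactor)]$ (their Lemma on feature-expectation concentration). That range bound is where the multiplicative $d$ enters the exponent, and the union bound is then only over the $\NumConst\NumDemos$ pairs $(i,j)$, giving the $\log(\NumConst\NumDemos/\delta)$ term. In other words, the paper's route is precisely the ``slightly tighter argument'' you mention at the end, except that they use the looser range $d/(1-\DiscountFactor)$ rather than $1/(1-\DiscountFactor)$.

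Your primary route has a bookkeeping error that does not wash out. A union bound over $d$ coordinates contributes an additive $\log d$ inside the logarithm, not a multiplicative $d$ outside it; so coordinate-wise McDiarmid plus a union over $d\cdot\NumConst\NumDemos$ events yields $\log(d\NumConst\NumDemos/\delta)$, not $d\log(\NumConst\NumDemos/\delta)$. To then pass from the $\ell_\infty$ coordinate error to the inner product via H\"older you need a bound on $\|\phi_j\|_1$, and the assumption $\cost_j(\state,\action)\in[0,1]$ with $\FeatureFunction\in[0,1]^d$ does not by itself give you a useful $\ell_1$ bound on $\phi_j$. If you instead shrink the per-coordinate tolerance to absorb a dimension factor, you pick up $d^2$ and $(1-\DiscountFactor)^2$ rather than the stated $d$ and $(1-\DiscountFactor)$. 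So the coordinate-wise path, as sketched, does not reproduce the theorem's constants; the direct-scalar path you flagged as an alternative is the one that works and is what the paper uses.
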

We can extend our analysis to derive convergence bounds for exactly optimal or Boltzmann-rational demonstrations, similar to \Cref{thm:convergence-noise-free,thm:convergence-boltzmann}. The exact bounds are provided in \Cref{app:estimated_features_proofs}. Importantly, we get no regret asymptotically in both cases.

Alternatively, we could leverage confidence sets around $\hat{\FeatureFunction}(\policy_i^*)$ to construct a conservative safe set. This approach guarantees exact safety and convergence as long as the confidence intervals shrink. However, it involves constructing a \emph{guaranteed hull}~\citep{sember2011guarantees}, which can be computationally more expensive than constructing a simple convex hull. The guaranteed hull can also be overly conservative in practice. See \Cref{app:conservative_safe_set_from_estimated_features} for a detailed discussion. 

Crucially, \AlgNameShort can be applied in environments with continuous state and action spaces and nonlinear policy classes, as long as the feature expectations can be computed or estimated.

\subsection{Practical Implementation}\label{sec:implementation}

To implement \AlgNameShort, we first compute or estimate the demonstrations' feature expectations $\FeatureFunction(\policy_i^*)$. Then, we use the Quickhull algorithm~\citep{barber1996quickhull} to construct a convex hull. Finally, we solve the inferred CMDP from \Cref{thm:estimated-cmdp} using a constrained RL algorithm, the choice depending on the environment.
Constructing the convex hull only has negligible computational cost compared to solving the inferred CMDP. For very high-dimensional environments or large numbers of demonstrations it may become beneficial to approximate the convex hull instead.

We make two further extensions to enhance the robustness of this basic algorithm: we use projections to handle degenerate safe sets and we incrementally expand the safe set.
We discuss them briefly here, and in
\Cref{app:implementation_details} in more detail.

\paragraph{Handling degenerate safe sets.} The demonstrations might lie in a lower-dimensional subspace of $\reals^d$ (i.e., have a rank less than $d$). In that case, we project them onto this lower-dimensional subspace, construct the convex hull in that space, and then project back the resulting convex hull to $\reals^d$. This is beneficial because Quickhull is not specifically designed to handle degenerate convex hulls.

\paragraph{Incrementally expand the safe set.} If we construct $\safeset$ from all demonstrations, it will have many redundant vertices, which can result in numerical instabilities. To avoid this, we incrementally add points from $\demonstrations$ to $\safeset$. We greedily add the point furthest away from $\safeset$ until the distance of the furthest point is less than some $\varepsilon > 0$. In addition to mitigating numerical issues, this approach reduces the number of constraints in the inferred CMDP, making it easier to solve.
Importantly, we do not loose any safety guarantees.

\section{Related Work}\label{sec:related_work}

\paragraph{Constraint learning in RL.}
Previous work on safe RL typically assumes fully known safety constraints~\citep[e.g., see the review by][]{garcia2015comprehensive}. More recent research on \emph{constraint learning} addresses this limitation. \citet{scobee2019maximum} and \citet{stocking2021discretizing} use maximum likelihood estimation to learn constraints from demonstrations with known rewards in tabular and continuous environments, respectively. \citet{anwar2020inverse} propose a more scalable algorithm based on maximum entropy IRL, and extensions to using maximum causal entropy IRL have been explored subsequently~\citep{glazier2021making, mcpherson2021maximum, baert2023maximum}. \citet{papadimitriou2021bayesian} adapt Bayesian IRL to learning constraints via a Lagrangian formulation. However, all of these approaches assume full knowledge of the demonstrations' reward functions. In contrast, \citet{chou2020learning} propose a method that allows for parametric uncertainty about the rewards, but it requires fully known environment dynamics and does not apply to general CMDPs.

\paragraph{Multi-task IRL.}
Multi-task IRL addresses the problem of learning from demonstrations in different but related tasks. Some methods reduce multi-task IRL to multiple single-task IRL problems~\citep{babes2011apprenticeship,choi2012nonparametric}, but they do not fully leverage the similarity between demonstrations. Others treat multi-task IRL as a meta-learning problem, focusing on quickly adapting to new tasks~\citep{dimitrakakis2012bayesian, xu2019learning,yu2019meta,wang2021meta}. \citet{amin2017repeated} study \emph{repeated IRL} where the reward for different tasks is split into a task-specific component and a shared component, similar to the shared constraints in our setting. However, in general, IRL-based methods are challenging to adapt to CMDPs where safety is crucial (see \Cref{sec:irl_limitations}).

\paragraph{Learning constraints for driving behavior.} 
In the domain of autonomous driving, inferring constraints from demonstrations has attracted significant attention due to safety concerns. \citet{rezaee2022not} extend the method by \citet{scobee2019maximum} with a VAE-based gradient descent optimization to learn driving constraints. \citet{liu2022benchmarking} propose a benchmark for constraint learning based on highway driving trajectories, and \citet{gaurav2022learning} evaluate their constraint learning method on a similar dataset. However, all these approaches assume fully known reward functions, which is often unrealistic beyond basic highway driving scenarios.

\paragraph{Bandits \& learning theory.}
Our approach draws inspiration from constraint inference in other fields. For instance, learning constraints has been studied in best-arm identification in linear bandits with known rewards~\citep{lindner2022interactively,camilleri2022active}. Our problem is also related to one-class classification~\citep[e.g.,][]{khan2014one}, the study of random polytopes~\citep[e.g.,][]{baddeley2007random}, and PAC-learning of convex polytopes~\citep[e.g.,][]{gottlieb2018learning}. However, none of these methods directly apply to the structure of our problem.

\section{Experiments}\label{sec:experiments}

\begin{figure*}[t]
\centering
\includegraphics[width=0.9\linewidth]{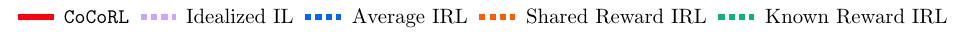} \\
\hspace{-1.5em}
\begin{subfigure}[b]{0.329\linewidth}
    \centering
    \includegraphics[width=1.03\linewidth]{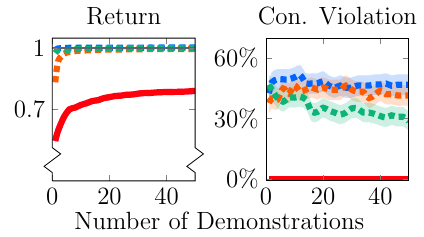}
    \caption{Single environment.}
    \label{subfig:gridworld_exp1}
\end{subfigure}
\begin{subfigure}[b]{0.329\linewidth}
    \centering
    \includegraphics[width=1.03\linewidth]{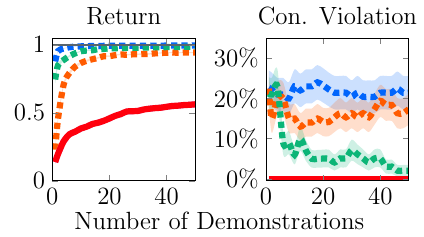}
    \caption{Transfer to new task.}
    \label{subfig:gridworld_exp2}
\end{subfigure}
\begin{subfigure}[b]{0.329\linewidth}
    \centering
    \includegraphics[width=1.03\linewidth]{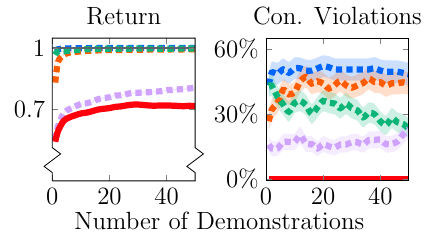}
    \caption{Transfer to new environment.}
    \label{subfig:gridworld_exp3}
\end{subfigure}
\caption{Experimental results in Gridworld environments. We consider three settings: (\subref{subfig:gridworld_exp1}) no constraint transfer, (\subref{subfig:gridworld_exp2}) transferring constraints to new goals in the same grid, and (\subref{subfig:gridworld_exp3}) transferring constraints to a new Gridworld with the same structure but different transition dynamics. For each setting, we measure the normalized policy return (\textbf{higher is better}), and the constraint violation (\textbf{lower is better}). The plots show mean and standard errors over 100 random seeds. 
\AlgNameShort consistently returns safe solutions, outperforming the IRL-based methods that generally perform worse and are unsafe. The IL baseline performs exactly the same as \AlgNameShort with no environment transfer (the lines overlap in plots \subref{subfig:gridworld_exp1} and \subref{subfig:gridworld_exp2}), but it produces unsafe solutions with transfer (\subref{subfig:gridworld_exp3}).
A return greater than 1 indicates a solution that surpasses the best safe policy, implying a constraint violation.
}
\label{fig:gridworld_results}
\end{figure*}

We present experiments in two domains: (1) tabular environments, where we can solve MDPs and CMDPs exactly (\Cref{sec:tabular-experiments}); and (2) a driving simulation, where we investigate the scalability of \AlgNameShort in a more complex environment (\Cref{sec:highway-experiments}). Our experiments assess the safety and performance of \AlgNameShort and compare it to IRL-based baselines (\Cref{sec:irl-baselines}). In \Cref{app:experiment_details}, we provide more details about the experimental setup, and in \Cref{app:additional_results} we present additional experimental results, including an extensive study of \AlgNameShort in one-state CMDPs.
We provide code for our experiments at \githublink.

\subsection{Baselines}\label{sec:irl-baselines}

To the best of our knowledge, \AlgNameShort is the first algorithm that learns constraints from demonstrations with unknown rewards. Thus it is not immediately clear which baselines to compare it to. Nevertheless, we explore several approaches based on IRL and imitation learining (IL) that could serve as natural starting points for solving the constraint learning problem.

\paragraph{IRL baselines.}
We consider the following three ways of applying IRL:
(1) \emph{Average IRL} infers a separate reward function $\hat{\reward}_i$ for each demonstration $\policy_i^*$ and averages the inferred rewards before combining them with an evaluation reward $\evalreward + \sum_i \hat{\reward}_i / \NumDemos$. 
(2) \emph{Shared Reward IRL} parameterizes the inferred rewards as $\hat{\reward}_i + \hat{\cost}$, where $\hat{\cost}$ is shared among all demonstrations. The parameters for both the inferred rewards and the shared constraint penalty are learned simultaneously.
(3) \emph{Known Reward IRL} parameterizes the inferred reward as $\reward_i + \hat{\cost}$, where $\reward_i$ is known, and only a shared constraint penalty is learned. Note that \emph{Known Reward IRL} has full access to the demonstrations' rewards that the other methods do not need. Each of these approaches can be implemented with any standard IRL algorithm. In this section, we present results using Maximum Entropy IRL~\citep{ziebart2008maximum}, but in \Cref{app:additional_results} we also test Maximum Margin IRL~\citep{ng2000algorithms} and obtain qualitatively similar results.

\paragraph{Idealized IL baseline.}
Imitation learning (IL) uses supervised learning to imitate a demonstration \citep{hussein2017imitation}. We can run IL on our set of demonstrations to obtain a set of safe policies. As a baseline, we consider an algorithm that remembers these IL policies and for each evaluation reward $\evalreward$ chooses the best policy from the IL step. We implement this baseline using an idealized form of imitation learning: we directly use the policies that generated the demonstrations. This gives us an upper bound on the possible performance of a real IL method.
For linear rewards and constraints, this algorithm will return the same policies as \AlgNameShort. However it is expensive to run in practice and, as we will see, not robust to changes in the evaluation task or the environment.

For more details about the baselines, see \Cref{app:irl-baselines}.

\subsection{Gridworld Environments}\label{sec:tabular-experiments}

\begin{figure*}
\centering
\includegraphics[width=0.9\linewidth]{plots/gridworld_legend.pdf} \\
\begin{subfigure}[b]{0.329\linewidth}
    \centering
    \includegraphics[width=1.03\linewidth]{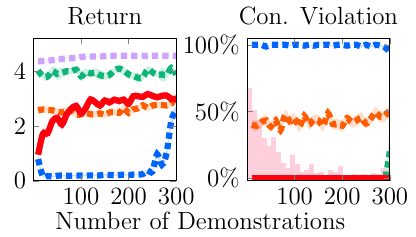}
    \caption{Single environment.}
\end{subfigure}
\begin{subfigure}[b]{0.329\linewidth}
    \centering
    \includegraphics[width=1.03\linewidth]{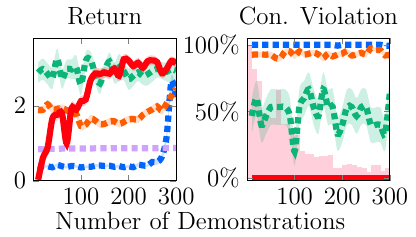}
    \caption{Transfer to new task.}
\end{subfigure}
\begin{subfigure}[b]{0.329\linewidth}
    \centering
    \includegraphics[width=1.03\linewidth]{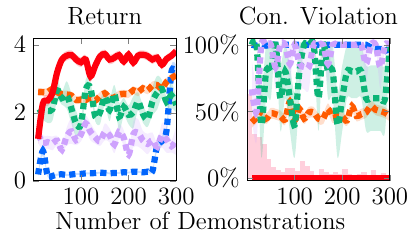}
    \caption{Transfer to new environment.}
\end{subfigure}
\caption{
Return and constraint violation in the \texttt{highway-env} intersection environment.
All plots show mean and standard error over $5$ random seeds with a fixed set of evaluation rewards for each setting.
In all settings, \AlgNameShort consistently returns safe policies, as indicated by the low constraint violation values. However, there are instances where \AlgNameShort falls back to providing a default safe solution when the policy optimizer fails to find a feasible solution within the safe set $\safeset$. The frequency of falling back to the default solution is shown by the bars in the constraint violation plots (\legendCocorlAbstained). In contrast, the IRL method often yields unsafe solutions. IL outperforms \AlgNameShort because we implement an idealized version with perfect imitation. However, for task transfer IL performs much worse than \AlgNameShort and for environment transfer it produces unsafe solutions.
}
\label{fig:highway_results}
\end{figure*}

We first consider a set of Gridworld environments which we can, conveniently, solve using linear programming~\citep{altman1999constrained}. 
The Gridworlds have goal cells and ``limited'' cells. When the agent reaches a goal cell, it receives a reward; however, a set of constraints restrict how often the agent can visit the limited cells.

To introduce stochasticity, there is a fixed probability $p$ that the agent executes a random action instead of the intended one. The positions of the goal and limited cells are sampled uniformly at random. The reward and cost values are sampled uniformly from the interval $[0, 1]$. The thresholds are also sampled uniformly, while we ensure feasibility via rejection sampling.

We perform three experiments to evaluate the transfer of learned constraints. First, we evaluate the learned constraints in the same environment and with the same reward distribution that they were learned from. Second, we change the reward distribution by sampling a new set of potential goals. Third, we use a stochastic Gridworld ($p=0.2$) during training and a deterministic one ($p=0$) during evaluation.

\Cref{fig:gridworld_results} presents the results of the experiments in $10 \times 10$ Gridworlds with $20$ goals and $10$ limited cells. \AlgNameShort consistently returns safe policies and converges to the best safe solution as more demonstrations are provided. In contrast, none of the IRL-based methods learns safe policies. The IL baseline performs exactly the same as \AlgNameShort without transfer and when transferring to a new task,\footnote{
Which is expected in tabular environments, as discussed in \Cref{app:irl-baselines}.
}
but it produces unsafe solutions if the environment dynamics change.
In the on-distribution evaluation shown in \Cref{subfig:gridworld_exp1}, the IRL methods come closest to the desired safe policy but still cause many constraint violations.

\subsection{Driving Environment}\label{sec:highway-experiments}

To explore a more practical and interesting environment, we consider \texttt{highway-env}, a 2D driving simulator~\citep{highway-env}. We focus on a four-way intersection, as depicted in \Cref{fig:headline_figure}. The agent drives towards the intersection and can turn left, right, or go straight.
The environment has a continuous state space, but a discrete high-level action space: the agent can accelerate, decelerate, and make turns at the intersection. There are three possible goals for the agent: turning left, turning right, or going straight. Additionally, the reward functions contain different driving preferences related to velocity and heading angle. Rewards and constraints are defined as linear functions of a state-feature vector that includes goal indicators, vehicle position, velocity, heading, and indicators for unsafe events including crashing and leaving the street.

For constrained policy optimization, we use a constrained cross-entropy method (CEM) based on \citet{wen2018constrained} to optimize a parametric driving controller (see \Cref{app:implementation_details} for details). We collect synthetic demonstrations that are optimized for different reward functions under a shared set of constraints.

Similar to the previous experiments, we consider three settings: (1) evaluation on the same task distribution and environment; (2) transfer to a new task; and (3) transfer to a modified environment. To transfer to a new task, we learn the constraints from trajectories only involving left and right turns at the intersection, but evaluate them on a reward function that rewards going straight (the situation illustrated in \Cref{fig:headline_figure}). For the transfer to a modified environment, we infer constraints in an environment with very defensive drivers and evaluate them with very aggressive drivers. \Cref{app:experiment_details} contains additional details on the parameterization of the drivers.

Empirically, we observe that the CEM occasionally fails to find a feasible policy when the safe set $\safeset$ is small. This is likely due to the CEM relying heavily on random exploration to discover an initial feasible solution. In situations where we cannot find a solution in $\safeset$, we return a ``default'' safe controller that remains safe but achieves a return of 0 because it just stands still before the intersection. Importantly, we empirically confirm that whenever we find a solution in $\safeset$, it is truly safe (i.e., in $\truesafeset$).

\Cref{fig:highway_results} presents results from the driving experiments. \AlgNameShort consistently guarantees safety, and with a large enough number of demonstrations ($\sim 200$), \AlgNameShort returns high return policies. In contrast, IRL produces policies with decent performance in terms of return but they tend to be unsafe.

Although the IRL solutions usually avoid crashes (which also result in low rewards), they disregard constraints that are in conflict with the reward function, such as the speed limits and keeping distance to other vehicles. IRL results in substantially more frequent constraint violations when trying to transfer the learned constraint penalty to a new reward. This is likely because the magnitude of the constraint penalty is no longer correct.

The IL baseline performs better than \AlgNameShort when evaluated in-distribution because our idealized implementation avoid all issues related to optimizing for a policy. However, IL clearly fails in both transfer settings. IL can not learn a policy for a new task that is not in the demonstrations, and it might learn policies via imitation that are no longer safe if the environment dynamics change.
\AlgNameShort does not suffer from this issue, and transferring the inferred constraints to a new reward still results in safe and well-performing policies.

\section{Conclusion}

We introduced \AlgNameShort to infer shared constraints from demonstrations with unknown rewards. Theoretical and empirical results show that \AlgNameShort guarantees safety and achieves strong performance, even when transferring constraints to new tasks or environments.

\AlgNameShort's main limitation is assuming safe demonstrations and access to a feature representation in non-tabular environments. Learning from potentially unsafe demonstrations and learning features to represent constraints are exciting directions for future work.

Learning constraints has the potential to make RL more sample-efficient and safe. Importantly, \AlgNameShort can use unlabeled demonstrations, which are often easier to obtain than labelled demonstrations. Thus, \AlgNameShort can make learning constraints more practical in a range of applications.

\section*{Acknowledgements}

This project was supported by
the Microsoft Swiss Joint Research Center (Swiss JRC),
the Swiss National Science Foundation (SNSF) under NCCR Automation, grant agreement 51NF40 180545,
the European Research Council (ERC) under the European Union’s Horizon 2020 research and innovation programme grant agreement No 815943,
the Vienna Science and Technology Fund (WWTF) [10.47379/ICT20058],
the Open Philanthropy AI Fellowship,
and the Vitalik Buterin PhD Fellowship.
We thank Yarden As for valuable discussions throughout the project.

\newcommand{\ICML}{Proceedings of International Conference on Machine Learning (ICML)}
\newcommand{\RSS}{Proceedings of Robotics: Science and Systems (RSS)}
\newcommand{\NeurIPS}{Advances in Neural Information Processing Systems}
\newcommand{\IJCAI}{Proceedings of International Joint Conferences on Artificial Intelligence (IJCAI)}
\newcommand{\ICLR}{International Conference on Learning Representations (ICLR)}
\newcommand{\CoRL}{Conference on Robot Learning (CoRL)}
\newcommand{\UAI}{Uncertainty in Artificial Intelligence (UAI)}
\newcommand{\AAAI}{AAAI Conference on Artificial Intelligence}
\newcommand{\COLT}{Conference on Learning Theory (COLT)}
\newcommand{\AISTATS}{International Conference on Artificial Intelligence and Statistics (AISTATS)}
\newcommand{\TMLR}{Transactions on Machine Learning Research (TMLR)}

\bibliographystyle{abbrvnat}
\bibliography{references}

\section*{Checklist}

 \begin{enumerate}

 \item For all models and algorithms presented, check if you include:
 \begin{enumerate}
   \item A clear description of the mathematical setting, assumptions, algorithm, and/or model. \textbf{Yes}
   \item An analysis of the properties and complexity (time, space, sample size) of any algorithm. \textbf{Yes}
   \item (Optional) Anonymized source code, with specification of all dependencies, including external libraries. \textbf{Yes}
 \end{enumerate}

 \item For any theoretical claim, check if you include:
 \begin{enumerate}
   \item Statements of the full set of assumptions of all theoretical results. \textbf{Yes}
   \item Complete proofs of all theoretical results. \textbf{Yes}
   \item Clear explanations of any assumptions. \textbf{Yes}
 \end{enumerate}

 \item For all figures and tables that present empirical results, check if you include:
 \begin{enumerate}
   \item The code, data, and instructions needed to reproduce the main experimental results (either in the supplemental material or as a URL). \textbf{Yes}
   \item All the training details (e.g., data splits, hyperparameters, how they were chosen). \textbf{Yes}
         \item A clear definition of the specific measure or statistics and error bars (e.g., with respect to the random seed after running experiments multiple times). \textbf{Yes}
         \item A description of the computing infrastructure used. (e.g., type of GPUs, internal cluster, or cloud provider). \textbf{Yes}
 \end{enumerate}

 \item If you are using existing assets (e.g., code, data, models) or curating/releasing new assets, check if you include:
 \begin{enumerate}
   \item Citations of the creator If your work uses existing assets. \textbf{Yes}
   \item The license information of the assets, if applicable. \textbf{Yes}
   \item New assets either in the supplemental material or as a URL, if applicable. \textbf{Yes}
   \item Information about consent from data providers/curators. \textbf{Not Applicable}
   \item Discussion of sensible content if applicable, e.g., personally identifiable information or offensive content. \textbf{Not Applicable}
 \end{enumerate}

 \item If you used crowdsourcing or conducted research with human subjects, check if you include:
 \begin{enumerate}
   \item The full text of instructions given to participants and screenshots. \textbf{Not Applicable}
   \item Descriptions of potential participant risks, with links to Institutional Review Board (IRB) approvals if applicable. \textbf{Not Applicable}
   \item The estimated hourly wage paid to participants and the total amount spent on participant compensation. \textbf{Not Applicable}
 \end{enumerate}

 \end{enumerate}

\clearpage
\appendix\onecolumn

\renewcommand \thepart{}
\renewcommand \partname{}
\doparttoc
\faketableofcontents
\part{Appendix}
\parttoc

\section{Proofs of Theoretical Results}
\label{app:proofs}

This section provides all proofs omitted in the main paper, as well as a few additional results related to estimating feature expectations (\Cref{app:estimated_features_proofs}).

\subsection{Limitations of IRL in CMDPs}

\IRLUnsafe*

\begin{proof}
This follows from the fact that there are CMDPs for which all feasible policies are stochastic. Every MDP, on the other hand, has a deterministic policy that is optimal~\citep{puterman1990markov}.

For example, consider a CMDP with a single state $\StateSpace = \{ \state_1 \}$ and two actions $\ActionSpace = \{ \action_1, \action_2 \}$ with $\TransitionModel(\state_1 | \state_1, \action_1) = \TransitionModel(\state_1, | \state_1, \action_2) = 1$.
We have two cost functions $\cost_1(\state_1, \action_1) = 1, \cost_1(\state_1, \action_2) = 0$ and $\cost_2(\state_1, \action_1) = 0, \cost_2(\state_1, \action_2) = 1$, with thresholds $\threshold_1 = \threshold_2 = \frac12$, and discount factor $\DiscountFactor = 0$. To be feasible, a policy needs to have an occupancy measure $\Occupancy_\policy$ such that
$
\sum_{\state,\action} \Occupancy(\state,\action) \cost_1(\state, \action) \leq \threshold_1
$
and
$
\sum_{\state,\action} \Occupancy_\policy(\state,\action) \cost_2(\state, \action) \leq \threshold_2
$.

In our case with a single state and $\discount = 0$, this simply means that a feasible policy $\policy$ needs to satisfy $\policy(\action_1 | \state_1) \leq \frac12$ and $\policy(\action_2 | \state_1) \leq \frac12$. But this implies that only a uniformly random policy is feasible.

Any IRL algorithm infers a reward function that matches the occupancy measure of the expert policy \citep{abbeel2004apprenticeship}. Consequently, the inferred reward $\reward_{\text{IRL}}$ needs to give both actions the same reward, because otherwise $\policy^*$ would not be optimal for the MDP with $\reward_{\text{IRL}}$. However, that MDP also has a deterministic optimal policy (like any MDP), which is not feasible in the original CMDP.
\end{proof}

\IRLKnownRewardFails*

\begin{proof}
As in the proof of \Cref{thm:irl-unsafe}, we consider a single-state CMDP with $\StateSpace = \{ \state_1 \}$ and $\ActionSpace = \{ \action_1, \action_2 \}$, where $\action_1$.
We have $\cost_1(\state_1, \action_1) = 1, \cost_1(\state_1, \action_2) = 0$ and $\cost_2(\state_1, \action_1) = 0, \cost_2(\state_1, \action_2) = 1$, $\threshold_1 = \threshold_2 = \frac12$, and $\discount = 0$. Again, the only feasible policy uniformly randomizes between $\action_1$ and $\action_2$.

Let $\reward_1(\action_1) = 1$, $\reward_1(\action_2) = 1$, $\reward_2(\action_1) = 0$, $\reward_2(\action_2) = 1$, i.e., the first reward function gives equal reward to both actions and the second reward function gives inequal rewards (for simplicity we write $\reward(\action) = \reward(\state, \action)$). To make the uniformly random policy optimal in $\MDP \cup \{ \reward_1 + \hat{\cost} \}$ and $\MDP \cup \{ \reward_2 + \hat{\cost} \}$, both inferred reward functions $\reward_1 + \hat{\cost}$ and $\reward_2 + \hat{\cost}$ need to give both actions equal rewards. However this is clearly impossible. If $\reward_1(\action_2) + \hat{\cost}(\action_2) - (\reward_1(\action_1) + \hat{\cost}(\action_1)) = 0$, then
\begin{align*}
& \reward_2(\action_2) + \hat{\cost}(\action_2) - (\reward_2(\action_1) + \hat{\cost}(\action_1)) \\
= &\reward_2(\action_2) - \reward_1(\action_2) + \reward_1(\action_2) + \hat{\cost}(\action_2) - (\reward_2(\action_1) - \reward_1(\action_1) + \reward_1(\action_1) + \hat{\cost}(\action_1)) \\
= &\underbrace{\reward_2(\action_2) - \reward_1(\action_2)}_{= 0} - \underbrace{(\reward_2(\action_1) - \reward_1(\action_1))}_{= -1} + \underbrace{\reward_1(\action_2) + \hat{\cost}(\action_2) - (\reward_1(\action_1) + \hat{\cost}(\action_1))}_{= 0}
= 1 \neq 0.
\end{align*}
\end{proof}

\subsection{Safety Guarantees}

\TrueSafeSetConvex*

\begin{proof}
If $\policy_1, \policy_2 \in \truesafeset$, then we have $\CumulativeCost_j(\policy_1) \leq \threshold_j$ and $\CumulativeCost_j(\policy_2) \leq \threshold_j$ for any $j$. Further, also for any $j$, we have
$
\CumulativeCost_j(\bar{\policy}_{12}) = \phi_j^T \FeatureFunction(\bar{\policy}_{12})
= \lambda \phi_j^T \FeatureFunction(\policy_1) + (1-\lambda) \phi_j^T \FeatureFunction(\policy_2) \leq \threshold_j
$. Thus, $\bar{\policy}_{12} \in \truesafeset$.
\end{proof}

\SafeSetIsSafe*

\begin{proof}
Our demonstrations $\policy_1^*, \dots, \policy_\NumDemos^*$ are all in the true safe set $\truesafeset$, and any $\policy \in \safeset$ is a convex combination of them. Given \Cref{lem:true-safe-set-convex}, we have $\safeset \subseteq \truesafeset$.
\end{proof}

\EstimatedCMDP*

\begin{proof}
The convex hull of a set of points is a convex polyhedron, i.e., it is the solution of a set of linear equations~\citep[see, e.g., Theorem 2.9 in][]{pulleyblank1983polyhedral}.

Hence, $\safeset$ is a convex polyhedron in the feature space defined by $\FeatureFunction$, i.e., we can find $A \in \reals^{p \times d}, \vb \in \reals^p$ such that
\[
\safeset = \{ x \in \reals^d | A x \leq \vb \}.
\]
To construct $A$ and $\vb$, we need to find the facets of $\safeset$, the convex hull of a set of points $\demonstrations$. This is a standard problem in polyhedral combinatorics~\citep[e.g., see][]{pulleyblank1983polyhedral}.

We can now define $p$ linear constraint functions $\hat{\cost}_j(\state, \action) = A_j \FeatureFunction(\state, \action)$ and thresholds $\hat{\threshold}_j = b_j$, where $A_j$ is the $j$-th row of $A$ and $b_j$ is the $j$-th component of $\vb$ (for $j = 1, \dots, p$). Then, solving the CMDP
$
(\StateSpace, \ActionSpace, \TransitionModel, \initdist, \DiscountFactor, \evalreward, \{ \hat{\cost}_j \}_{j=1}^p \{ \hat{\threshold}_j \}_{j=1}^p)
$
corresponds to solving
\[
\policy^* \in \argmax_{\hat{\CumulativeCost}_1(\policy) \leq \hat{\threshold}_1, \dots, \hat{\CumulativeCost}_p(\policy) \leq \hat{\threshold}_p} \PolicyReturn_{\evalreward}(\policy) .
\]
\looseness -1
For these linear cost functions, we can rewrite the constraints using the discounted feature expectations as $\CumulativeCost_i(\policy) = A_i \FeatureFunction(\policy)$. Hence, $\policy$ satisfying the constraints $\hat{\CumulativeCost}_1(\policy) \leq \hat{\threshold}_1, \dots, \hat{\CumulativeCost}_p(\policy) \leq \hat{\threshold}_p$ is equivalent to $\policy \in \safeset$.
\end{proof}

\SafeSetOptimal*

\begin{proof}
Using \Cref{thm:estimated-cmdp}, we can construct a set of cost functions $\{ \cost_j \}_{j=1}^\NumConst$ and thresholds $\{ \threshold_j \}_{j=1}^\NumConst$ for which $\policy \in \safeset$ is equivalent to $\CumulativeCost_1(\policy) \leq \threshold_1, \dots, \CumulativeCost_\NumConst(\policy) \leq \threshold_\NumConst$. Hence, in a CMDP with these cost functions and thresholds, any policy $\policy \notin \safeset$ is not feasible.
Because $\policy_1^*, \dots, \policy_\NumDemos^* \in \safeset$ by construction, each $\policy^*_i$ is optimal in the CMDP $(\StateSpace, \ActionSpace, \TransitionModel, \initdist, \DiscountFactor, \reward_i, \{ \cost_j \}_{j=1}^\NumConst, \{ \threshold_j \}_{j=1}^\NumConst)$.
\end{proof}

\subsection{Optimality Guarantees}

\begin{lemma}\label{thm:bounded-features-rewards}
For any policy $\policy$, if $\reward: \StateSpace \times \ActionSpace \to [0, 1]$ and $\FeatureFunction: \StateSpace \times \ActionSpace \to [0, 1]^d$, we can bound
\begin{align*}
\forall i:~ \FeatureFunction(\policy)_i &\leq \frac{1}{1 - \DiscountFactor}
&\| \FeatureFunction(\policy) \|_2 \leq \frac{\sqrt{d}}{1-\DiscountFactor} \\
\PolicyReturn_\reward(\policy) &\leq \frac{d}{1-\DiscountFactor}
&\regret(\policy, \safeset) \leq \frac{2d}{1-\DiscountFactor}
\end{align*}
\end{lemma}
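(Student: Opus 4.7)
The plan is to derive the four bounds in turn, each following directly from the coordinate-wise ranges of $\FeatureFunction$ and $\reward$ together with the geometric series identity $\sum_{t=0}^\infty \DiscountFactor^t = 1/(1-\DiscountFactor)$. The proof is almost entirely bookkeeping, and I do not expect any substantial obstacle; the only choice to make is how loose to let the constants be.

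First, for each coordinate $i$ I would expand the definition $\FeatureFunction(\policy)_i = \Expectation_{\TransitionModel,\policy}[\sum_{t=0}^\infty \DiscountFactor^t \FeatureFunction(\state_t,\action_t)_i]$, push the expectation inside the sum (by Tonelli, since the summands are non-negative), apply the pointwise bound $\FeatureFunction(\state,\action)_i \in [0,1]$ term by term, and evaluate the resulting geometric series. This yields the componentwise bound $\FeatureFunction(\policy)_i \leq 1/(1-\DiscountFactor)$. The $\ell_2$-norm bound then follows by the elementary inequality $\|x\|_2 \leq \sqrt{d}\,\|x\|_\infty$ applied to $x = \FeatureFunction(\policy) \in \reals^d$.

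For the return, the identical geometric-series argument applied directly to $\reward(\state,\action) \in [0,1]$ already gives the stronger bound $\PolicyReturn_{\reward}(\policy) \leq 1/(1-\DiscountFactor)$; since $d \geq 1$ this implies the stated $d/(1-\DiscountFactor)$. The factor of $d$ is presumably included to match the alternative bound obtained from the feature decomposition $\PolicyReturn_{\reward}(\policy) = \theta^T \FeatureFunction(\policy) \leq \sum_i |\theta_i|\,\FeatureFunction(\policy)_i$ under $\|\theta\|_\infty \leq 1$, in which case summing $d$ componentwise feature bounds gives exactly $d/(1-\DiscountFactor)$. I would present whichever form is most convenient for the downstream use.

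Finally, for the regret, note that $\safeset \subseteq \truesafeset$ by \Cref{thm:safe-set-is-safe}, so $\regret(\reward,\safeset) = \max_{\policy \in \truesafeset} \PolicyReturn_{\reward}(\policy) - \max_{\policy \in \safeset} \PolicyReturn_{\reward}(\policy)$ is non-negative. Applying the return bound to the first term and using $\PolicyReturn_{\reward} \geq 0$ on the second already gives $\regret(\reward,\safeset) \leq d/(1-\DiscountFactor)$; a looser triangle-inequality bound on $|\max - \max|$ gives the stated $2d/(1-\DiscountFactor)$. The only mild subtlety is ensuring that both maxima are well defined, which holds as long as at least one demonstration is available (so $\safeset$ is non-empty) and the true CMDP is feasible.
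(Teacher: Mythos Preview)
Your proposal is correct and follows essentially the same route as the paper: geometric series for the componentwise feature bound, then the elementary $\ell_2$--$\ell_\infty$ inequality, then a return bound, then a difference bound for the regret.

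The one minor divergence is in the return bound. The paper obtains $\PolicyReturn_\reward(\policy)\le d/(1-\DiscountFactor)$ via Cauchy--Schwarz, writing $\PolicyReturn_\reward(\policy)=\theta^T\FeatureFunction(\policy)\le\|\theta\|_2\|\FeatureFunction(\policy)\|_2$ and asserting $\|\theta\|_2\le\sqrt{d}$ from $\reward\in[0,1]$. Your direct geometric-series argument on $\reward(\state,\action)\in[0,1]$ is both simpler and tighter (yielding $1/(1-\DiscountFactor)$), and it sidesteps the need to justify the $\|\theta\|_2$ bound from the pointwise reward constraint. Your observation that the regret can likewise be bounded by $d/(1-\DiscountFactor)$ rather than $2d/(1-\DiscountFactor)$, using $\PolicyReturn_\reward\ge 0$, is also sharper than the paper's two-sided difference bound; the paper simply uses the crude $|\PolicyReturn_\reward(\policy_1)-\PolicyReturn_\reward(\policy_2)|\le 2d/(1-\DiscountFactor)$.
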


\begin{proof}
First, we have for any component of the feature expectations
\[
\FeatureFunction_i(\policy) = \Expectation [ \sum_{t=0}^\infty \DiscountFactor^t \FeatureFunction(\state_t, \action_t) ] \leq \sum_{t=0}^\infty \DiscountFactor^t = \frac{1}{1-\DiscountFactor},
\]
using the limit of the \emph{geometric series}. This immediately gives
\[
\| \FeatureFunction(\policy) \|_2 = \sqrt{ \sum_{i=1}^d \FeatureFunction(\policy)_i^2 } \leq \frac{\sqrt{d}}{1-\DiscountFactor}.
\]

Similarly bounded rewards $\reward(\state, \action) \leq 1$ imply $\| \theta \|_2 \leq \sqrt{d}$. Together, we can bound the returns using the Cauchy-Schwartz inequality
\begin{align*}
\PolicyReturn_\reward(\policy) = \| \theta^T \FeatureFunction(\policy) \|_2 
\leq \| \theta \|_2 \cdot \| \FeatureFunction(\policy) \|_2
\leq \frac{d}{1-\DiscountFactor} .
\end{align*}

Further, for any two policies $\policy_1, \policy_2$, we have $\PolicyReturn_\reward(\policy_1) - \PolicyReturn_\reward(\policy_2) \leq \frac{2d}{1-\DiscountFactor}$, which implies the same for the regret $\regret(\reward, \safeset) \leq \frac{2d}{1-\DiscountFactor}$.
\end{proof}

\ConvergenceNoiseFree*

\begin{proof}
Consider the distribution over optimal policies $P(\FeatureFunction(\policy^*))$ induced by $P(\reward)$.
For each reward $\reward$, there is a vertex of the true safe set that is optimal and thus is in the support of $P(\FeatureFunction(\policy^*))$.
\footnote{Technically, there are degenerate cases where $P(\reward)$ is only supported on reward functions that are orthogonal to the constraint boundaries and we never see demonstrations at the vertices of the true safe set. This is an artifact of the relatively unnatural assumption of noise-free demonstrations. We can avoid such degenerate cases by mild assumptions: either assuming some minimal noise in $P(\reward)$ or assuming the algorithm generating the demonstrations has non-zero probability for all policies optimal for a given reward.}

We can distinguish two cases. \case{1}: we have seen a vertex corresponding to an optimal policy for $\reward$ in the first $\NumDemos$ demonstrations; and, \case{2}: we have not. In \case{1}, we incur $0$ regret, and only in \case{2} we can incur regret greater than $0$.

So, we can bound the probability of incurring regret by the probability of \case{2}:
\begin{align*}
P(\regret(\reward, \safeset_\NumDemos) > 0) \leq \sum_{\text{vertex } \vv} (1-P(\vv))^\NumDemos P(\vv) .
\end{align*}

To ensure $P(\regret(\reward, \safeset_\NumDemos) > 0) \leq \delta$, it is sufficient to ensure that each term of the sum satisfies $(1-P(\vv))^\NumDemos P(\vv) \leq \delta / N_v$ where $N_v$ is the number of vertices. For terms with $P(\vv) \leq \delta / N_v$ this is true for all $\NumDemos$. For the remaining terms with $P(\vv) > \delta / N_v$, we can write
\[
(1 - P(\vv))^\NumDemos P(\vv) \leq (1 - \delta / N_v)^\NumDemos .
\]
So, it is sufficient to ensure
$
(1 - \delta / N_v)^\NumDemos \leq \delta / N_v
$,
which is satisfied once
\[
\NumDemos > \frac{\log(\delta / N_v)}{\log(1 - \delta/N_v)} .
\]
By replacing $N_v$ with a suitable upper bound on the number of vertices, we arrive at the first result.

By \Cref{thm:bounded-features-rewards}, the maximum regret is upper-bounded by $\frac{2 d}{1-\DiscountFactor}$, and, we can decompose the regret as
\begin{align*}
\Expectation_\reward[\regret(\reward, \safeset_k)] \leq \frac{2d}{1-\DiscountFactor} \sum_{\text{vertex } \vv} (1-P(\vv))^\NumDemos P(\vv)
\leq \frac{2d}{1-\DiscountFactor} \sum_{\vv: P(\vv) > 0} (1-P(\vv))^\NumDemos .
\end{align*}
Because $P(\vv)$ is a fixed distribution induced by $P(\reward)$, the r.h.s.\ converges to $0$ as $\NumDemos\to\infty$.
\end{proof}

\begin{lemma}\label{lem:boltzmann-bound}
Under \Cref{ass:boltzmann-noise}, we have for any reward function $\reward$ and any feasible policy $\policy \in \truesafeset$
\[
P(\policy) \geq \exp(- \beta d / (1-\DiscountFactor)) .
\]
\end{lemma}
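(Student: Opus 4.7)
The plan is to combine the bound on policy returns from the preceding lemma with the simple observation that $\exp(\cdot)$ is monotone and takes bounded values over a bounded input range.

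First, I would invoke the preceding lemma on bounded features and returns. For any policy $\pi$ and any reward function $r$ with $r(s,a) \in [0,1]$, it gives $0 \le G_r(\pi) \le d/(1-\gamma)$. Substituting into the unnormalized Boltzmann weight, at any feasible $\pi$ we have $\exp(\beta G_r(\pi)) \in [1, \exp(\beta d/(1-\gamma))]$, using $\beta > 0$.

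Second, I would control the normalization constant $Z_i$ from above by the maximum feasible Boltzmann weight, i.e., $Z_i \le \exp(\beta \max_{\pi' \in \truesafeset} G_r(\pi')) \le \exp(\beta d/(1-\gamma))$. Combined with the lower bound of $1$ on the numerator at any feasible $\pi$, this directly yields $P(\pi) \ge 1/\exp(\beta d/(1-\gamma)) = \exp(-\beta d/(1-\gamma))$, as claimed.

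The only delicate point is the interpretation of the normalization $Z_i$ in the Boltzmann model. The natural reading, and the one required by the subsequent use of this lemma in the convergence argument of \Cref{thm:convergence-boltzmann}, is that the stated bound controls the ratio $P(\pi)/\max_{\pi' \in \truesafeset} P(\pi')$. That ratio equals $\exp(\beta(G_r(\pi) - \max_{\pi' \in \truesafeset} G_r(\pi'))) \ge \exp(-\beta d/(1-\gamma))$ by exactly the same return bound, so either directly or via this rescaling argument, the conclusion follows. This is the main (and really only) subtlety; once the normalization convention is fixed, the argument reduces to the two-line monotonicity calculation above.
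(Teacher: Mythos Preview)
Your proposal is correct and follows essentially the same route as the paper: lower-bound the numerator by $1$ via $G_r(\pi)\ge 0$, upper-bound $Z$ by $\exp(\beta d/(1-\gamma))$ via the return bound from the preceding lemma, and divide. Your explicit flag about the normalization convention is, if anything, more careful than the paper's own argument, which simply writes $Z(\theta)=\int_{\pi\in\truesafeset}\exp(\beta\theta^T\FeatureFunction(\pi))\,\dif\pi \le \exp(\beta d/(1-\gamma))$ without further comment.
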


\begin{proof}
We have $\beta > 0$ and $\PolicyReturn_\reward(\policy) \geq 0$, which implies $\exp(\beta \PolicyReturn_\reward(\policy)) \geq 1$, and
\begin{align*}
P(\policy | \reward) = \frac{\exp(\beta \PolicyReturn_\reward(\policy)}{Z(\theta)}
\geq \frac{1}{Z(\theta)} .
\end{align*}

By \Cref{thm:bounded-features-rewards}, we have
$
\PolicyReturn_\reward(\policy) \leq d / (1 - \DiscountFactor)
$.
Therefore,
\[
Z(\theta) = \int_{\policy\in\truesafeset} \exp(\beta\theta^T\FeatureFunction(\policy)) \dif\policy \leq \exp(\beta d / (1-\DiscountFactor)) ,
\]
and
\[
P(\policy | \reward) \geq \exp(- \beta d^2 / (1-\DiscountFactor)) .
\]
\end{proof}

\ConvergenceBoltzmann*

\begin{proof}
We can upper-bound the probability of having non-zero regret similar to the noise free case by
\begin{align*}
P(\regret(\reward, \safeset_\NumDemos) > 0) \leq \sum_{\text{vertex } \vv} (1-P(\vv))^\NumDemos P(\vv) .
\end{align*}
Under \Cref{ass:boltzmann-noise}, we can use \Cref{lem:boltzmann-bound}, to obtain
\[
P(\vv) \geq \exp(- \beta d / (1-\DiscountFactor)) \coloneqq \Delta .
\]
Importantly, $0 < \Delta < 1$ is a constant, and we have
\[
P(\regret(\reward, \safeset_\NumDemos) > 0)
\leq \sum_{\text{vertex } \vv} (1 - \Delta)^\NumDemos .
\]
To ensure $P(\regret(\reward, \safeset_\NumDemos) > 0) \leq \delta$, it is sufficient to ensure $(1 - \Delta)^\NumDemos \leq \delta / N_v$, where $N_v$ is the number of vertices of the true safe set. This is true once
\[
\NumDemos \geq \frac{\log(\delta / N_v)}{\log(1 - \exp(-\beta d / (1-\DiscountFactor)))} ,
\]
where we can again replace $N_v$ by a suitable upper bound.

Bounding the maximum regret via \Cref{thm:bounded-features-rewards}, we get
\begin{align*}
\Expectation_\reward[\regret(\reward, \safeset_\NumDemos)] \leq \frac{2d}{1-\DiscountFactor} \sum_{\text{vertex } \vv} (1 - \Delta)^\NumDemos ,
\end{align*}
which converges to $0$ as $\NumDemos \to \infty$.
\end{proof}

\subsection{Estimating Feature Expectations}
\label{app:estimated_features_proofs}

\begin{lemma}\label{thm:feature_expectation_concentration}
Let $\policy$ be a policy with true feature expectation $\FeatureFunction(\policy)$. We estimate the feature expectation using $\NumTraj$ trajectories $\tau_i$ collected by rolling out $\policy$ in the environment: $\hat{\FeatureFunction}(\policy) = \frac{1}{\NumTraj} \sum_{i=1}^{\NumTraj} \FeatureFunction(\tau_i)$. This estimate is unbiased, i.e., $\Expectation[\hat{\FeatureFunction}(\policy)] = \FeatureFunction(\policy)$. Further let $\phi\in\reals^d$ be a vector, such that, $0 \leq \phi^T \FeatureFunction(\state, \action) \leq 1$ for any state $\state$ and action $\action$. Then, we have for any $\epsilon > 0$
\begin{align*}
P( \phi^T \hat{\FeatureFunction}(\policy) - \phi^T \FeatureFunction(\policy) \geq \epsilon ) &\leq \exp( - 2 \epsilon^2 \NumTraj (1 - \DiscountFactor) / d ) , \\
P( \phi^T \FeatureFunction(\policy) - \phi^T \hat{\FeatureFunction}(\policy) \geq \epsilon ) &\leq \exp( - 2 \epsilon^2 \NumTraj (1 - \DiscountFactor) / d ) .
\end{align*}
\end{lemma}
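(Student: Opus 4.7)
The plan is a standard application of Hoeffding's inequality to the scalar sample-mean statistic $\phi^T \hat{\FeatureFunction}(\policy)$. First I would introduce $Y_i \defeq \phi^T \FeatureFunction(\tau_i)$ for $i = 1, \dots, \NumTraj$; because the trajectories $\tau_i$ are independent rollouts of $\policy$, these are i.i.d.\ scalar random variables, and $\phi^T \hat{\FeatureFunction}(\policy) = \frac{1}{\NumTraj} \sum_{i=1}^\NumTraj Y_i$ is exactly their sample mean. Reducing to a scalar at the outset is convenient because Hoeffding is sharp for bounded one-dimensional random variables and avoids any vector concentration machinery.

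The first ingredient is unbiasedness. By linearity of expectation and the definition $\FeatureFunction(\policy) = \Expectation_{\TransitionModel,\policy}[\sum_{t=0}^\infty \DiscountFactor^t \FeatureFunction(s_t, a_t)]$, we get $\Expectation[\FeatureFunction(\tau_i)] = \FeatureFunction(\policy)$ and hence $\Expectation[\hat{\FeatureFunction}(\policy)] = \FeatureFunction(\policy)$, which gives $\Expectation[Y_i] = \phi^T \FeatureFunction(\policy)$.

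The second ingredient is a range bound on $Y_i$. Writing $Y_i = \sum_{t=0}^\infty \DiscountFactor^t \phi^T \FeatureFunction(s_t, a_t)$ and invoking the assumption $0 \leq \phi^T \FeatureFunction(s,a) \leq 1$ together with the geometric series identity used in \Cref{thm:bounded-features-rewards}, we obtain $Y_i \in [0, 1/(1-\DiscountFactor)]$ almost surely. Applying Hoeffding's inequality to the bounded i.i.d.\ variables $Y_1, \dots, Y_{\NumTraj}$ then yields a two-sided exponential tail bound of the form $\exp(-2\epsilon^2 \NumTraj / M^2)$ where $M$ is the range; the precise exponent $-2\epsilon^2 \NumTraj (1-\DiscountFactor)/d$ stated in the lemma is recovered by replacing this tight range with a slightly looser bound $M \leq \sqrt{d/(1-\DiscountFactor)}$, obtainable via Cauchy--Schwarz together with the feature-norm bound $\|\FeatureFunction(\tau)\|_2 \leq \sqrt{d}/(1-\DiscountFactor)$ already established in \Cref{thm:bounded-features-rewards}. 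The two-sided statement follows by applying the same argument to $-Y_i$.

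There is no substantive obstacle here---the entire argument is a one-line application of Hoeffding once the range bound is in place, and the unbiasedness claim is immediate. The only bookkeeping subtlety is choosing the range bound whose functional form (the $(1-\DiscountFactor)/d$ dependence rather than the tighter $(1-\DiscountFactor)^2$) is what the downstream sample-complexity statement in \Cref{thm:estimated-features-safety} requires, so that the $d \log(\NumConst\NumDemos/\delta)/(2\epsilon^2(1-\DiscountFactor))$ sample bound there drops out by a union bound over the $\NumConst$ inferred constraints and $\NumDemos$ policies.
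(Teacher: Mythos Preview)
Your approach is essentially the paper's: both apply a bounded-difference concentration inequality to the scalar sample mean $\phi^T\hat{\FeatureFunction}(\policy)$. The paper phrases it as McDiarmid's inequality, which for an average of i.i.d.\ bounded summands is exactly Hoeffding, so there is no substantive difference in method. The paper, however, uses the cruder range bound $0\le\phi^T\FeatureFunction(\tau)\le d/(1-\DiscountFactor)$, derived from the coordinate-wise bound $\FeatureFunction_i(\tau)\le 1/(1-\DiscountFactor)$, whereas you use the sharper $0\le\phi^T\FeatureFunction(\tau)\le 1/(1-\DiscountFactor)$ that follows directly from the hypothesis $0\le\phi^T\FeatureFunction(s,a)\le 1$.

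The one genuine slip in your write-up is the step where you ``recover'' the lemma's stated exponent by loosening the range to $M\le\sqrt{d/(1-\DiscountFactor)}$ via Cauchy--Schwarz and $\|\FeatureFunction(\tau)\|_2\le\sqrt{d}/(1-\DiscountFactor)$. Cauchy--Schwarz gives $|\phi^T\FeatureFunction(\tau)|\le\|\phi\|_2\cdot\sqrt{d}/(1-\DiscountFactor)$, and the hypotheses supply no bound on $\|\phi\|_2$, so $\sqrt{d/(1-\DiscountFactor)}$ is not obtained this way; moreover $\sqrt{d/(1-\DiscountFactor)}$ can be \emph{smaller} than your tight range $1/(1-\DiscountFactor)$ (whenever $d<1/(1-\DiscountFactor)$), so it is not a loosening at all. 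To reproduce the exponent exactly as stated you should instead adopt the paper's range $d/(1-\DiscountFactor)$ and plug that into Hoeffding.
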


\begin{proof}
Because the expectation is linear, the estimate is unbiased, and, $\Expectation[\phi^T \hat{\FeatureFunction}(\policy)] = \phi^T \FeatureFunction(\policy)$.

For any trajectory $\tau$ and any $1 \leq i \leq d$, we have $0 \leq \FeatureFunction_i(\tau) \leq 1/(1-\DiscountFactor)$, and, consequently, $0 \leq \phi^T \FeatureFunction(\tau) \leq d/(1-\DiscountFactor)$, analogously to \Cref{thm:bounded-features-rewards}). Thus, $\phi^T \hat{\FeatureFunction}(\policy)$ satisfies the bounded difference property. In particular, by changing one of the observed trajectories, the value of $\phi^T \hat{\FeatureFunction}(\policy)$ can change at most by $d / (\NumTraj(1-\DiscountFactor))$.
Hence, by McDiarmid's inequality, we have for any $\epsilon > 0$
\[
P( \theta^T \hat{\FeatureFunction}(\policy) - \Expectation[\theta^T \hat{\FeatureFunction}(\policy)] \geq \epsilon)  \leq \exp\left( -\frac{2 \epsilon^2 }{d/(\NumTraj(1-\DiscountFactor))} \right) ,
\]
and
\[
P( \theta^T \hat{\FeatureFunction}(\policy) - \theta^T \FeatureFunction(\policy) \geq \epsilon) \leq \exp\left( -\frac{2 \epsilon^2 \NumTraj(1-\DiscountFactor) }{d} \right) .
\]

We can bound $P( \phi^T \FeatureFunction(\policy) - \phi^T \hat{\FeatureFunction}(\policy) \geq \epsilon )$ analogously.
\end{proof}

\EstimatedFeaturesSafety*

\begin{proof}
Let us define the ``good'' event that for policy $\policy_i^*$ we accurately estimate the value of the cost function $\cost_j$ and denote it by $\mathcal{E}_{ij} = \{ \phi_j^T \hat{\FeatureFunction}(\policy_i^*) - \phi_j^T \FeatureFunction(\policy_i^*) \leq \epsilon \}$, where $\phi_j$ parameterizes $\cost_j$. Conditioned on $\mathcal{E}_{ij}$ for all $i$ and $j$, we have
\begin{align*}
P(\max_j (\CumulativeCost_j(\policy) - \threshold_j) &> \epsilon | \policy \in \hat{\safeset}, \{ \mathcal{E}_{ij} \} )
\leq \sum_j P((\CumulativeCost_j(\policy) - \threshold_j) > \epsilon | \policy \in \hat{\safeset}, \{ \mathcal{E}_{ij} \} ) \\
&= \sum_j P( \phi_j^T \sum_l \lambda_l \hat{\FeatureFunction}(\policy_l^*) > \threshold_j + \epsilon | \{ \mathcal{E}_{ij} \} ) = 0 .
\end{align*}
Hence, we can bound the probability of having an unsafe policy by the probability of the ``bad'' event, which we can bound by \Cref{thm:feature_expectation_concentration} to obtain
\begin{align*}
P(\max_j (\CumulativeCost_j(\policy) - \threshold_j) > \epsilon | \policy \in \hat{\safeset} )
\leq \sum_{ij} P(\text{not }\mathcal{E}_{ij})
\leq \NumConst \cdot \NumDemos \cdot \exp( - 2 \epsilon^2 \NumTraj (1 - \DiscountFactor) / d ) .
\end{align*}
So, to ensure $P(\max_j (J_j(\policy) - \threshold_j) > \epsilon | \policy \in \hat{\safeset} \} ) \leq \delta$, it is sufficient to ensure
\[
\exp( - 2 \epsilon^2 \NumTraj (1 - \DiscountFactor) / d ) \leq \frac{\delta}{\NumConst \NumDemos} ,
\]
which is true if we collect at least
\[
\NumTraj > \frac{d \log(\NumConst \NumDemos / \delta)}{2 \epsilon^2 (1 - \DiscountFactor)}
\]
trajectories for each policy.
\end{proof}

\begin{theorem}[Convergence, noise-free, estimated feature expectations]\label{thm:convergence-noise-free-estimated-features}
Under \Cref{ass:exact-optimality}, if we estimate the feature expectations of at least
$
\NumDemos > \log(\delta / (2 f_v(d, \NumConst))) / \log(1 - \delta / (2 f_v(d, \NumConst)))
$
expert policies using at least
$
\NumTraj > d \log(2 f_v(d, \NumConst) / \delta) / (2 \epsilon^2 (1 - \DiscountFactor))
$
samples each, we have $P(\regret(\reward, \safeset_\NumDemos) > \epsilon) \leq \delta$, where $f_v(d, \NumConst)$ is an upper bound on the number of vertices of the true safe set. In particular, we have
$
\lim_{\min(\NumDemos, \NumTraj)\to\infty} \Expectation_\reward \left[ \regret(\reward, \safeset_\NumDemos) \right] = 0
$.
\end{theorem}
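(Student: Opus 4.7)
The plan is to prove the high-probability regret bound by combining the vertex-coverage argument from the proof of \Cref{thm:convergence-noise-free} with the one-sided concentration inequality of \Cref{thm:feature_expectation_concentration}, partitioning the total failure budget $\delta$ evenly between the two sources of error. Concretely, I will allocate $\delta/2$ to the event that none of the observed demonstrations has true feature expectation equal to an $\reward$-optimal vertex of $\truesafeset$, and $\delta/2$ to the event that some observed vertex has an inaccurate estimate. Under \Cref{ass:exact-optimality}, each $\policy_i^*$ is optimal for some CMDP, so its true feature expectation is a vertex of $\truesafeset$, and the coverage argument from \Cref{thm:convergence-noise-free} applies verbatim with $\delta/2$ in place of $\delta$: the stated lower bound on $\NumDemos$ ensures that with probability at least $1 - \delta/2$ we observe a demonstration $\policy_i^*$ whose true feature expectation $\FeatureFunction(\policy_i^*) = \vv^*$ maximizes $\theta^T \vv$ over the vertices of $\truesafeset$, and hence over $\truesafeset$ itself.

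For the estimation step, I apply the one-sided bound from \Cref{thm:feature_expectation_concentration} to the reward parameter $\theta$, which satisfies the same boundedness $\theta^T \FeatureFunction(s,a) \in [0,1]$ as the cost parameters used in that lemma. For any single policy this gives $P(\theta^T \FeatureFunction(\policy_i^*) - \theta^T \hat{\FeatureFunction}(\policy_i^*) > \epsilon) \leq \exp(-2\epsilon^2 \NumTraj(1-\DiscountFactor)/d)$. Under \Cref{ass:exact-optimality} all distinct true feature expectations among the observed demonstrations are vertices of $\truesafeset$ and there are at most $f_v(d,\NumConst)$ such vertices, so the union bound needs only range over distinct observed vertices rather than over all $\NumDemos$ demonstrations. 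The stated choice of $\NumTraj$ then yields $f_v(d,\NumConst) \exp(-2\epsilon^2 \NumTraj(1-\DiscountFactor)/d) \leq \delta/2$.

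On the intersection of these two good events (total probability at least $1-\delta$), there is an index $i$ with $\FeatureFunction(\policy_i^*) = \vv^*$ and $\theta^T \hat{\FeatureFunction}(\policy_i^*) \geq \theta^T \vv^* - \epsilon$; since $\hat{\FeatureFunction}(\policy_i^*) \in \safeset_\NumDemos$ by construction, $\max_{\policy \in \safeset_\NumDemos} \PolicyReturn_\reward(\policy) \geq \max_{\policy \in \truesafeset} \PolicyReturn_\reward(\policy) - \epsilon$, i.e., $\regret(\reward, \safeset_\NumDemos) \leq \epsilon$. For the asymptotic statement, \Cref{thm:bounded-features-rewards} gives a uniform upper bound of $2d/(1-\DiscountFactor)$ on the regret, so $\Expectation_\reward[\regret(\reward, \safeset_\NumDemos)] \leq \epsilon + \delta \cdot 2d/(1-\DiscountFactor)$, and letting $\epsilon, \delta \to 0$ as $\min(\NumDemos,\NumTraj) \to \infty$ yields the claimed limit. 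The main subtlety I expect is justifying that the union bound ranges over the at most $f_v(d,\NumConst)$ distinct vertex-policies rather than over all $\NumDemos$ demonstrations; this is exactly what replaces the $\log(\NumConst\NumDemos/\delta)$ factor from \Cref{thm:estimated-features-safety} with the tighter $\log(2 f_v(d,\NumConst)/\delta)$ factor appearing in the statement.
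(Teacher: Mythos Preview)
Your proposal is correct and takes essentially the same approach as the paper: both combine the vertex-coverage argument from \Cref{thm:convergence-noise-free} with the concentration bound of \Cref{thm:feature_expectation_concentration}, allocating half the failure budget to each and union-bounding the estimation error over the at most $f_v(d,\NumConst)$ vertices of $\truesafeset$ (which is precisely the subtlety you flag, enabled by \Cref{ass:exact-optimality}). The paper phrases this as a three-case decomposition inside a sum over vertices rather than two global good events with a $\delta/2$ split, but the arithmetic and resulting bounds on $\NumDemos$ and $\NumTraj$ are identical, and your asymptotic argument via $\Expectation[\regret]\le \epsilon + \delta\cdot 2d/(1-\gamma)$ is a cleaner version of the paper's.
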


\begin{proof}
To reason about the possibility of a large estimation error for the feature expectations, we use \Cref{thm:feature_expectation_concentration} to lower bound the probability of the good event $\mathcal{E}_{\vv} = \{ \theta^T \vv - \theta^T \hat{\FeatureFunction}(\policy_i^*) \leq \epsilon \}$, for each vertex $\vv$ of the true safe set.

For an evaluation reward $\evalreward$, we can incur regret in one of three cases:\\
\noindent \case{1}: we have \emph{not} seen a vertex corresponding to an optimal policy for $\evalreward$. \\
\noindent \case{2}: we have seen an optimal vertex but our estimation of that vertex is \emph{bad}. \\
\noindent \case{3}: we have seen an optimal vertex and our estimation of that vertex is \emph{good}.

We show that the probability of \case{1} shrinks as we increase $\NumDemos$, the probability of \case{2} shrinks as we increase $\NumTraj$, and the regret that we can incur in \case{3} is small.

\case{1}: This case is independent of the estimated feature expectations. Similar to \Cref{thm:convergence-noise-free}, we can bound its probability for each vertex $\vv$ by $(1-P(\vv))^\NumDemos P(\vv)$.

\case{2}: In case of the bad event, i.e., the complement of $\mathcal{E}_\vv$, we can upper-bound the probability of incurring high regret using \Cref{thm:feature_expectation_concentration}, to obtain
\[
P( \theta^T \vv - \theta^T \hat{\FeatureFunction}(\policy_i^*) > \epsilon | \text{not }\mathcal{E}_\vv ) P(\text{not }\mathcal{E}_\vv)
\leq P(\text{not }\mathcal{E}_\vv)
\leq \exp( - 2 \epsilon^2 \NumTraj (1 - \DiscountFactor) / d ) .
\]

\case{3}: Under the good event $\mathcal{E}_\vv$, we automatically have regret less than $\epsilon$, i.e.,
\[
P( \theta^T \vv - \theta^T \hat{\FeatureFunction}(\policy_i^*) > \epsilon | \mathcal{E}_\vv ) P(\mathcal{E}_\vv) = 0.
\]

Using these three cases, we can now bound the probability of having regret greater than $\epsilon$ as

\begin{align*}
P(\regret(\reward, \safeset_k) > \epsilon) \leq \sum_{\text{vertex }\vv} \Bigl(
\underbrace{(1-P(\vv))^k P(\vv)}_{\case{1}}
+ \underbrace{\exp( - 2 \epsilon^2 \NumTraj (1 - \DiscountFactor) / d )}_{\case{2}}
+ \underbrace{0}_{\case{3}}
\Bigr).
\end{align*}

To ensure that $P(\regret(\reward, \safeset_k) > \epsilon) \leq \delta$, it is sufficient to ensure each term of the sum is less than $\delta / N_v$ where $N_v$ is the number of vertices. We have two terms inside the sum, so it is sufficient to ensure either term is less than $\delta / (2 N_v)$.

\case{1}: We argue analogously to \Cref{thm:convergence-noise-free}. For terms with $P(\vv) \leq \delta / (2 N_v)$ it is true for all $\NumDemos$. For the remaining terms with $P(\vv) > \delta / (2 N_v)$, we can use
\[
(1 - P(\vv))^\NumDemos P(\vv) \leq (1 - \delta / (2 N_v))^\NumDemos
\]
so it is sufficient to ensure
\[
(1 - \delta / (2 N_v))^\NumDemos \leq \delta / (2 N_v)
\]
which is satisfied once
\[
k > \frac{\log(\delta / (2 N_v))}{\log(1 - \delta / (2 N_v))} .
\]
This is our first condition.

\case{2}: To ensure $\exp( - 2 \epsilon^2 \NumTraj (1 - \DiscountFactor) / d ) \leq \delta / (2 N_v)$, we need
\[
\NumTraj > \frac{d \log(2 N_v / \delta)}{2 \epsilon^2 (1 - \DiscountFactor)} .
\]
This is our second condition.

If both conditions hold simultaneously, we have $P(\regret(\reward, \safeset_\NumDemos) > \epsilon) \leq \delta$.

To analyse the regret, let us consider how $\epsilon$ shrinks as a function of $\NumTraj$. The analysis above holds for any $\epsilon$. So, for a given $\NumDemos$ and $\NumTraj$, we need to chose
\[
\epsilon^2 \geq \frac{d \log(2 \NumConst \NumDemos N_v / \delta)}{2 \NumTraj (1 - \DiscountFactor)}
\]
to guarantee $P(\regret(\reward, \safeset_k) > \epsilon) \leq \delta$. Hence, the smallest $\epsilon$ we can choose is
\[
\epsilon_{\min} = \sqrt{\frac{d \log(2 \NumConst \NumDemos N_v / \delta)}{2 \NumTraj (1 - \DiscountFactor)}} .
\]

Using \Cref{thm:bounded-features-rewards} to upper-bound the maximum regret by $\frac{2 d^2}{1-\DiscountFactor}$, we can upper-bound the expected regret by
\begin{align*}
&\Expectation_\reward[\regret(\reward, \safeset_k)]
\leq \frac{2 d^2}{1-\DiscountFactor} P(\regret(\reward, \safeset_k) > \epsilon_{\min}) + \epsilon_{\min} P(\regret(\reward, \safeset_k) \leq \epsilon_{\min}) \\
&\leq \frac{2d}{1-\DiscountFactor} \sum_{\vv: P(\vv) > 0} (1-P(\vv))^\NumDemos + \frac{2d N_v}{1-\DiscountFactor} \exp( - 2 \epsilon_{\min}^2 (1 - \DiscountFactor) / d ) \frac{\NumDemos}{\exp(\NumTraj)}
+ \sqrt{\frac{d \log(2 \NumConst \NumDemos N_v / \delta)}{2 \NumTraj (1 - \DiscountFactor)}} .
\end{align*}

For each term individually, we can see that it converges to $0$ as $\min(\NumDemos, \NumTraj) \to \infty$:
\[
\lim_{\min(\NumDemos, \NumTraj) \to \infty}  \sum_{\vv: P(\vv) > 0} (1-P(\vv))^\NumDemos = \lim_{\NumDemos \to \infty}  \sum_{\vv: P(\vv) > 0} (1-P(\vv))^\NumDemos = 0 ,
\]
\[
\lim_{\min(\NumDemos, \NumTraj) \to \infty} \underbrace{\exp( - 2 \epsilon_{\min}^2 (1 - \DiscountFactor) / d )}_{\leq 1} \frac{\NumDemos}{\exp(\NumTraj)}
\leq \lim_{\min(\NumDemos, \NumTraj) \to \infty} \frac{\NumDemos}{\exp(\NumTraj)} = 0 ,
\]
\[
\lim_{\min(\NumDemos, \NumTraj) \to \infty} \sqrt{\frac{d \log(2 \NumConst \NumDemos N_v / \delta)}{2 \NumTraj (1 - \DiscountFactor)}} = 0 .
\]
Hence,  $\lim_{\min(\NumDemos, \NumTraj) \to \infty} \Expectation_\reward[\regret(\reward, \safeset_k)] = 0$.
\end{proof}

\begin{theorem}[Convergence under Boltzmann noise, estimated feature expectations]
Under \Cref{ass:boltzmann-noise}, if we estimate the feature expectations of at least
$
\NumDemos > \log(\delta / (2 f_v(d, \NumConst))) / (\log(1 - \exp(- \beta d / (1-\DiscountFactor))))
$
expert policies using at least
$
\NumTraj > d \log(2 f_v(d, \NumConst) / \delta) / (2 \epsilon^2 (1 - \DiscountFactor))
$
samples each, we have $P(\regret(\reward, \safeset_\NumDemos) > \epsilon) \leq \delta$, where $f_v(d, \NumConst)$ is an upper bound on the number of vertices of the true safe set. In particular, we have
$
\lim_{\min(\NumDemos, \NumTraj)\to\infty} \Expectation_\reward \left[ \regret(\reward, \safeset_\NumDemos) \right] = 0
$.
\end{theorem}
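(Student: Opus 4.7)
The plan is to follow the same three-case decomposition used in the proof of \Cref{thm:convergence-noise-free-estimated-features}, but replace the vertex-coverage bound with the one obtained under Boltzmann-rationality in \Cref{thm:convergence-boltzmann}. The two sources of error are conceptually independent: one comes from not observing a vertex of the true safe set that is optimal for $\evalreward$, and the other from mis-estimating the feature expectation at such a vertex from $\NumTraj$ sampled trajectories. The theorem essentially composes the $\NumDemos$-dependent bound from the Boltzmann analysis with the $\NumTraj$-dependent bound from McDiarmid's inequality.

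Concretely, I would first fix an evaluation reward $\reward$ and enumerate the vertices $\vv$ of the true safe set $\truesafeset$. For each vertex I would distinguish: \textbf{Case 1:} no demonstration $\policy_i^*$ corresponds to $\vv$; \textbf{Case 2:} some $\policy_i^*$ corresponds to $\vv$ but the estimated feature expectation $\hat{\FeatureFunction}(\policy_i^*)$ misses $\vv$ by more than $\epsilon$ along the reward direction $\theta$; \textbf{Case 3:} both $\vv$ is observed and its estimate is $\epsilon$-accurate. In Case 3 the regret is at most $\epsilon$ because $\theta^T\vv - \theta^T\hat{\FeatureFunction}(\policy_i^*) \leq \epsilon$ and $\hat{\FeatureFunction}(\policy_i^*)\in\hat{\safeset}_\NumDemos$. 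For Case 1, I would use \Cref{lem:boltzmann-bound} to obtain $P(\vv) \geq \Delta \defeq \exp(-\beta d/(1-\DiscountFactor))$, so the probability that none of $\NumDemos$ independent draws lands on $\vv$ is at most $(1-\Delta)^\NumDemos$. For Case 2, I would invoke \Cref{thm:feature_expectation_concentration} with $\phi = \theta$ (rescaled so that $0\leq \theta^T\FeatureFunction(\state,\action)\leq 1$, which is fine up to the constants absorbed into $d$), giving a per-trajectory tail bound of $\exp(-2\epsilon^2\NumTraj(1-\DiscountFactor)/d)$.

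Summing over the at most $f_v(d,\NumConst)$ vertices yields
\[
P(\regret(\reward,\safeset_\NumDemos)>\epsilon) \leq f_v(d,\NumConst)\,(1-\Delta)^\NumDemos + f_v(d,\NumConst)\exp\!\bigl(-2\epsilon^2\NumTraj(1-\DiscountFactor)/d\bigr).
\]
Allocating half of the failure budget $\delta$ to each term then produces exactly the two stated conditions: solving $f_v(d,\NumConst)(1-\Delta)^\NumDemos \leq \delta/2$ for $\NumDemos$ gives $\NumDemos > \log(\delta/(2f_v(d,\NumConst)))/\log(1-\exp(-\beta d/(1-\DiscountFactor)))$, while solving the Gaussian-type tail $\leq \delta/2$ for $\NumTraj$ gives $\NumTraj > d\log(2f_v(d,\NumConst)/\delta)/(2\epsilon^2(1-\DiscountFactor))$.

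For the asymptotic regret statement, I would mirror the end of the proof of \Cref{thm:convergence-noise-free-estimated-features}: bound the maximum achievable regret by $2d/(1-\DiscountFactor)$ via \Cref{thm:bounded-features-rewards}, then split the expectation at the minimal admissible $\epsilon_{\min} = \sqrt{d\log(2f_v(d,\NumConst)/\delta)/(2\NumTraj(1-\DiscountFactor))}$. The first piece shrinks like $(1-\Delta)^\NumDemos$ as $\NumDemos\to\infty$, the second like $\NumDemos/\exp(\NumTraj)$, and $\epsilon_{\min}\to 0$ as $\NumTraj\to\infty$. The main obstacle I anticipate is purely bookkeeping, namely handling the two-source budget split cleanly so that the stated $\NumDemos$ and $\NumTraj$ thresholds match exactly; the genuinely new probabilistic content is already isolated in \Cref{lem:boltzmann-bound} and \Cref{thm:feature_expectation_concentration}, so no new inequality needs to be proved.
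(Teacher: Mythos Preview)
Your proposal is correct and matches the paper's proof essentially line for line: the paper also reuses the three-case decomposition from \Cref{thm:convergence-noise-free-estimated-features}, replaces the Case~1 vertex-coverage term by $(1-\Delta)^\NumDemos$ via \Cref{lem:boltzmann-bound}, keeps the Case~2 McDiarmid term $\exp(-2\epsilon^2\NumTraj(1-\DiscountFactor)/d)$, sums over vertices, and splits the $\delta$ budget evenly to obtain the stated thresholds. For the asymptotic statement the paper likewise defers to the argument at the end of \Cref{thm:convergence-noise-free-estimated-features} with $P(\vv)$ replaced by $\Delta$, exactly as you outline.
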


\begin{proof}
We can decompose the probability of incurring regret greater than $\epsilon$ into the same three cases discussed in the proof of \Cref{thm:convergence-noise-free-estimated-features} to get the upper-bound

\begin{align*}
P(\regret(\reward, \safeset_k) > \epsilon) \leq \sum_{\text{vertex }\vv} \Bigl(
\underbrace{(1-P(\vv))^k P(\vv)}_{\case{1}}
+ \underbrace{\exp( - 2 \epsilon^2 \NumTraj (1 - \DiscountFactor) / d )}_{\case{2}}
+ \underbrace{0}_{\case{3}} .
\Bigr)
\end{align*}

Under \Cref{ass:boltzmann-noise}, we can use \Cref{lem:boltzmann-bound}, to get
\[
P(\vv) \geq \exp(- \beta d / (1-\DiscountFactor)) \coloneqq \Delta .
\]
Combining both bounds, we get
\begin{align*}
P(\regret(\reward, \safeset_\NumDemos) > \epsilon) \leq \sum_{\text{vertex }\vv} \Bigl(
(1-\Delta)^\NumDemos
+ \exp( - 2 \epsilon^2 \NumTraj (1 - \DiscountFactor) / d )
\Bigr) .
\end{align*}

To ensure $P(\regret(\reward, \safeset_\NumDemos) > \epsilon) \leq \delta$ it is sufficient to ensure both $(1 - \Delta)^\NumDemos \leq \delta / (2 N_v)$ and $\exp( - 2 \epsilon^2 \NumTraj (1 - \DiscountFactor) / d  \leq \delta / (2 N_v)$ where $N_v$ is the number of vertices of the true safe set.
This is true once
\[
\NumDemos > \frac{\log(\delta / (2 N_v))}{\log(1 - \Delta)}
= \frac{\log(\delta / (2 N_v))}{\log(1 - \exp(- \beta d / (1-\DiscountFactor)))} ,
\]
and
\[
\NumTraj > \frac{d \log(2 N_v / \delta)}{2 \epsilon^2 (1 - \DiscountFactor)} ,
\]
where can again replace $N_v$ by a suitable upper bound.

For the regret, we get asymptotic optimality with the same argument from the proof of \Cref{thm:convergence-noise-free-estimated-features}, replacing $P(\vv)$ with the constant $\Delta$.
\end{proof}

\section{Constructing a Conservative Safe Set from Estimated Feature Expectations}
\label{app:conservative_safe_set_from_estimated_features}

\Cref{thm:estimated-features-safety} shows that we can guarantee $\epsilon$-safety even when estimating the feature expectations of the demonstrations. However, sometimes we need to ensure exact safety. In this section, we discuss how we can use confidence intervals around estimated feature expectations to construct a conservative safe set, i.e., a set $\hat{\safeset} \subset \safeset$ that still guarantees \emph{exact} safety with high probability (w.h.p.).

\newcommand{\ConfidenceSet}{\mathcal{C}}

Suppose we have $\NumDemos$ confidence sets $\ConfidenceSet_i \subseteq \reals^d$, such that we know $\FeatureFunction(\policy_i^*) \in \ConfidenceSet_i$  (w.h.p.). Now we want to construct a conservative convex hull, i.e., a set of points in which any point certainly in the convex hull of $\FeatureFunction(\policy_1^*), \dots, \FeatureFunction(\policy_\NumDemos^*)$. In other words, we want to construct the intersection of all possible convex hulls with points from $\ConfidenceSet_1, \dots, \ConfidenceSet_\NumDemos$:
\[
\hat{S} = \bigcap\limits_{x_1, \dots, x_\NumDemos \in \ConfidenceSet_1 \times \dots \times \ConfidenceSet_\NumDemos} \conv(x_1, \dots, x_\NumDemos) .
\]

This set is sometimes called the \emph{guaranteed hull}~\citep{sember2011guarantees}, or the \emph{interior convex hull}~\citep{edalat2005computability}. Efficient algorithms are known to construct guaranteed hulls in $2$ or $3$ dimensions if $\ConfidenceSet$ has a simple shape, such as a cube or a disk \citep[e.g., see][]{sember2011guarantees}. However, we need to construct a guaranteed hull in $d$ dimensions.

\citet{edalat2001convex} propose to reduce the problem of constructing guaranteed hull when $\ConfidenceSet_i$ are (hyper-)rectangles to computing the intersection of convex hulls constructed from combinations of the corners of the rectangle. In $d \leq 3$ dimensions, this is possible with $\bigO(\NumDemos \log \NumDemos)$ complexity, but in higher dimensions is requires $\bigO(\NumDemos^{\lfloor d/2 \rfloor})$. If we use a concentration inequality for each coordinate independently, e.g., by using \Cref{thm:feature_expectation_concentration} with unit basis vectors, then the $\ConfidenceSet_i$'s are rectangles, and we can use this approach. However, because we have to construct many convex hulls, it can get expensive as $\NumDemos$ and $d$ grow.

\Cref{fig:interior_hull_illustration} illustrates the guaranteed hull growing as the confidence regions shrink. The guaranteed hull is a conservative estimate and can be empty if the $\ConfidenceSet_i$'s are too large. Hence, $\hat{\safeset}$ might be too conservative sometimes, especially if we cannot observe multiple trajectories from the same policy. In such cases, we need additional assumptions about the environment (e.g., it is deterministic) or the demonstrations (e.g., every trajectory is safe) to guarantee safety in practice. Still, using a guaranteed hull as a conservative safe set is a promising alternative to relying on point estimates of the feature expectations.

\begin{figure}
    \centering
    \fbox{\includegraphics[width=0.22\linewidth]{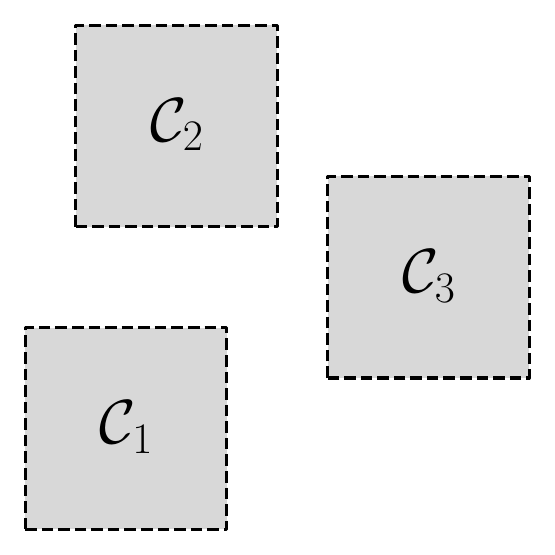}}
    \hfill
    \fbox{\includegraphics[width=0.22\linewidth]{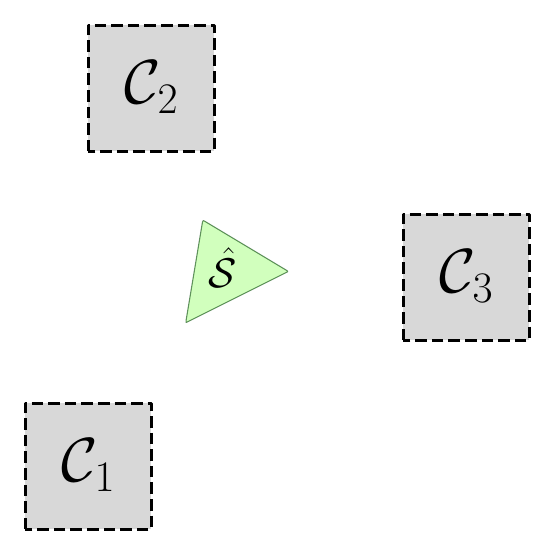}}
    \hfill
    \fbox{\includegraphics[width=0.22\linewidth]{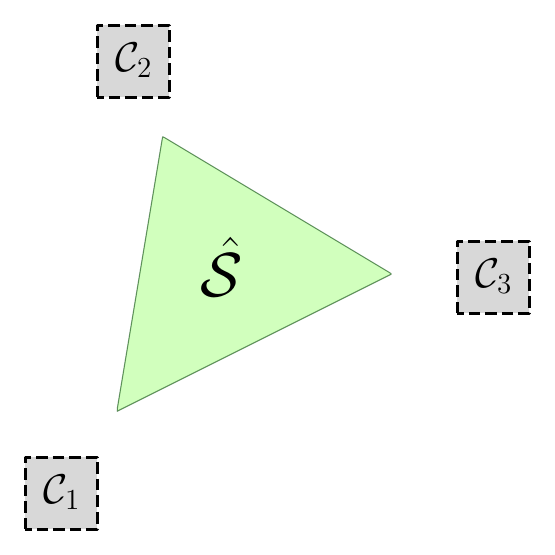}}
    \hfill
    \fbox{\includegraphics[width=0.22\linewidth]{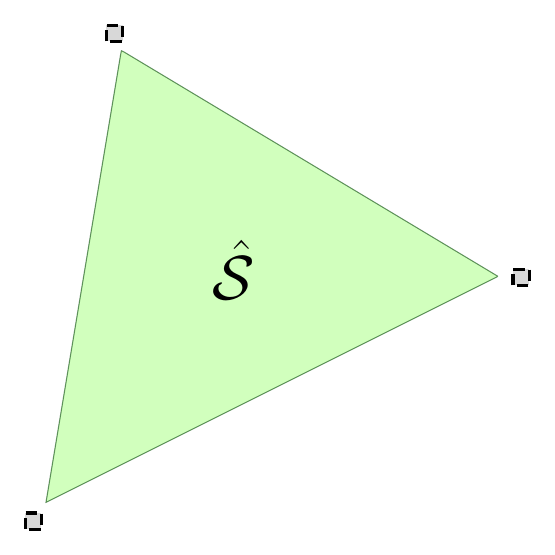}}
    \caption{
    Illustration of the \emph{guaranteed hull} in 2D. The gray areas depict confidence sets $\ConfidenceSet_1, \ConfidenceSet_2, \ConfidenceSet_3$, and the green area is the conservative safe set $\hat{\safeset}$ constructed using the guaranteed hull. For large confidence sets, the guaranteed hull can be empty (left); for small confidence sets, it approaches the safe set constructed using the exact feature expectations (right).}
    \label{fig:interior_hull_illustration}
\end{figure}

\section{Excluding Guaranteed Unsafe Policies}\label{sec:unsafe_set}

In the main paper, we showed that $\safeset$ guarantees safety and that we cannot safely expand that set. In this section, we discuss how we can distinguish policies that we know are unsafe from policies we are still uncertain about.
First, we note that, in general, this is impossible. For example, if the expert reward functions are all-zero $\reward_i(\state, \action) = 0$, any policy outside $\safeset$ might still be in $\truesafeset$. However, if we exclude such degenerate cases, we can expect expert policies to lie on the boundary of $\truesafeset$, which gives us some additional information.

Hence, in this section, we assume optimal demonstrations (\Cref{ass:exact-optimality}), and we make an additional (mild) assumption to exclude degenerate demonstrations.

\begin{assumption}[Non-degenerate demonstrations]\label{ass:strictly-optimal}
For each expert reward function $\reward_i$ there are at least two expert policies $\policy_j^*, \policy_l^*$ that have different returns under $\reward_i$, i.e., $\PolicyReturn_{\reward_i}(\policy_j^*) \neq \PolicyReturn_{\reward_i}(\policy_l^*)$.
\end{assumption}

Now, we construct the ``unsafe'' set $\unsafeset = \cup_{i=1}^k \unsafeset_i$ with
\[
\unsafeset_i = \{ x \in \reals^d | \exists \alpha_1, \dots, \alpha_{i-1}, \alpha_{i+1}, \dots \alpha_\NumDemos > 0 \text{ s.t. } x - \FeatureFunction(\policy_i^*) = \sum_{j \neq i} \alpha_j (\FeatureFunction(\policy_i^*) - \FeatureFunction(\policy_j^*)) \} .
\]

\begin{wrapfigure}{R}{0.3\textwidth}\vspace{-1em}
    \includegraphics[width=\linewidth]{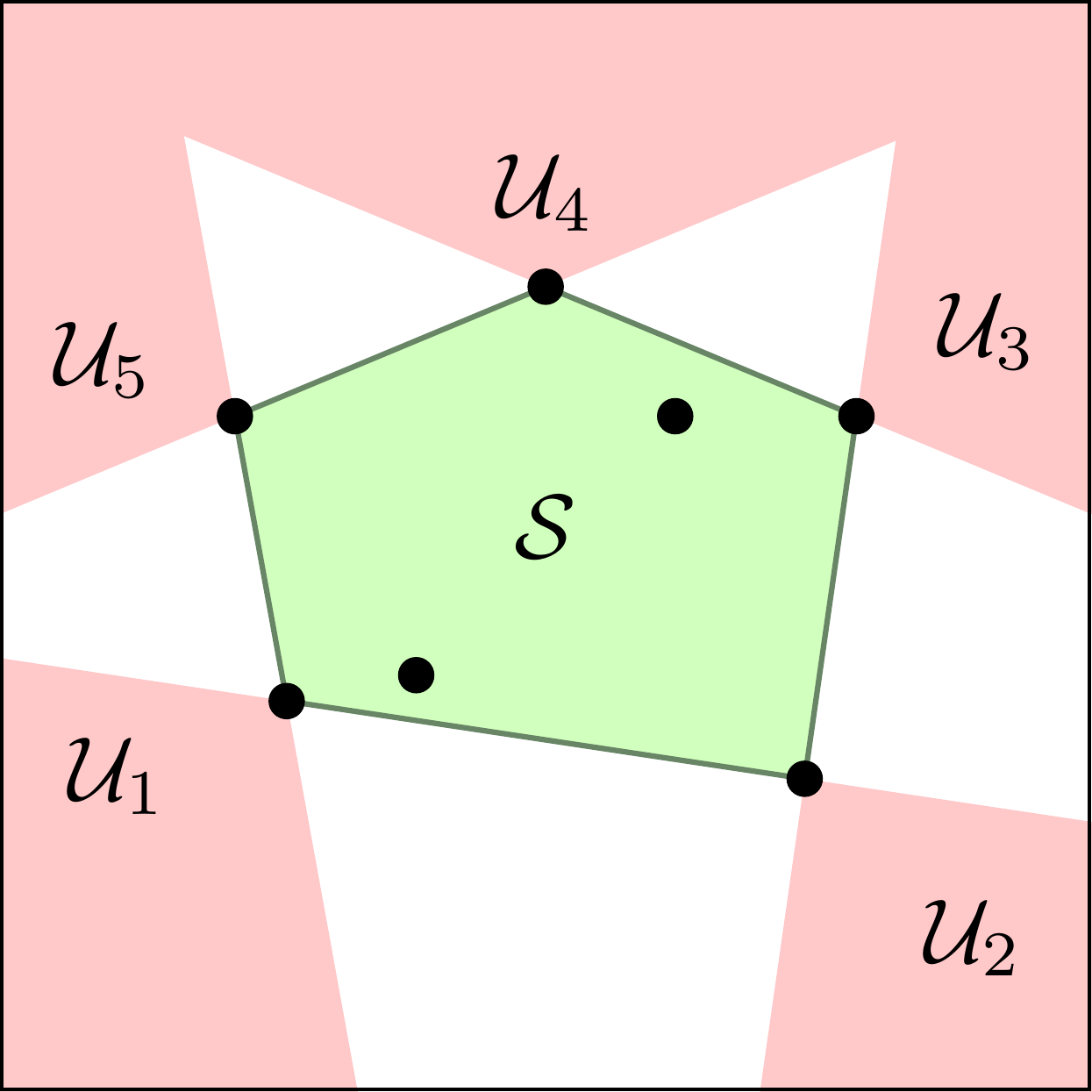}
    \caption{Illustration of safe set and ``unsafe'' set in 2D.}
    \label{fig:safe_unsafe_set}
    \vspace{-2em}
\end{wrapfigure}

For simplicity, we will sometimes write $\policy \in \unsafeset$ instead of $\FeatureFunction(\policy) \in \unsafeset$. At each vertex $i$ of the safe set, $\unsafeset_i$ is a convex cone of points incompatible with observing $\policy_i^*$. In particular, any point in $\unsafeset_i$ is strictly better than $\policy_i^*$, i.e., for any reward function for which $\policy_i^*$ is optimal, a point in $\unsafeset_i$ would be better than $\policy_i^*$. \Cref{fig:safe_unsafe_set} illustrates this construction. We can show that any policy in $\unsafeset$ is guaranteed not to be in $\truesafeset$, and that $\unsafeset$ is the largest possible set with this property.

\begin{restatable}{theorem}{UnsafeSet}\label{thm:unsafe-set}
Under \Cref{ass:exact-optimality} and \Cref{ass:strictly-optimal}, $\unsafeset \cap \truesafeset = \emptyset$. In particular, any $\policy\in\unsafeset$ is not in $\truesafeset$.
\end{restatable}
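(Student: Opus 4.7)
The plan is to fix an arbitrary $x\in\unsafeset$, pick an index $i$ with $x\in\unsafeset_i$, and show that \emph{any} policy $\policy$ whose discounted feature expectation equals $x$ would strictly beat $\policy_i^*$ under the reward $\reward_i$ while remaining safe, contradicting \Cref{ass:exact-optimality}. If $x$ is not realized by any policy at all, then vacuously $x\notin\truesafeset$, so it suffices to rule out the realizable case.

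Concretely, by definition of $\unsafeset_i$ I may write
\[
x - \FeatureFunction(\policy_i^*) \;=\; \sum_{j\neq i} \alpha_j\bigl(\FeatureFunction(\policy_i^*) - \FeatureFunction(\policy_j^*)\bigr), \qquad \alpha_j>0.
\]
Applying the reward parameter $\theta_i$ on both sides and using linearity,
\[
\theta_i^T x \;=\; \PolicyReturn_{\reward_i}(\policy_i^*) \;+\; \sum_{j\neq i} \alpha_j\bigl(\PolicyReturn_{\reward_i}(\policy_i^*) - \PolicyReturn_{\reward_i}(\policy_j^*)\bigr).
\]
Since all demonstrations $\policy_j^*$ are safe (they lie in $\truesafeset$ by construction) and $\policy_i^*$ is optimal in the CMDP with reward $\reward_i$ and the shared constraints (by \Cref{ass:exact-optimality}), each difference $\PolicyReturn_{\reward_i}(\policy_i^*) - \PolicyReturn_{\reward_i}(\policy_j^*)$ is non-negative.

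The crux of the argument is to upgrade at least one of these non-negative differences to a strict inequality. This is exactly what \Cref{ass:strictly-optimal} buys: among the demonstrations there exist two with different returns under $\reward_i$, and since $\policy_i^*$ achieves the maximum, at least one index $l\neq i$ must satisfy $\PolicyReturn_{\reward_i}(\policy_i^*) > \PolicyReturn_{\reward_i}(\policy_l^*)$. Combined with $\alpha_l>0$, this makes the whole sum strictly positive, so $\theta_i^T x > \PolicyReturn_{\reward_i}(\policy_i^*)$. Now assume for contradiction that $x=\FeatureFunction(\policy)$ for some $\policy\in\truesafeset$; then $\policy$ is feasible in the CMDP with reward $\reward_i$ and attains $\PolicyReturn_{\reward_i}(\policy) = \theta_i^T x > \PolicyReturn_{\reward_i}(\policy_i^*)$, contradicting optimality of $\policy_i^*$. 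Hence $x\notin\truesafeset$, and taking the union over $i$ yields $\unsafeset\cap\truesafeset=\emptyset$.

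The argument is short and the only delicate step is justifying the strict inequality, which hinges on coupling \Cref{ass:strictly-optimal} (existence of a dominated demo) with the fact that the defining coefficients $\alpha_j$ are strictly positive. The non-degeneracy assumption is essential: without it, one could have all $\policy_j^*$ tied under $\reward_i$, in which case every point in the translated cone would still be consistent with $\policy_i^*$'s optimality and the claim would fail. No other subtlety arises, since feasibility of the $\policy_j^*$ under the shared constraints is given and linearity of returns in feature expectations is built into the model.
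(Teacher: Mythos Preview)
Your proof is correct and follows essentially the same approach as the paper's: fix $i$ with $\FeatureFunction(\policy)\in\unsafeset_i$, use the cone representation together with optimality of $\policy_i^*$ to show each term $\PolicyReturn_{\reward_i}(\policy_i^*)-\PolicyReturn_{\reward_i}(\policy_j^*)\ge 0$, and then invoke \Cref{ass:strictly-optimal} with the strict positivity of the $\alpha_j$ to force the sum to be strictly positive, contradicting optimality of $\policy_i^*$ within $\truesafeset$. Your treatment is slightly more careful in explicitly handling the unrealizable-feature case and in spelling out why \Cref{ass:strictly-optimal} yields a dominated index $l\neq i$, but the core argument is identical.
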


\begin{proof}
Assume there is a policy $\policy \in \unsafeset\cap\truesafeset$. Then, by construction, there is an $1 \leq i \leq \NumDemos$ such that $\policy \in \unsafeset_i$, i.e., there are $\alpha_1, \dots, \alpha_{i-1}, \alpha_{i+1}, \dots, \alpha_\NumDemos > 0$, such that
\[
\FeatureFunction(\policy) - \FeatureFunction(\policy_i^*) = \sum_{j \neq i} \alpha_j (\FeatureFunction(\policy_i^*) - \FeatureFunction(\policy_j^*)) .
\]

We know that $\policy_i^*$ is optimal for a reward function $\reward_i(\state, \action) = \theta_i^T \FeatureFunction(\state, \action)$ parameterized by $\theta_i \in \reals^d$. So, for any other demonstration $\policy_j^*$, we necessarily have $\theta_i^T (\FeatureFunction(\policy_i^*) - \FeatureFunction(\policy_j^*)) \geq 0$.

Because $\policy \in \unsafeset_i$, we have
\[
\theta_i^T (\FeatureFunction(\policy) - \FeatureFunction(\policy_i^*)) = \sum_{j \neq i} \alpha_j \theta_i^T (\FeatureFunction(\policy_i^*) - \FeatureFunction(\policy_j^*)) \geq 0 .
\]

However, by \Cref{ass:strictly-optimal}, at least one $j$ has $\PolicyReturn_{\reward_i}(\policy_i^*) > \PolicyReturn_{\reward_i}(\policy_j^*)$. And, because we sum over all $j$ with all $\alpha_j > 0$, this implies $\theta_i^T \FeatureFunction(\policy) > \theta_i^T \FeatureFunction(\policy_i^*)$. However, this is a contradiction with $\policy_i^*$ being optimal for reward $\reward_i$ within $\truesafeset$. Hence, $\unsafeset \cap \truesafeset = \emptyset$.
\end{proof}

\begin{restatable}[Set $\unsafeset$ is maximal]{theorem}{UnsafeSetOptimal}\label{thm:unsafe-set-optimal}
Under \Cref{ass:exact-optimality} and \Cref{ass:strictly-optimal}, if $\policy \notin \unsafeset$, there are reward functions $\reward_1, \dots, \reward_\NumDemos$, cost functions $\cost_1, \dots, \cost_\NumConst$, and thresholds $\threshold_1, \dots, \threshold_\NumConst$, such that, each policy $\policy^*_i$ is optimal in the CMDP $(\StateSpace, \ActionSpace, \TransitionModel, \initdist, \DiscountFactor, \reward_i, \{ \cost_j \}_{j=1}^\NumConst, \{ \threshold_j \}_{j=1}^\NumConst)$, and $\policy$ is feasible, i.e., $\policy\in\truesafeset$.
\end{restatable}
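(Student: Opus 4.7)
The plan is to mimic the constructions in \Cref{thm:estimated-cmdp} and \Cref{thm:safe-set-optimal}, but to enlarge the inferred safe set so that it also contains $\FeatureFunction(\policy)$. The hypothesis $\policy \notin \unsafeset$ enters through a single geometric claim---each $\FeatureFunction(\policy_i^*)$ still lies on the relative boundary of the enlarged convex hull---so some linear reward can continue to support it there.

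Concretely, I would set $S \defeq \conv\bigl(\{\FeatureFunction(\policy_j^*)\}_{j=1}^{\NumDemos} \cup \{\FeatureFunction(\policy)\}\bigr)$ and first establish, by contrapositive, that $\FeatureFunction(\policy_i^*)$ is not in the relative interior of $S$ for any $i$. If it were, the standard characterization of the relative interior of a convex hull would give coefficients $\alpha_i, \alpha_\policy, \{\alpha_j\}_{j \neq i} > 0$ summing to one with $\FeatureFunction(\policy_i^*) = \alpha_i \FeatureFunction(\policy_i^*) + \sum_{j \neq i} \alpha_j \FeatureFunction(\policy_j^*) + \alpha_\policy \FeatureFunction(\policy)$. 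Cancelling the $\alpha_i \FeatureFunction(\policy_i^*)$ term and using $\sum_{j \neq i} \alpha_j + \alpha_\policy = 1 - \alpha_i$ rearranges this into
\[
\alpha_\policy \bigl(\FeatureFunction(\policy) - \FeatureFunction(\policy_i^*)\bigr) = \sum_{j \neq i} \alpha_j \bigl(\FeatureFunction(\policy_i^*) - \FeatureFunction(\policy_j^*)\bigr),
\]
and dividing by $\alpha_\policy > 0$ expresses $\FeatureFunction(\policy) - \FeatureFunction(\policy_i^*)$ as a strictly positive conic combination of $\{\FeatureFunction(\policy_i^*) - \FeatureFunction(\policy_j^*)\}_{j \neq i}$---i.e., $\policy \in \unsafeset_i$, contradicting the hypothesis.

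With the claim in hand, I would apply the supporting hyperplane theorem at each $\FeatureFunction(\policy_i^*)$ to obtain a nonzero $\theta_i \in \reals^d$ satisfying $\theta_i^T \FeatureFunction(\policy_i^*) \geq \theta_i^T x$ for all $x \in S$, and define $\reward_i(\state, \action) \defeq \theta_i^T \FeatureFunction(\state, \action)$ (rescaled affinely into $[0,1]$, which preserves the argmax). Invoking \Cref{thm:estimated-cmdp} applied to $S$ then produces cost functions $\{\hat{\cost}_k\}$ and thresholds $\{\hat{\threshold}_k\}$ whose feasible set in feature space is exactly $S$. Both $\policy$ and every $\policy_i^*$ are feasible in the resulting CMDP by construction, and optimality of each $\policy_i^*$ for $\reward_i$ follows immediately from the defining property of $\theta_i$.

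The main obstacle is the geometric claim in the first step: the definition of $\unsafeset_i$ is phrased via strictly positive conic combinations emanating from $\FeatureFunction(\policy_i^*)$, while membership in the relative interior of $S$ is phrased via strictly positive convex combinations of its generators, and the rearrangement above is what bridges these two formulations. \Cref{ass:strictly-optimal} plays the subsidiary role of ruling out the degenerate case where all expert feature expectations coincide, so that the supporting hyperplane can be chosen nontrivially and the constructed rewards are not identically zero.
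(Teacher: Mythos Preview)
Your proposal is correct and follows essentially the same strategy as the paper: enlarge the hull to $S = \conv\bigl(\{\FeatureFunction(\policy_j^*)\}_j \cup \{\FeatureFunction(\policy)\}\bigr)$, argue via the same algebraic rearrangement that $\policy \notin \unsafeset$ forces each $\FeatureFunction(\policy_i^*)$ to stay ``extremal'' in $S$, pick linear rewards by a hyperplane theorem, and realize $S$ as the feasible set of a CMDP via the polyhedral representation behind \Cref{thm:estimated-cmdp}.

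The only noteworthy difference is in the geometric claim and the hyperplane step. The paper first reduces w.l.o.g.\ to demonstrations that are extreme points of $\safeset$, shows each $\FeatureFunction(\policy_i^*)$ remains a \emph{vertex} of $S$, and then applies the \emph{separating} hyperplane theorem between $\{\FeatureFunction(\policy_i^*)\}$ and $\conv(\{\FeatureFunction(\policy_j^*)\}_{j\neq i} \cup \{\FeatureFunction(\policy)\})$; this yields rewards for which $\policy_i^*$ is strictly better than every other generator, so in particular $\policy$ is not optimal for any constructed reward. You instead establish the weaker statement $\FeatureFunction(\policy_i^*) \notin \operatorname{ri}(S)$ and invoke the \emph{supporting} hyperplane theorem, so your $\theta_i$ may be maximized at several points of $S$ besides $\FeatureFunction(\policy_i^*)$. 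That suffices for the theorem as stated (which only asks that $\policy_i^*$ be optimal, not uniquely so), and it has the advantage of handling redundant demonstrations directly without the paper's w.l.o.g.\ reduction; the price is that your construction does not by itself certify that $\policy$ would never be observed as a demonstration in the constructed CMDP.
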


\begin{proof}
If $\policy \in \safeset$, the statement follows from \Cref{thm:estimated-cmdp}. So, it remains to show the statement holds when $\policy \notin \safeset$ and $\policy \notin \unsafeset$.
Consider a CMDP with the true safe set
\[
\truesafeset = \{ \policy' | \FeatureFunction(\policy') \in \conv(\FeatureFunction(\policy^*_1), \dots, \FeatureFunction(\policy^*_\NumDemos), \FeatureFunction(\policy)) \} .
\]
As this is a convex polyhedron, we can write it in terms of linear equations (similar to \Cref{thm:estimated-cmdp}); so, there is a CMDP with this true safe set.

We can prove the result, by finding reward functions $\reward_1, \dots, \reward_\NumDemos$ such that the expert policies $\policy_i^* \in \argmax_{\policy\in\truesafeset} \PolicyReturn_{\reward_i}(\policy)$ are optimal, but $\policy \notin \argmax_{\policy\in\truesafeset} \PolicyReturn_{\reward_i}(\policy)$ is not optimal for any of the rewards. With this choice, we will observe the same expert policies $\policy_1^*, \dots, \policy_\NumDemos^*$; but the true safe set has an additional vertex $\policy$, which we never observe.

Now, we show that it is possible for any $\policy \notin \unsafeset$. This means, we cannot expand $\unsafeset$ while guaranteeing that all policies in it are unsafe.
The key is to show that all points $\FeatureFunction(\policy_1^*), \dots, \FeatureFunction(\policy_\NumDemos^*), \FeatureFunction(\policy)$ are vertices of $\truesafeset$, which is the case only if $\policy \notin \unsafeset$.

We can assume w.l.o.g that the $\NumDemos$ demonstrations are vertices of $\conv(\FeatureFunction(\policy_1^*), \dots, \FeatureFunction(\policy_\NumDemos^*))$. So we only need to show that no $\FeatureFunction(\policy_i^*)$ is a convex combination of $\{ \FeatureFunction(\policy_j^*) \}_{j \neq i} \cup \{ \FeatureFunction(\policy) \}$. Assume this was true, i.e., there are $\alpha_j > 0$, $\alpha > 0$ with $\alpha + \sum_{j \neq i} \alpha_j = 1$ and 
\[
\FeatureFunction(\policy_i^*) = \alpha \FeatureFunction(\policy) + \sum_{j \neq i} \alpha_j \FeatureFunction(\policy_j^*) .
\]
Then, we could rewrite
\begin{align*}
\FeatureFunction(\policy) &= \frac{1}{\alpha} \FeatureFunction(\policy_i^*) - \sum_{j\neq i} \frac{\alpha_j}{\alpha} \FeatureFunction(\policy_j^*) \\
&= \frac{1}{\alpha} \FeatureFunction(\policy_i^*) - \frac{1-\alpha}{\alpha} \FeatureFunction(\policy_i^*) + \frac{1-\alpha}{\alpha} \FeatureFunction(\policy_i^*) - \sum_{j\neq i} \frac{\alpha_j}{\alpha} \FeatureFunction(\policy_j^*) \\
&= \FeatureFunction(\policy_i^*) + \sum_{j\neq i} \frac{\alpha_j}{\alpha} (\FeatureFunction(\policy_i^*) - \FeatureFunction(\policy_j^*)) .
\end{align*}
This implies that $\policy \in \unsafeset_i$ because $\alpha_j/\alpha > 0$ for all $j$. However, this is a contradiction to $\policy \notin \unsafeset$; thus, $\FeatureFunction(\policy_1^*), \dots, \FeatureFunction(\policy_\NumDemos^*), \FeatureFunction(\policy)$ are vertices of $\truesafeset$.

Given that the optimal policies are vertices of $\truesafeset$, we can choose $\NumDemos$ linear reward functions as follows:
\begin{enumerate}
    \item For $i$, let $A_i = \conv(\FeatureFunction(\policy_1^*), \dots, \FeatureFunction(\policy_{i-1}^*), \FeatureFunction(\policy_{i+1}^*), \dots, \FeatureFunction(\policy_\NumDemos^*), \FeatureFunction(\policy))$ and $B_i = \{ \FeatureFunction(\policy_i^*) \}$.
    \item $A_i$ and $B_i$ are disjoint convex sets and by the separating hyperplane theorem, we can find a hyperplane that separates them.
    \item Choose $\theta_i$ as the orthogonal vector to the separating hyperplane, oriented from $A_i$ to $B_i$.
\end{enumerate}
With this construction $\policy_i^*$ is optimal only for $\reward_i(s, a) = \theta_i \FeatureFunction(s, a)$ and $\policy$ is optimal for none of the reward functions.
Therefore, any $\policy \notin \unsafeset$ could be an additional vertex of the true safe set, that we did not observe in the demonstrations. In other words, we cannot expand $\unsafeset$.
\end{proof}

Using this construction we can separate the feature space into policies we know are safe ($\policy\in\safeset$), policies we know are unsafe ($\policy\in\unsafeset$), and policies we are uncertain about ($\policy\notin\safeset\cup\unsafeset$). This could be useful to measure the ambiguity of a set of demonstrations. The policies we are uncertain about could be a starting point for implementing an \emph{active learning} version of \AlgNameShort.

\section{\AlgNameShort Implementation Details}\label{app:implementation_details}

This section discusses a few details of our implementation of \AlgNameShort. In particular, we highlight practical modifications related to constructing the convex hull $\safeset$: how we handle degenerate sets of demonstrations and how we greedily select which points to use to construct the convex hull. \Cref{alg:full-algorithm} shows full pseudocode for our implementation of \AlgNameShort.

\subsection{Constructing the Convex Hull}

\begin{algorithm}[t]
\caption{\AlgNameLong}\label{alg:full-algorithm}
\begin{algorithmic}[1]
\Function{\texttt{constraint\_learning}}{$\demonstrations$, $n_{\text{points}}$, $d_{\text{stop}}$}
    \State $(i, d_{\text{next}}) \gets (0, \infty)$
    \State $x_{\text{next}} \gets $ random starting point from $\demonstrations$
    \While{$i \leq n_{\text{points}}$ and $|\demonstrations| > 0$ and $d_{\text{next}} > d_{\text{stop}}$}
        \State $\bar{\demonstrations} \gets \bar{\demonstrations} \cup \{ x_{\text{next}} \}$, $\demonstrations \gets \demonstrations \setminus \{ x_{\text{next}} \}$, $i \gets i + 1$
        \State $\safeset \gets \mathtt{convex\_hull}(\bar{\demonstrations})$
        \State $(x_{\text{next}}, d_{\text{next}}) \gets \mathtt{furthest\_point}(\demonstrations, \safeset)$
    \EndWhile
    \State $\unsafeset \gets \mathtt{unsafe\_set}(\safeset)$
    \State \Return $\safeset$, $\unsafeset$
\EndFunction
\vspace{1em}
\Function{\texttt{furthest\_point}}{$\mathcal{X}, \mathcal{P}$}
    \State $d_{\text{max}} = - \infty$
    \For{$x \in \mathcal{X}$}
        \State Solve QP to find $d \in \text{argmin}_{p \in \mathcal{P}} (x - p)^2$
        \If{$d > d_{\text{max}}$}
            $d_{\text{max}} \gets d$, $x_{\text{max}} \gets x$
        \EndIf
    \EndFor
    \State \Return $(x_{\text{max}}, d_{\text{max}})$
\EndFunction
\vspace{1em}
\Function{convex\_hull}{$\mathcal{X}$}
    \If{If $| \mathcal{X} | < 3$} \Return special case solution $A, \vb$ \EndIf
    \State Determine effective dimension of $\mathcal{X}$
    \State $\mathcal{X}_{\text{proj}} \gets$ project $\mathcal{X}$ to lower dimensional space
    \State $H_{\text{proj}} \gets$ call Qhull to obtain convex hull  of $\mathcal{X}_{\text{proj}}$
    \State $A_{\text{proj}}, \vb_{\text{proj}} \gets$ call Qhull to obtain linear equations describing $H_{\text{proj}}$
    \State $A, \vb \gets$ project $A_{\text{proj}}, \vb_{\text{proj}}$ back to $\reals^d$
    \State \Return $A, \vb$
\EndFunction
\end{algorithmic}
\end{algorithm}

We use the Quickhull algorithm~\citep{barber1996quickhull} to construct convex hulls. In particular, we use a standard implementation, called \emph{Qhull}, which can construct the convex hull from a set of points and determine the linear equations. However, it assumes at least $3$ input points, and non-degenerate inputs, i.e., it assumes that $\rank(\FeatureFunction(\policy_1^*), \dots, \FeatureFunction(\policy_\NumDemos^*)) = d$. To avoid these limiting cases, we make two practical modifications.

\paragraph{Special-case solutions for $1$ and $2$ inputs.} For less than $3$ input points it is easy to construct the linear equations describing the convex hull manually. If we only see a single demonstration $\policy_1^*$, the convex hull is simply, $\conv( \FeatureFunction(\policy_1^*) ) = \{ \FeatureFunction(\policy_1^*) \} = \{ x | A x \leq \vb \}$ with
\[
A = [I_d, -I_d]^T
~~~~~~
\vb = [\FeatureFunction(\policy_1^*)^T, \FeatureFunction(\policy_1^*)^T]^T .
\]
If we observe $2$ demonstrations $\policy_1^*, \policy_2^*$, the convex hull is given by
\[
\conv( \FeatureFunction(\policy_1^*), \FeatureFunction(\policy_2^*) ) = \{ \FeatureFunction(\policy_1^*) + \lambda (\FeatureFunction(\policy_2^*) - \FeatureFunction(\policy_1^*)) | 0 \leq \lambda \leq 1 \} = \{ x | A x \leq \vb \},
\]
where
\[
A = [W, -W, \vv^T, -\vv^T]^T
~~~~~~
\vb = [W \FeatureFunction(\policy_1^*), - W \FeatureFunction(\policy_1^*), \vv^T \FeatureFunction(\policy_2^*), - \vv^T \FeatureFunction(\policy_1^*)]^T .
\]
Here, $\vv = \FeatureFunction(\policy_2^*) - \FeatureFunction(\policy_1^*)$, and $W = [\vw_1, \dots, \vw_{d-1}]$ spans the space orthogonal to $\vv$, i.e., $\vw_i^T \vv = 0$.

\newcommand{\EffectiveDim}{\tilde{d}}

\paragraph{Handling degenerate safe sets.}
If $\rank(\FeatureFunction(\policy_1^*), \dots, \FeatureFunction(\policy_\NumDemos^*)) < d$, we first project the demonstrations to a lower-dimensional subspace in which they are full rank, then we call Qhull to construct the convex hull in this space. Finally, we project back the linear equations describing the convex hull to $\reals^d$. To determine the correct projection, let us define the matrix $D = [\FeatureFunction(\policy_1^*)^T, \dots, \FeatureFunction(\policy_\NumDemos^*)^T]^T \in \reals^{\NumDemos \times d}$. Now, we can do the singular value decomposition (SVD):
$
D = U^T \Sigma V
$,
where $U = [\vu_1, \dots, \vu_\NumDemos] \in \reals^{\NumDemos \times \NumDemos}$, $V = [\vv_1, \dots, \vv_d] \in \reals^{d \times d}$, and $\Sigma = \diag(\sigma_1, \dots, \sigma_d) \in \reals^{\NumDemos \times d}$. Assuming the singular values are ordered in decreasing order by magnitude., we can determine the effective dimension $\EffectiveDim$ of the demonstrations such that $\sigma_{\EffectiveDim} > 0$ and $\sigma_{\EffectiveDim+1} = 0$ (for numerical stability, we use a small positive number instead of $0$). Now, we can define projection matrices
\begin{align*}
X_{\text{ef}} &= [\vu_1, \dots, \vu_{\EffectiveDim}]^T \in \reals^{\EffectiveDim \times d} , \\
X_{\text{orth}} &= [\vu_{\EffectiveDim+1}, \dots, \vu_d]^T \in \reals^{(d-\EffectiveDim) \times d} ,
\end{align*}
where $X_{\text{ef}}$ projects to the span of the demonstrations, and $X_{\text{orth}}$ projects to the complement. Using $X_{\text{ef}}$, we can project the demonstrations to their span and construct the convex hull in that projection, $\conv(X_{\text{ef}} D) = \{ x | A_{\text{proj}} x \leq \vb_{\text{proj}} \}$. To project the convex hull back to $\reals^d$, we construct linear equations in $\reals^d$, such that $\conv(D) = \{ x | A x \leq \vb \}$:
\begin{align*}
A = [ (X_{\text{ef}}^T A_{\text{proj}})^T, X_{\text{orth}}, - X_{\text{orth}} ]^T
~~~~~~
\vb = [\vb_{\text{proj}},  X_{\text{orth}} \FeatureFunction(\policy_1^*), - X_{\text{orth}} \FeatureFunction(\policy_1^*)] ,
\end{align*}
where $X_{\text{ef}}^T A_{\text{proj}}$ projects the linear equations back to $\reals^d$, and the other two components ensure that $X_{\text{orth}} x =  X_{\text{orth}} \FeatureFunction(\policy_1^*)$, i.e., restricts the orthogonal components to the convex hull. In practice, we change this constraint to $- X_{\text{orth}} \FeatureFunction(\policy_1^*) - \epsilon \leq X_{\text{orth}} x \leq X_{\text{orth}} \FeatureFunction(\policy_1^*) + \epsilon$ for some small $\epsilon > 0$.

\subsection{Iteratively Adding Points}

Constructing the safe set from all demonstrations can sometimes be problematic, especially if it results in too many inferred constraints. This scenario is more likely to occur when demonstrations are clustered closely together in feature space, which occurs, e.g., if they are not exactly optimal. To mitigate such problems, we adopt an iterative approach for adding points to the safe set. We start with a random point from the set of demonstrations, and subsequently, we iteratively add the point farthest away from the existing safe set. We can determine the distance of a given demonstration $\policy$ by solving the quadratic program
\[
d \in \argmin_{x \in \safeset} (x - \FeatureFunction(\policy))^2 .
\]
We solve this problem for each remaining demonstration from the safe set to determine which point to add next.
We stop adding points once the distance of the next point we would add gets too small. By changing the stopping distance as a hyperparameter, we can trade-off between adding all points important for expanding the safe set and regularizing the safe set.

\section{Details on the Baselines}\label{app:irl-baselines}

In this section, we discuss the adapted IRL and IL baselines we present in the main paper in more detail.

\subsection{IRL Baselines}

In this section, we discuss how we adapt maximum margin IRL and maximum entropy IRL to implement \emph{Average IRL}, \emph{Shared Reward IRL}, and \emph{Known Reward IRL}.

Recall that these three variants model the expert reward function as one of:
\begin{align*}
&\hat{r}_i(s, a)  &\text{(Average IRL)} \\
&\hat{r}_i(s, a) = r^{\text{known}}_i(s, a) + \hat{\cost}(s, a)  &\text{(Known Reward IRL)} \\
&\hat{r}_i(s, a) = \hat{r}_i(s, a) + \hat{\cost}(s, a)  &\text{(Shared Reward IRL)}
\end{align*}

\subsubsection{Maximum-Margin IRL}\label{app:maximum_margin_irl}

Let us first recall the standard maximum-margin IRL problem~\citep{ng2000algorithms}. We want to infer a reward function $\hat{\reward}$ from an expert policy $\policy_E$, such that the expert policy is optimal under the inferred reward function. To be optimal is equivalent to $\hat{V}^{\policy_E}(\state) \geq \hat{Q}^{\policy_E}(\state, \action)$ for all states $\state$ and actions $\action$. Here $\hat{V}^{\policy_E}$ is the value function of the expert policy computed under reward $\hat{\reward}$ and $\hat{Q}^{\policy_E}(\state, \action)$ is the corresponding Q-function.

However, this condition is underspecified. In maximum margin IRL, we resolve the ambiguity between possible solutions by looking for the inferred reward function $\hat{\reward}$ that maximizes $\sum_{\state,\action} ( \hat{V}^{\policy_E}(\state) - \hat{Q}^{\policy_E}(\state,\action) )$, i.e., the \emph{margin} by which $\policy_E$ is optimal.

The goal of maximum margin IRL is to solve the optimization problem

\begin{equation}
\begin{aligned}
& \underset{\hat{\reward}, \zeta}{\text{maximize}}
& & \sum_i \zeta_i \\
& \text{subject to}
& & \hat{V}^{\policy_E}(\state) - \hat{Q}^{\policy_E}(\state, \action) \geq \zeta, ~~\forall (\state, \action) \in \StateSpace \times \ActionSpace \\
& & & -1 \leq \hat{\reward}(\state, \action) \leq 1
\end{aligned}
\end{equation}

In a tabular environment, we can write this as a linear program because both $\hat{V}^{\policy_E}$ and $\hat{Q}^{\policy_E}$ are linear in the state-action occupancy vectors.

For simplicity, let us consider only state-dependent reward functions $\reward(\state, \action, \state') = \reward(\state')$. Also, let us introduce the vector notation

\begin{align*}
\vr &\defeq (\reward(\state_1), \reward(\state_2), \dots)^T \in \reals^{\StateSpaceSize} \\
\vV^\policy &\defeq (V(\state_1), V(\state_2), \dots)^T \in \reals^{\StateSpaceSize} \\
\vQ^\policy_\action &\defeq (Q(\state_1, \action), Q(\state_2, \action), \dots)^T \in \reals^{\StateSpaceSize} \\\\
P^\policy &\defeq
\begin{pmatrix}
\sum_\action P(\state_1 | \state_1, \action) \policy(\action | \state_1) & \sum_\action P(\state_1 | \state_2, \action) \policy(\action | \state_2) & \dots \\
\sum_\action P(\state_2 | \state_1, \action) \policy(\action | \state_1) & \ddots & \dots \\
\vdots & \dots & \dots
\end{pmatrix} \in \reals^{\StateSpaceSize \times \StateSpaceSize} \\\\
P^\action &\defeq P^{\policy_\action}\text{ 
 with  }\policy_\action(\action | \state) = 1
\end{align*}

Now, we can write Bellman-like identities in this vector notation:
\begin{align*}
\vV^\policy &= P^\policy (\vr + \DiscountFactor \vV^\policy) \\
\vV^\policy &= (I - \DiscountFactor  P^\policy)^{-1} P^\policy \vr \\
\vQ^\policy_\action &= P^\action (\vr + \DiscountFactor \vV^\policy)
\end{align*}

We can also write the maximum margin linear program using the matrix notation:
\begin{equation}
\begin{aligned}
& \underset{\hat{\vr}, \zeta}{\text{maximize}}
& & \sum_i \zeta_i \\
& \text{subject to}
& & \zeta - (P^\policy - P^a) D^\policy \hat{\vr} \leq 0 \\
& & & || \vr ||_\infty \leq 1
\end{aligned}
\tag{IRL-LP}
\label{eq:IRL-LP}
\end{equation}

where $D^\policy = (I + \DiscountFactor (I - \DiscountFactor  P^\policy)^{-1} P^\policy)$.

\paragraph{Average IRL.} The standard maximum margin IRL problem infers a reward function of a single expert policy. However, it cannot learn from more than one policy. To apply it to our setting, we infer a rubeward function $\hat{\reward}_i$ for each of the demonstrations $\policy_i^*$. Then for any new reward function $\evalreward$, we add the average of our inferred rewards, i.e., we optimize for $\evalreward(\state) + \frac{1}{\NumDemos} \sum_i \hat{\reward}_i(\state)$. So, implicitly, we assume the reward component of the inferred rewards is zero in expectation, and we can use the average to extract the constraint component of the inferred rewards.

\paragraph{Shared Reward IRL.} We can also extend the maximum margin IRL problem to learn from multiple expert policies.
For multiple policies, we can extend the maximum margin IRL problem to infer multiple reward function simultaneously:
\begin{equation*}
\begin{aligned}
& \underset{\hat{\reward}_1, \dots \hat{\reward}_n, \zeta}{\text{maximize}}
& & \sum_i \zeta_i \\
& \text{subject to}
& & \hat{V}_1^{\policy_1}(\state) - \hat{Q}_1^{\policy_1}(\state, \action) \geq \zeta_1, ~~\forall (\state, \action) \in \StateSpace \times \ActionSpace \\
& & & \hat{V}_2^{\policy_2}(\state) - \hat{Q}_2^{\policy_2}(\state, \action) \geq \zeta_2, ~~\forall (\state, \action) \in \StateSpace \times \ActionSpace \\
& & & \hspace{8em} \vdots \\
& & & \hat{V}_\NumDemos^{\policy_\NumDemos}(\state) - \hat{Q}_\NumDemos^{\policy_\NumDemos}(\state, \action) \geq \zeta_\NumDemos, ~~\forall (\state, \action) \in \StateSpace \times \ActionSpace \\
\end{aligned}
\end{equation*}
where we infer $\NumDemos$ separate reward functions $\hat{\reward}_i$ and $\hat{V}_i$ and $\hat{Q}_i$ are computed with reward function $\hat{\reward}_i$.
For \emph{Shared Reward IRL}, we model the reward functions as $\hat{\reward}_i(\state) = \hat{\reward}^+_i(\state) + \hat{\cost}(\state)$, where $\hat{\reward}^+_i$ is different for each expert policy, and $\hat{\cost}$ is a shared constraint penalty. This gives us the following linear program to solve:
\begin{equation}
\begin{aligned}
& \underset{\hat{\vr}_1, \dots, \hat{\vr}_\NumDemos, \hat{\vc}, \zeta}{\text{maximize}}
& & \sum_i \zeta_i \\
& \text{subject to}
& & \zeta_i - (P^{\policy_i} - P^a) D^{\policy_i} \hat{\vr}_i - (P^{\policy_i} - \vP^a) D^{\policy_i} \hat{\vc} \leq 0 \text{ for each policy } i \\
& & & || \hat{\vr}_i ||_\infty \leq 1 \text{ for each } i \\
& & & || \hat{\vc} ||_\infty \leq 1
\end{aligned}
\tag{IRL-LP-SR}
\label{eq:IRL-LP-SR}
\end{equation}

\paragraph{Known Reward IRL.} Alternatively, we can assume we know the reward component, and only infer the constraint. In that case $\vr_i$ is the known reawrd vector and no longer a decision variable, and we obtain the following LP:
\begin{equation}
\begin{aligned}
& \underset{\hat{\vc}, \zeta}{\text{maximize}}
& & \sum_i \zeta_i \\
& \text{subject to}
& & \zeta_i - (P^{\policy_i} - P^a) D^{\policy_i} \vc \leq (P^{\policy_i} - P^a) D^{\policy_i}\vr_i \text{ for each policy } i \\
& & & || \hat{\vc} ||_\infty \leq 1
\end{aligned}
\tag{IRL-LP-KR}
\label{eq:IRL-LP-KR}
\end{equation}

\subsubsection{Maximum-Entropy IRL}

We adapt Maximum Entropy IRL~\citep{ziebart2008maximum} to our setting by parameterizing the reward function as $\hat{\reward}(\state,\action) = \hat{\theta}_i^T \FeatureFunction(\state, \action) + \hat{\phi}^T \FeatureFunction(\state, \action)$. Here $\hat{\theta}_i$ parameterizes the ``reward'' component for each expert policy, and $\hat{\phi}$ parameterizes the reward function's shared ``constraint'' component.

\Cref{alg:modified-max-ent-irl} shows how we modify the standard maximum entropy IRL algorithm with linear parameterization to learn $\hat{\theta}_i$ and $\hat{\phi}$ simultaneously. Instead of doing gradient updates on a single parameter, we do gradient updates for both $\hat{\theta}_i$ and $\hat{\phi}$ with two different learning rates $\alpha_\theta$ and $\alpha_\phi$. Also, we iterate over all demonstrations $\policy_1^*, \dots, \policy_\NumDemos^*$. This effectively results in updating $\hat{\phi}$ with the average gradient from all demonstrations, while $\hat{\theta}_i$ is updated only from the gradients for demonstration $\policy_i^*$.

Depending on initialization and learning rate, \Cref{alg:modified-max-ent-irl} implements all of our IRL-based baselines:
\begin{itemize}
    \item \makebox[3.2cm]{Average IRL:\hfill} $\alpha_\theta > 0$, $\alpha_\phi = 0$, $\hat{\theta}_0 = 0$, $\hat{\phi}_0 = 0$
    \item \makebox[3.2cm]{Shared Reward IRL:\hfill} $\alpha_\theta > 0$, $\alpha_\phi > 0$, $\hat{\theta}_0 = 0$, $\hat{\phi}_0 = 0$
    \item \makebox[3.2cm]{Known Reward IRL:\hfill} $\alpha_\theta = 0$, $\alpha_\phi > 0$, $\hat{\theta}_i = \theta_i^{\text{known}}$, $\hat{\phi}_0 = 0$
\end{itemize}

For Known Reward IRL and Shared Reward IRL, we can apply the learned $\hat{\phi}$ to a new reward function by computing $\evalreward(\state, \action) + \hat{\phi}^T \FeatureFunction(\state, \action)$. For Average IRL, we use $\evalreward(\state, \action) + \frac{1}{\NumDemos} \sum_i \hat{\theta}_i^T \FeatureFunction(\state, \action)$.

\begin{algorithm}[t]
\caption{Maximum Entropy IRL, adapted to constraint learning.}
\begin{algorithmic}[1]
\Require Expert policies $\policy_1^*, \dots, \policy_\NumDemos^*$, learning rates $\alpha_\theta$, $\alpha_\phi$, initial parameters $\hat{\theta}_0, \hat{\phi}_0 \in \mathbb{R}^d$
\State Initialize:
    \hspace{0.2em}
    $\hat{\theta}_i \gets \hat{\theta}_0$ for $i = 1, \dots, \NumDemos$,
    \hspace{0.2em}
    $\hat{\phi} \gets \hat{\phi}_0$,
    \hspace{0.2em}
    $i \gets 0$
\While{not converged}
  \State $i \gets \mathrm{mod}(i + 1, \NumDemos + 1)$
  \State Update policy $\hat{\policy}$ for
  \hspace{0.2em}
  $
  \hat{\reward}(\state, \action) = \hat{\theta}_i^T \FeatureFunction(\state, \action) + \hat{\phi}^T \FeatureFunction(\state, \action)
  $
  \State Compute the gradient:
    \hspace{0.2em}
    $\nabla = \FeatureFunction(\policy_i^*) - \FeatureFunction(\hat{\policy})$
  \State Update weights:
    \hspace{0.2em}
    $\hat{\theta}_i \gets \hat{\theta}_i + \alpha_\theta \nabla$, 
    \hspace{0.2em}
    $\hat{\phi} \gets \hat{\phi} + \alpha_\phi \nabla$
\EndWhile
\State \Return $\hat{\theta}_1, \dots, \hat{\theta}_\NumDemos$, $\hat{\phi}$
\end{algorithmic}
\label{alg:modified-max-ent-irl}
\end{algorithm}

\subsection{Imitation Learning Baseline}

\Cref{alg:imitation-learning-baseline} shows the general imitation learning baseline discussed in the main paper. We can implement it using any imitation learning algorithm. The approach simply runs imitation learning on each demonstration and returns the best policy for a new evaluation reward.

In our experiments, we use synthetic demonstrations. Hence, for each demonstration, we know exactly which policy produced it, which allows us to implement the idealized IL baseline presented in the main paper.

This idealized version of \Cref{alg:imitation-learning-baseline} is guaranteed to return safe policies because all demonstrations are safe. Also, if we assume linear reward functions and have no task- or environment-transfer, this algorithm returns the same solutions as \AlgNameShort. For this restricted setting, we could also obtain convergence guarantees for this algorithm.

However, for task and environment transfer this is no longer true. Our experiments show that \AlgNameShort can, e.g., learn to drive straight from a set of demonstrations that always turn left or right. The IL baseline clearly fails this task, because it only imitates turning left or right. Further, our experiments show that \AlgNameShort can learn to generalize from defensive drivers to aggressive drivers. Again, the IL baseline fails because it directly learns policies that do not generalize.

\begin{algorithm}[t]
\caption{Imitation learning baseline.}
\begin{algorithmic}[1]
\Require Expert policies $\policy_1^*, \dots, \policy_\NumDemos^*$, evaluation reward $\evalreward$
\State $\hat{\demonstrations} \gets \{\}$
\For{$i = 1, \dots, \NumDemos$}
    \State $\hat{\policy}_i^* \gets \mathtt{imitation\_learning}(\policy_i^*)$
    \State $\hat{\demonstrations} \gets \hat{\demonstrations} \cup \{ \hat{\policy}_i^* \}$
\EndFor
\Return $\argmax_{\hat{\policy}_i^* \in \hat{\demonstrations}} \PolicyReturn_{\text{eval}}(\hat{\policy}_i^*)$
\end{algorithmic}
\label{alg:imitation-learning-baseline}
\end{algorithm}

\section{Experiment Details}\label{app:experiment_details}

In this section, we provide more details about our experimental setup in the Gridworld environments (\Cref{app:gridworld-details}) and the driving environment (\Cref{app:highway-details}). In particular, we provide details on the environments, how we solve them, and the computational resources used.

\subsection{Gridworld Experiments}\label{app:gridworld-details}

\paragraph{Environment Details.}
An $N\times N$ Gridworld has $N^2$ discrete states and a discrete action space with $5$ actions: \emph{left}, \emph{right}, \emph{up}, \emph{down}, and \emph{stay}.  Each action corresponds to a movement of the agent in the grid. Given an action, the agent moves in the intended direction except for two cases: (1) if the agent would leave the grid, it instead stays in its current cell, and (2) with probability $p$, the agent takes a random action instead of the intended one.

In the Gridworld environments, we use the state-action occupancies as features and define rewards and costs independently per state. We uniformly sample $n_{\text{goal}}$ and $n_{\text{limited}}$ goal tiles and limited tiles, respectively. Each goal cell has an average reward of $1$; other tiles have an average reward of $0$. Constraints are associated with limited tiles, which the agent must avoid. We uniformly sample the threshold for each constraint. We ensure feasibility using rejection sampling, i.e., we resample the thresholds if there is no feasible policy.
In our experiments, we use $N=10$, $n_{\text{goal}} = 20$, $n_{\text{limited}} = 10$, and we have $\NumConst = 4$ different constraints. We choose discount factor $\DiscountFactor = 0.9$.

While the constraints are fixed and shared for one Gridworld instance, we sample different reward functions from a Gaussian with standard deviation $0.1$ and mean $1$ for goal tiles and $0$ for non-goal tiles. To test reward transfer, we sample a different set of goal tiles from the grid for evaluation.

For the experiment without constraint transfer, we choose $p=0.2$, i.e., a deterministic environment. To test transfer to a new environment, we use $p=0.2$ during training and $p=0$ during evaluation.

\paragraph{Linear Programming Solver.} We can solve tabular environments using linear programming~\citep{altman1999constrained}. The idea is to optimize over the occupancy measure $\Occupancy_\policy(\state, \action)$. Then, the policy return is a linear objective, and CMDP constraints become additional linear constraints of the LP.

The other constraint we need is a Bellman equation on the state-action occupancy measure:
\begin{align*}
\Occupancy_\policy(\state, \action)
&= \Expectation_{\TransitionModel,\policy} \left[ \sum_{t=0}^\infty \DiscountFactor^t \indicator{\state_t=\state, \action_t=\action} \right]
= \sum_{t=0}^\infty \DiscountFactor^t \TransitionModel(\state_t = \state, \action_t = \action | \policy) \\
&= \initdist(\state, \action) + \DiscountFactor \sum_{t=0}^\infty \DiscountFactor^t \TransitionModel(\state_{t+1} = \state, \action_{t+1} = \action | \policy) \\
&= \initdist(\state, \action) + \DiscountFactor \sum_{t=0}^\infty \DiscountFactor^t \sum_{\state', \action'} \TransitionModel(\state | \state', \action') \policy(\action | \state) \TransitionModel(\state_t = \state', \action_t = \action' | \policy) \\
&= \initdist(\state, \action) + \DiscountFactor \sum_{\state', \action'} \TransitionModel(\state | \state', \action') \policy(\action | \state) \Occupancy_\policy(\state', \action')
\end{align*}

Using this, we can formulate solving a CMDP as solving the following LP:
\begin{equation}
\begin{aligned}
    & \underset{\Occupancy_\policy(\state, \action)}{\text{maximize}}
    & & \sum_{\state, \action} \Occupancy_\policy(\state, \action) \reward(\state, \action) \\
    & \text{subject to}
    & & \sum_\action \Occupancy_\policy(\state, \action) = \initdist(\state) + \DiscountFactor \sum_{\state', \action'} \TransitionModel(\state | \state', \action') \Occupancy_\policy(\state', \action'), \\
    & & & \sum_{\state, \action} \Occupancy_\policy(\state, \action) \cost_j(\state, \action) \leq \threshold_j, \text{ for each constraint } j \\
    & & & \Occupancy_\policy(\state, \action) \geq 0,
\end{aligned}
\tag{CMDP-LP}
\label{eq:CMDP-LP}
\end{equation}
By solving this LP, we obtain the state-occupancy measure of an optimal policy $\Occupancy_{\policy^*}$ which we can use to compute the policy as $\policy^*(\action | \state) = \Occupancy_{\policy^*}(\state, \action) / ( \sum_{\action'} \Occupancy_{\policy^*}(\state, \action') )$.

\paragraph{Computational Resources.}
We run each experiment on a single core of an Intel(R) Xeon(R) CPU E5-2699 v3 processor. The experiments generally finish in less than $1$ minute, often finishing in a few seconds. For each of the $3$ transfer settings, we run $100$ random seeds for $26$ different numbers of demonstrations $\NumDemos$, i.e., for each method, we run $7800$ experiments.

\subsection{Driving Experiments}\label{app:highway-details}

\begin{wrapfigure}{R}{0.3\textwidth}\centering
    \includegraphics[width=\linewidth]{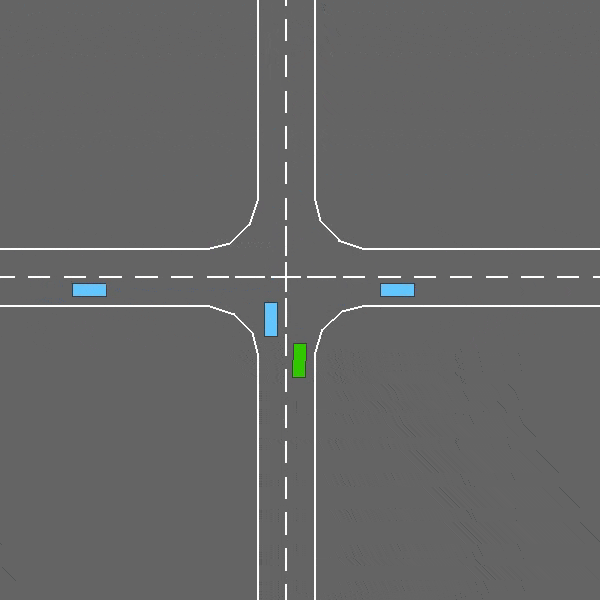}
    \caption{\texttt{highway-env} intersection environment.}
    \label{fig:intersection}
    \vspace{-3em}
\end{wrapfigure}

\paragraph{Environment Details.}
We use the intersection environment provided by \texttt{highway-env}~\citep{highway-env} shown in \Cref{fig:intersection}. The agent controls the green car, while the environment controls the blue cars.
The action space has high-level actions for speeding up and slowing down and three actions for choosing one of three trajectories: turning left, turning right, and going straight. The observation space for the policy contains the position and velocity of the agent and other cars.

The default environment has a well-tuned reward function with positive terms for reaching a goal and high velocity and negative terms to avoid crashes and staying on the road. We change the environment to have a reward function that rewards reaching the goal and includes driver preferences on velocity and heading angle and multiple cost functions that limit bad events, such as driving too fast or off the road.
To define the reward and cost functions, we define a set of features:
\[
\FeatureFunction(\state, \action) =
\begin{pmatrix}
&{ \color{ForestGreen} f_{\text{left}} }(\state, \action) \\
&{ \color{ForestGreen} f_{\text{straight}} }(\state, \action) \\
&{ \color{ForestGreen} f_{\text{right}} }(\state, \action) \\
&{ \color{ForestGreen} f_{\text{vel}} }(\state, \action) \\
&{ \color{ForestGreen} f_{\text{heading}} }(\state, \action) \\
&{ \color{BrickRed} f_{\text{toofast}} }(\state, \action) \\
&{ \color{BrickRed} f_{\text{tooclose}} }(\state, \action) \\
&{ \color{BrickRed} f_{\text{collision}} }(\state, \action) \\
&{ \color{BrickRed} f_{\text{offroad}} }(\state, \action)
\end{pmatrix}
\in \reals^9
\]
where
$
{ \color{ForestGreen} f_{\text{left}} },
{ \color{ForestGreen} f_{\text{straight}} },
{ \color{ForestGreen} f_{\text{right}} }
$
are $1$ if the agent reached the goal after turning left, straight, or right respectively, and $0$ else. ${ \color{ForestGreen} f_{\text{vel}} }$ is the agents velocity and ${ \color{ForestGreen} f_{\text{heading}} }$ is its heading angle.

The last four features are indicators of undesired events. ${ \color{BrickRed} f_{\text{toofast}} }$ is $1$ if the agent exceeds the speed limit, ${ \color{BrickRed} f_{\text{tooclose}} }$ is $1$ if the agent is too close to another car, ${ \color{BrickRed} f_{\text{collision}} }$ is $1$ if the agent has collided with another car, and ${ \color{BrickRed} f_{\text{offroad}} }$ is $1$ if the agent is not on the road.

The ground truth constraint in all of our experiments is defined by:
\begin{align*}
\phi_1 &= (0, 0, 0, 0, 0, 1, 0, 0, 0)^T \\
\phi_2 &= (0, 0, 0, 0, 0, 0, 1, 0, 0)^T \\
\phi_3 &= (0, 0, 0, 0, 0, 0, 0, 1, 0)^T \\
\phi_4 &= (0, 0, 0, 0, 0, 0, 0, 0, 1)^T
\end{align*}
and thresholds
\[
\threshold_1 = 0.2, ~~
\threshold_2 = 0.2, ~~
\threshold_3 = 0.05, ~~
\threshold_4 = 0.1,
\]
i.e., we have one constraint for each of the last four features, and each constraint restricts how often this feature can occur. For example, collisions should occur in fewer than $5\%$ of steps and speed limit violations in fewer than $20\%$ of steps.

The driver preferences are a function of the first five features. Each driver wants to reach one of the three goals (sampled uniformly) and has a preference about velocity and heading angle sampled from
$\theta_{\text{vel}} \sim \Gaussian(0.1, 0.1)$, and $\theta_{\text{heading}} \sim \Gaussian(-0.2, 0.1)$. These preferences simulate variety between the demonstrations from different drivers.

When inferring constraints, for simplicity, we restrict the feature space to
$(
{ \color{BrickRed} f_{\text{toofast}} },
{ \color{BrickRed} f_{\text{tooclose}} },
{ \color{BrickRed} f_{\text{collision}} },
{ \color{BrickRed} f_{\text{offroad}} }
)$.
Our approach also works for all features but needs more (and more diverse) samples to learn that the other features are safe.

To test constraint transfer to a new reward function, we sample goals in demonstrations to be either ${ \color{ForestGreen} f_{\text{left}} }$ or ${ \color{ForestGreen} f_{\text{right}} }$, while during evaluation the goal is always ${ \color{ForestGreen} f_{\text{straight}} }$.

\looseness -1
To test constraint transfer to a modified environment, we collect demonstrations with more defensive drivers (that keep larger distances to other vehicles) and evaluate the learned constraints with more aggressive drivers (that keep smaller distances to other vehicles).

\paragraph{Driving Controllers.}
We use a family of parameterized driving controllers for two purposes: (1) to optimize over driving policies, and (2) to control other vehicles in the environment.
The controllers greedily decide which trajectory to choose, i.e., choose the goal with the highest reward, and they control acceleration via a linear equation with parameter vector $\omega \in \reals^5$. As the vehicles generally follow a fixed trajectory, these controllers behave similarly to an \emph{intelligent driver model} (IDM).
Specifically, we use linearized controllers proposed by in \citet{leurent2019approximate}; see their Appendix B for details about the parameterization.

\paragraph{Constrained Cross-Entropy Method Solver.}
We solve the driving environment by optimizing over the parametric controllers, using a constrained cross-entropy method~\citep[CEM;][]{rubinstein2004cross,wen2018constrained}. The constrained CEM ranks candidate solutions first by constraint violations and then ranks feasible solutions by reward. It aims first to find a feasible solution and then improve upon it within the feasible set. We extend the algorithm by \citet{wen2018constrained} to handle multiple constraints by ranking first by the number of violated constraints, then the magnitude of the violation, and only then by reward. \Cref{alg:cross_entropy_driver} shows the pseudocode for this algorithm.

\paragraph{Computational Resources.}
The computational cost of our experiments is dominated by optimizing policies using the CEM. The IRL baselines are significantly more expensive because they need to do so in the inner loop. To combat this, we parallelize evaluating candidate policies in the CEM, performing $50$ roll-outs in parallel. We run experiments on AMD EPYC 64-Core processors. Any single experiment using \AlgNameShort finishes in approximately $5$ hours, while any experiment using the IRL baselines finishes in approximately $20$ hours. For each of the $3$ constraint transfer setups, we run $5$ random seeds for $30$ different numbers of demonstrations $\NumDemos$, i.e., we run $450$ experiments in total for each method we test.

\begin{algorithm}[t]
\caption{Cross-entropy method for (constrained) RL, based on \citet{wen2018constrained}.}
\label{alg:cross_entropy_driver}
\begin{algorithmic}[1]
  \Require $n_\text{iter}$, $n_\text{samp}$, $n_\text{elite}$
  \State Initialize policy parameters $\mu\in\reals^d$, $\sigma\in\reals^d.$
  \For{iteration = $1, 2, \dots, n_\text{iter}$}
      \State Sample $n_\text{samp}$ samples of $\omega_i \sim \Gaussian(\mu, \diag(\sigma))$
      \State Evaluate policies $\omega_1, \dots, \omega_{n_\text{samp}}$ in the environment
      \If{constrained problem}
          \State Compute number of constraints violated
          $ N_{\text{viol}}(\omega_i) = \sum_j \indicator{\CumulativeCost_j(\omega_i) > 0} $
          \State Compute total constraint violation
          $T_{\text{viol}}(\omega_i) = \sum_j \max(\CumulativeCost_j(\omega_i), 0)) $
          \State Sort $\omega_i$ in descending order first by $N_{\text{viol}}$, then by $T_{\text{viol}}$
          \State Let $E$ be the first $n_\text{elite}$ policies
          \If{$N_{\text{viol}}(\omega_{n_\text{elite}}) = 0$}
              \State Sort $\{\omega_i | N_{\text{viol}}(\omega_i) = 0\}$ in descending order of return $\PolicyReturn(\omega_i)$
              \State Let $E$ be the first $n_\text{elite}$ policies
          \EndIf
      \Else
          \State Sort $\omega_i$ in descending order of return $G(\omega_i)$
          \State Let $E$ be the first $n_\text{elite}$ policies
      \EndIf
      \State Fit Gaussian distribution with mean $\mu$ and diagonal covariance $\sigma$ to $E$
  \EndFor
  \State \textbf{return} $\mu$
\end{algorithmic}
\end{algorithm}

\section{Additional Results}\label{app:additional_results}

\begin{figure}
\centering
\includegraphics[width=.4\linewidth]{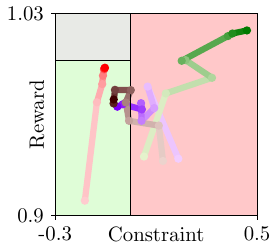}
\caption{Illustration of safety and reward of {\color{\CoCoRLcolor}\AlgNameShort}, {\color{\MaxMarginIRLVanillacolor}Average IRL}, {\color{\MaxMarginIRLSharedRewardcolor}Shared Reward IRL}, and {\color{\MaxMarginIRLKnownRewardcolor}Known Reward IRL} in the Gridworld without constraint transfer. We show the $95$th percentile of the constraint and the corresponding reward. The lines are colored from light to dark to indicate increasing number of demonstrations. The green region is safe, the red region is unsafe, and the grey region would be safe but is unattainable. {\color{\CoCoRLcolor}\AlgNameShort} quickly reaches a safe and near-optimal solution. {\color{\MaxMarginIRLKnownRewardcolor}Known Reward IRL} achieves a high reward but is unsafe. {\color{\MaxMarginIRLSharedRewardcolor}Shared Reward IRL} and {\color{\MaxMarginIRLVanillacolor}Average IRL} are first unsafe and then safe but suboptimal.}
\label{fig:gridworld_reward_constraint_plot}
\end{figure}

In this section, we present additional experimental results that provide more insights on how \AlgNameShort scales with feature dimensionality and how it compares to all variants of the IRL baselines.

\subsection{Validating \AlgNameShort in Single-state CMDPs}\label{app:synthetic-experiments}

As an additional simple test-bed, we consider a single state CMDP, where $\StateSpace = \{\state\}$ and $\ActionSpace = \reals^d$. We remove the complexity of the transition dynamics by having $\TransitionModel(\state | \state, \action) = 1$ for all actions. Also, we choose $\DiscountFactor = 0$, and the feature vectors are simply $\FeatureFunction(\state, \action) = \action$. Given a reward function parameterized by $\theta \in \reals^d$ and constraints parameterized by $\phi_1, \dots, \phi_\NumConst \in \reals^{d}$ and thresholds $\threshold_1, \dots, \threshold_\NumConst$, we can find the best action by solving the linear program $\action^* \in \argmax_{\phi_1^T \action \leq \threshold_1, \dots, \phi_\NumConst^T \action \leq \threshold_\NumConst} \theta^T \action$.

To generate demonstrations, we sample both the rewards $\theta_1, \dots, \theta_\NumDemos$ and the constraints $\phi_1, \dots, \NumConst$ from the unit sphere. We set all thresholds $\threshold_j = 1$. The constraints are shared between all demonstrations.

Our results confirm that \AlgNameShort guarantees safety. We do not see any unsafe solution when optimizing over the inferred safe set. \Cref{fig:synthetic_results} shows the reward we achieve as a function of the number of demonstrations for different dimensions $d$ and different numbers of true constraints $\NumConst$. \AlgNameShort achieves good performance with a small number of samples and eventually approaches optimality. The number of samples depends on the dimensionality, as \Cref{thm:convergence-noise-free} suggests. We find the sample complexity depends less on the number of constraints in the true environment which is consistent with \Cref{thm:convergence-noise-free}.

\subsection{Maximum Margin IRL in Gridworld Environments}\label{app:additional-gridworld-experiments}

In addition to IRL baselines based on maximum entropy IRL, we evaluated methods based on maximum margin IRL (\Cref{app:maximum_margin_irl}). \Cref{fig:additional_gridworld_results} shows results from a $3 \times 3$ gridworld. The environment uses $N=3$, $n_{\text{goal}} = 2$, $n_{\text{limited}} = 3$, and we have $\NumConst = 2$ different constraints.

Like the maximum entropy IRL methods, the maximum margin IRL methods fail to return safe solutions. \Cref{fig:gridworld_reward_constraint_plot} illustrates how the different methods trade-off safety and maximizing rewards.

\begin{figure}
\centering
\begin{subfigure}[b]{0.28\linewidth}
    \centering
    \includegraphics[width=\linewidth]{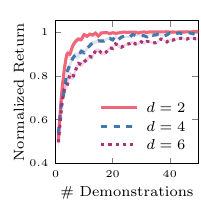}
    \caption{$\NumConst = 8$ constraints}
\end{subfigure}
\hfill
\begin{subfigure}[b]{0.28\linewidth}
    \centering
    \includegraphics[width=\linewidth]{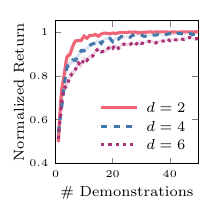}
    \caption{$\NumConst = 12$ constraints}
\end{subfigure}
\hfill
\begin{subfigure}[b]{0.28\linewidth}
    \centering
    \includegraphics[width=\linewidth]{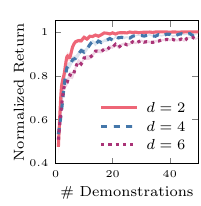}
    \caption{$\NumConst = 16$ constraints}
\end{subfigure}
\caption{
Return achieved by \AlgNameShort in single-state CMDPs for different dimensions $d$ and numbers of constraints $\NumConst$. We plot the mean and standard errors over $100$ random seeds, although the standard errors are nearly $0$ everywhere. As expected, we need more demonstrations to approximate the true safe set well in higher dimensional feature spaces. There is no noticeable difference in sample complexity for different numbers of constraints. All solutions returned by \AlgNameShort are safe.
}
\label{fig:synthetic_results}
\end{figure}

\begin{figure}
\centering
\includegraphics[width=0.95\linewidth]{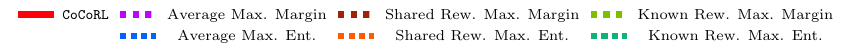} \\
\hspace{-1.5em}
\begin{subfigure}[b]{0.329\linewidth}
    \centering
    \includegraphics[width=1.03\linewidth]{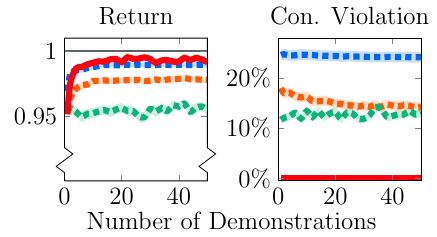} \\
    \includegraphics[width=1.03\linewidth]{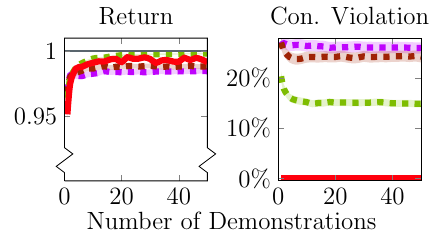}
    \caption{Single environment.}
    \label{subfig:gridworld_exp1_full}
\end{subfigure}
\begin{subfigure}[b]{0.329\linewidth}
    \centering
    \includegraphics[width=1.03\linewidth]{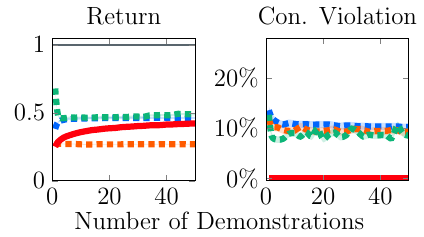} \\
    \includegraphics[width=1.03\linewidth]{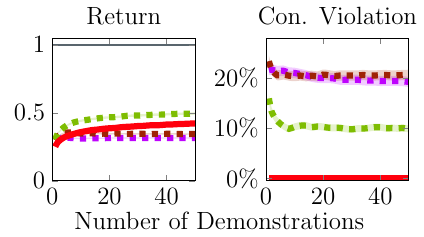}
    \caption{Transfer to new task.}
    \label{subfig:gridworld_exp2_full}
\end{subfigure}
\begin{subfigure}[b]{0.329\linewidth}
    \centering
    \includegraphics[width=1.03\linewidth]{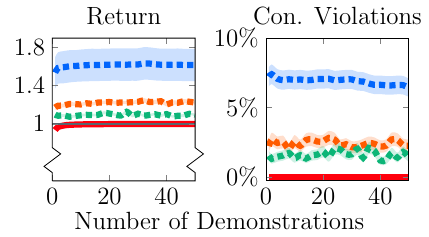} \\
    \includegraphics[width=1.03\linewidth]{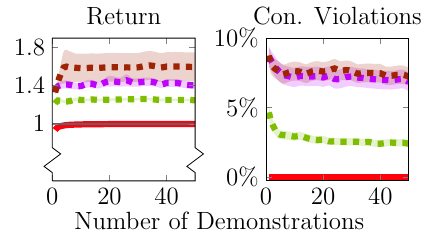}
    \caption{Transfer to new environment.}
    \label{subfig:gridworld_exp3_full}
\end{subfigure}
\caption{
Experimental results in $3 \times 3$ Gridworld environments, with maximum entropy and maximum margin IRL baselines. The top row of plots shows \AlgNameShort compared to the maximum entropy IRL baselines. The bottom row of plots shows \AlgNameShort compared to the maximum-margin IRL baselines (\Cref{app:maximum_margin_irl}). The plots show mean and standard errors over 100 random seeds. The results are qualitatively similar: the IRL baselines are not safe and can return poor solutions. The maximum margin IRL baselines are slightly better than the maximum entropy IRL baselines. in terms of return but worse in terms of safety.
}
\label{fig:additional_gridworld_results}
\vspace{2em}
\end{figure}

\end{document}